\documentclass{article}

\usepackage[preprint]{neurips_2026}

\usepackage[utf8]{inputenc} 
\usepackage[T1]{fontenc}    
\usepackage{hyperref}       
\usepackage{url}            
\usepackage{booktabs}       
\usepackage{amsfonts}       
\usepackage{nicefrac}       
\usepackage{microtype}      
\usepackage{xcolor}         

\usepackage{amsmath} 
\usepackage{amsthm}
\usepackage{amssymb} 
\usepackage{bm}
\usepackage{blindtext}
\usepackage{xcolor}
\usepackage{tikz} 
\usetikzlibrary{shapes.geometric,arrows.meta,positioning, shapes.misc}  
\usetikzlibrary{overlay-beamer-styles,positioning,decorations.pathreplacing,calligraphy} 
\usetikzlibrary{hobby,shapes.misc} 
\usetikzlibrary{positioning} 
\usetikzlibrary{bayesnet} 
\usepackage{booktabs} 
\usepackage{arydshln} 
\usepackage{cancel}
\usepackage{multirow}
\usepackage{enumerate}
\usepackage{enumitem}
\usepackage{mathtools}
\usepackage{natbib}

\newcommand{\bb}{\mathbb}
\newcommand{\mc}{\mathcal}
\newcommand{\supp}{\text{supp}} 
 
\providecommand{\argdot}{{\,\vcenter{\hbox{\tiny$\bullet$}}\,}}
\usepackage{microtype}
\usepackage{graphicx}
\usepackage{subfigure}
\usepackage{booktabs} 
\usepackage{hyperref}
\usepackage{amsmath}
\usepackage{amssymb}
\usepackage{mathtools}
\usepackage{amsthm}
\theoremstyle{plain}
\newtheorem{theorem}{Theorem}[section]

\newtheorem{lemma}[theorem]{Lemma}
\newtheorem{corollary}[theorem]{Corollary}
\theoremstyle{definition}
\newtheorem{definition}[theorem]{Definition}
\newtheorem{assumption}[theorem]{Assumption}
\theoremstyle{remark}
\newtheorem{remark}[theorem]{Remark}

\usepackage[textsize=tiny]{todonotes}
\usepackage[capitalize,noabbrev]{cleveref}
\usepackage{wrapfig}
\usepackage[ruled,vlined,linesnumbered]{algorithm2e}
\SetKwInput{KwRequire}{Require}
\SetKw{Return}{Return}
\DontPrintSemicolon
\title{Debiased Counterfactual Generation via Flow Matching from Observations}

\author{
  Hugh Dance \\
  Gatsby Computational Neuroscience Unit \\
  University College London
  \And
  Johnny Xi \\
  Department of Statistics \\
  University of British Columbia
  \And
  Peter Orbanz \\
  Gatsby Computational Neuroscience Unit \\
  University College London
  \And
  Benjamin Bloem-Reddy \\
  Department of Statistics \\
  University of British Columbia
}

\begin{document}

\maketitle

\begin{abstract}
Estimating counterfactual distributions under interventions is central to treatment risk assessment and counterfactual generation tasks. Existing approaches model the counterfactual distribution as a standalone generative target, without exploiting its relationship to the observational data. In this work, we show that under standard assumptions, observational and counterfactual outcome distributions are tightly linked: they have identical support and tail behavior, remain statistically close under weak confounding, and share any features of high-dimensional outcomes which are invariant to confounders. These properties motivate learning counterfactual distributions not from scratch, but via a \emph{deconfounding flow} from the observational distribution. We formulate this problem via flow-matching and derive a semiparametrically efficient estimator based on a novel efficient influence function correction. We subsequently extend our estimator to target minimal-energy flows in high-dimensions, which we show can be especially simple targets between observational and counterfactual distributions. In experiments, deconfounding flows outperform existing debiased counterfactual distribution estimators, while also mitigating known failure modes of flow-based methods.
\end{abstract}

\section{Introduction} \label{sec:intro}  \vspace{-5pt}
Estimating how the distribution of an outcome \(Y\) would change under a
hypothetical intervention on a binary variable \(A\) is a central problem in causal
inference.  In economic policy analysis, for example, \(A\) may
represent eligibility for a policy or  program participation, and one may ask how the distribution of incomes \(Y\) would
change if the whole population were assigned \(A=a\)
\citep{dinardo1996labor,firpo2009unconditional,
chernozhukov2013inference}. In generative modeling, analogous counterfactual questions arise when sampling
high-dimensional outputs \(Y\) with a specified semantic attribute \(A=a\), while
preventing generated samples from inheriting dataset-specific correlations with
nuisance attributes \(X\) that may harm controllability or representational
fairness \citep{xu2018fairgan,choi2020fair,bianchi2023easily}.

Such counterfactual distributions of outcomes are challenging to estimate, as one usually has access only to confounded or otherwise biased observational samples.
An emerging framework addresses this problem by using causal identification theory and semiparametric debiasing techniques to adapt generative models to train on observational data, while preserving asymptotic efficiency and consistency under general conditions \citep{kennedy2023semiparametric, melnychuk2023normalizing, martinez2024counterfactual, luedtke2025doublegen}. Such approaches have proven effective, but so far treat the counterfactual distribution as an unstructured generative target, relying on generic model classes that do not adapt to the geometry of the observed data.

In this work, we take a different modeling strategy: rather than using the observational distribution purely as a source of training data for counterfactual distributions, we use {\color{black} it} as the starting point for the model itself. Our approach is motivated by the fact that the target counterfactual
distribution, \(\bb P_{Y(a)}\), and the observational conditional distribution, $\bb P_{Y\mid A=a}$, can both be represented as mixtures of the same conditional outcome distributions---differing only in
their covariate mixing distributions. As we later show, this induces identical
support and tail behavior under standard positivity assumptions, statistical closeness under weak confounding, and shared
feature-embeddings for any features which are invariant to variation
in confounders or nuisance variables \(X\). Accordingly, rather than modeling $\bb P_{Y(a)}$ {from
scratch}, we propose to learn a \emph{``deconfounding flow''} $f_a$ that transports \(\bb P_{Y\mid A=a}\) to \(\bb P_{Y(a)}\), which we instantiate via flow
matching \citep{lipman2023flow}.

\begin{figure}
    \centering
    \includegraphics[width=0.8\linewidth]{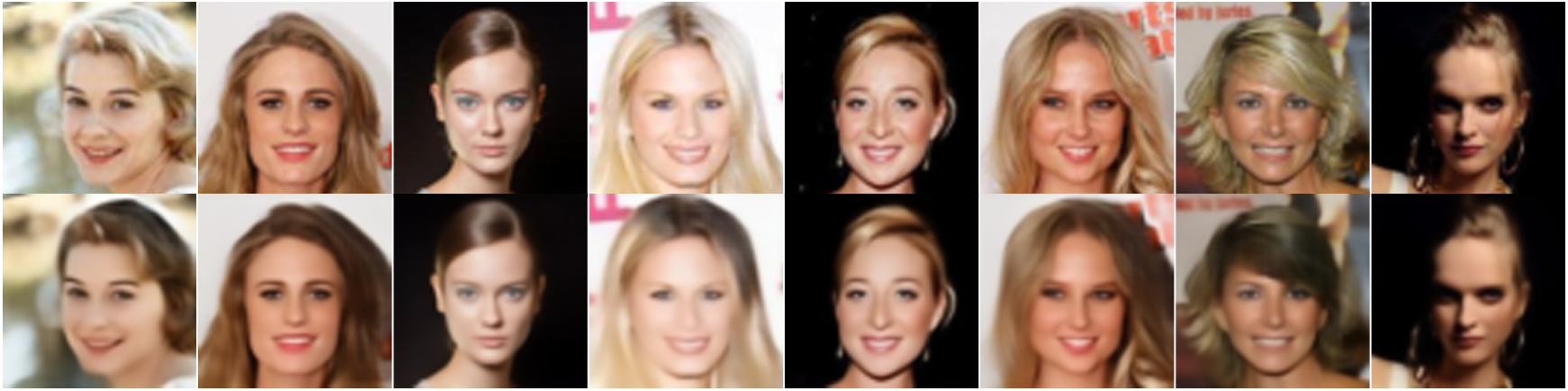}
\caption{
Top: CelebA samples from distribution
\(\bb P_{Y\mid \mathrm{Sex} = \mathrm{Female}}\), which {\color{black}over-represents} $\mathrm{HairColor}=\mathrm{Blonde}$.
Bottom: the same samples after transporting toward the counterfactual distribution \(\bb P_{Y(\mathrm{Female})}\) of images with $\mathrm{Sex}=\mathrm{Female}$ with the population distribution of $\mathrm{HairColor}$.
By flowing from \(\bb P_{Y\mid  \mathrm{Sex} = \mathrm{Female}}\), one need only learn structured edits rather than learn the distribution from scratch, resulting in improved sample quality and distributional accuracy (see \cref{fig:celeba2}, left).
} \vspace{-15pt}
\label{fig:celeba}
\end{figure}

While our idea is conceptually simple, it creates challenges for efficient estimation. Unlike previous counterfactual generative estimators, our learning objective depends on the data distribution through the model itself. This requires us to derive a new efficient influence function for debiased estimation, and to non-trivially extend the standard efficiency results to handle V-processes. Despite this, through careful estimator design, our method is similarly practical to implement as previous debiased methods.

As we show theoretically and empirically, learning a transport from
\(\bb P_{Y\mid A=a}\) can mitigate known learning difficulties in flow-based
models that arise from mismatched support, tail behavior, or multimodality of
the base distribution. Our approach is also very effective for
high-dimensional images \(Y\), where \(X\) may be a confounder or spurious
correlate of \(A\) that affects only structured aspects of the image, such as
color, texture, or lighting, while leaving semantic content intact.
In cases where one wishes to generate samples with a specified attribute
\(A=a\) while removing any spurious association with \(X\), a deconfounding
flow can reuse the observed sample structure and learn only the edits needed to rebalance the distribution of \(X\), rather
than synthesizing \(Y\) from scratch. \Cref{fig:celeba} illustrates this for an image-rebalancing task on CelebA, where \(A=\mathrm{Sex}\) and
\(X=\mathrm{HairColor}\).

Taken together, we make the following contributions: 
\begin{enumerate}
\item We characterize several structural similarities between
$\bb P_{Y(a)}$ and $\bb P_{Y\mid A=a}$ under standard identifying assumptions: identical support and tail behavior, statistical closeness
under weak or smooth confounding, and preservation of confounder-invariant
features.
    
    \item We derive a debiased flow-matching estimator
    for learning \emph{"deconfounding flows"} (\textsc{DecFM}) from $\bb P_{Y\mid A=a}$ to $\bb P_{Y(a)}$ based on a novel influence function correction, and
    establish semiparametric efficiency and double-robustness using non-trivial extensions of existing results.

    \item We extend our estimator to target optimal-transport
        flows in higher dimensions, allowing our method to better exploit similarities between
        conditional and counterfactual distributions.

    \item We demonstrate improved performance over existing methods on
    synthetic and real causal benchmarks, including high-dimensional image generation tasks.
\end{enumerate}

\section{Background and Related Work}\label{sec:background}  \vspace{-5pt}

\paragraph{Basic Assumptions and Notation.}
Unless otherwise stated, all measurable spaces are standard Borel. For a random
variable \(X\), let \(\bb P_X\) denote its law and \(\supp(\bb P_X)\) its support. For a measurable map \(f\), we write \(f_{\#}\bb P_X\) for
the push-forward law of \(f(X)\). We write \(\bb P_{Y\mid X=x}\) for the regular
conditional law of \(Y\) given \(X=x\), and abbreviate \(\bb P_{Y\mid X=x}\) to
\(\bb P_{Y\mid x}\) when unambiguous.

\subsection{Counterfactual Distributions under Binary Interventions}  \vspace{-5pt}

Let \(A \in \{0,1\}\) be a binary variable of interest such that $\bb P_A(a) > 0$ for each $a \in \{0,1\}$,
\(X \in \mc X \subseteq \bb R^d\) observed covariates,
\(Y \in \mc Y \subseteq \bb R^p\) an outcome, and $\bb P_{X,A,Y}$ denote their joint observational distribution. In this work, we are interested in estimating the \emph{counterfactual distribution of outcomes $Y$ that would arise under a regime that fixes \(A=a\) while keeping the distribution of \(X\) unchanged.}

In the classical causal setting,
\(A\) is a treatment (e.g., medication), \(X\) are pre-treatment confounders, and one is interested
in the distribution of potential outcomes \(Y(a)\) that would arise if the whole
population were assigned \(A=a\). Under the usual assumptions of (i) consistency: \(Y=Y(A)\); (ii) conditional
exchangeability: \(Y(a)\perp\!\!\!\perp A\mid X\); and (iii) positivity:
\(\exists \epsilon >0 : \bb P(A=\cdot\mid X=x)\ge \epsilon, \bb P_X\)-a.e., this distribution is identified by the formula \citep{rubin2005causal}
\begin{align}
\bb P_{Y(a)}(\cdot)
=
\textstyle\int \bb P_{Y\mid X=x,A=a}(\cdot)\, d\bb P_X(x).
\label{eq:int_dist}
\end{align}
The same distribution also arises in
generative settings where \(Y\) is a high-dimensional output (e.g., an image),
\(A\) is a semantic attribute to control, and \(X\) is a correlated nuisance
attribute. In such cases the causal relation between \(X\) and \(A\) may not always be clear, and/or \(X\) may not capture all confounders, but one may still want to ensure generated
samples under \(A=a\) do not inherit any empirical association with \(X\),
either to reduce spurious dataset correlations or for representational fairness (e.g., when $X$ is a sensitive attribute) \citep{choi2020fair}. For clarity, we define \(\bb P_{Y(a)}\) by \eqref{eq:int_dist} and refer to it as the counterfactual
distribution throughout, while noting that in some generative settings it
is better interpreted as the distribution of outcomes under a counterfactual regime where \(\bb P_{A,X}\) is replaced by \(\delta_a\otimes \bb P_X\), while preserving
\(\bb P_{Y\mid X,A}\), rather than under a causal intervention on \(A\).

\subsection{Existing Methods for Counterfactual Distribution Estimation} \label{sec:background:counterfactual}  \vspace{-5pt}

A straightforward plug-in approach to learning the counterfactual distribution is
to estimate \(\bb P_{Y\mid X,A}\) and \(\bb P_X\) from observational data
with flexible models, then obtain \(\bb P_{Y(a)}\) via \eqref{eq:int_dist}
\citep{kocaoglu2017causalgan,pawlowski2020deep,sanchez2021vaca,
sanchez2022diffusion,javaloy2023causal,dance2025counterfactual}.
However, this approach can inherit bias from estimating \({\bb P}_{Y\mid X,A}\), which may be substantial when \(X\) is
high-dimensional. Another classical strategy is inverse-propensity weighting
(IPW), which reweights
observed outcomes by the inverse of the propensity score \(\pi_a(x) := \bb P(A=a|X=x)\). However, IPW is known to be sensitive to weak overlap \citep{crump2009dealing}, which can lead to poor practical performance. It also does not yield a generative model for the counterfactual distribution.

More recently, a line of work has used techniques from semiparametric theory to directly target generative models for
\(\bb P_{Y(a)}\)
\citep{kennedy2023semiparametric,melnychuk2023normalizing,
martinez2024counterfactual,luedtke2025doublegen}. The basic idea is to specify
a generative model class \(\{\bb P_{\theta_a}:\theta_a\in\Theta\}\) and define
the target parameter as the projection of \(\bb P_{Y(a)}\) onto this class via a
discrepancy \(\mc D\). This yields a first-order optimality condition of the form
\[
m(\theta_{a,0})
:=
\bb E\!\left[
\bb E[g(Y;\theta_{a,0})\mid A=a,X]
\right]
=0,
\]
where \(g\) is typically the gradient of the chosen loss. Based on semiparametric theory, this moment condition is estimated using its efficient influence
function (EIF). The resulting estimator for $m$ uses both an outcome nuisance,
\(\hat{\bb P}_{Y\mid A,X}\), and a propensity nuisance, \(\hat\pi_a(x)\). However, unlike plug-in or IPW estimators, the estimator is consistent if either nuisance estimator converges (i.e., is ``doubly robust'') and can achieve the asymptotic semiparametric efficiency bound \citep{kennedy2023semiparametric}. As such, this framework provides the current state-of-the-art for efficient, generative
counterfactual distribution estimation.  Further details on this framework and implementations are in \cref{app:details:debiased}.

\section{Deconfounding Flows from Observational to Counterfactual Distributions} \label{sec:geometry}  \vspace{-5pt}
Rather than modeling the counterfactual distribution {``from scratch''}, the central idea in this work is to learn counterfactual distributions (targets) via a ``deconfounding'' flow $f_a$ from the observed conditional distribution (source), 
\[\bb P_{Y(a)} = (f_a)_{\#} \bb P_{Y \mid a}.\]

The key observation underlying this approach is that both $\bb P_{Y(a)}$ and $\bb P_{Y \mid a}$ are mixtures over the
same family of conditional outcome distributions, with their mixing weights differing only through the degree of dependence between $X$ and $A$:
\begin{align*}
\bb P_{Y(a)}(\cdot)
&= \textstyle\int \bb P_{Y \mid X=x,A=a}(\cdot)\, d\bb P_X(x), \qquad
\bb P_{Y \mid a}(\cdot)
= \textstyle\int \bb P_{Y \mid X=x,A=a}(\cdot)\, d\bb P_{X \mid a}(x)\;.
\end{align*}
As we detail below, this common structure induces several shared properties and statistical similarities between source and target, with direct implications for the simplicity of deconfounding transports.

\subsection{Shared Tails and Support Under Positivity} \label{sec:geometry:structural}  \vspace{-5pt}

We start by establishing basic properties shared by the source and target distributions under the usual positivity assumption in causal inference, and that are  beneficial for learning flows between them. 

\begin{theorem}
[Shared Support and Tail Class]\label{thm:support-tails} Assume \(\exists\epsilon>0\) such that
\(\pi_a(x) \ge \epsilon\) for all \(a\in\{0,1\}\),
\(\bb P_X\)-a.e. Then, $\bb P_{Y(a)}$ and $\bb P_{Y|a}$ share the same support and tail-class: for each $a \in \{0,1\}$,
\begin{enumerate}
    \item $\supp(\mathbb{P}_{Y(a)})=\supp(\mathbb{P}_{Y\mid A=a})$
    \item For any measurable $s:\mathcal{Y}\to\mathbb{R}_+$,
    \(
\mathbb{E}[s(Y(a))]<\infty
\;\;\Longleftrightarrow\;\;
\mathbb{E}[s(Y)\mid A=a]<\infty.
\)
\end{enumerate}
\end{theorem}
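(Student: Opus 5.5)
The plan is to show that the two laws are mutually absolutely continuous, with Radon--Nikodym derivatives bounded above and below by positive constants. Both claims then follow at once.

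By Bayes' rule, $d\bb P_{X\mid a}/d\bb P_X(x) = \pi_a(x)/\bb P_A(a)$. Positivity gives $\pi_a(x)\ge\epsilon$, and trivially $\pi_a(x)\le 1$. Hence, $\bb P_X$-a.e.,
\[
\epsilon\,\bb P_X(B)\;\le\; \bb P_A(a)\,\bb P_{X\mid a}(B)\;\le\; \bb P_X(B)
\]
for every measurable $B\subseteq\mc X$.

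Next, take a measurable $E\subseteq\mc Y$ and integrate the nonnegative function $x\mapsto \bb P_{Y\mid X=x,A=a}(E)$ against both mixing measures. Using the mixture representations displayed at the start of this section, this yields
\[
\epsilon\,\bb P_{Y(a)}(E)\;\le\;\bb P_A(a)\,\bb P_{Y\mid a}(E)\;\le\;\bb P_{Y(a)}(E).
\]
The same bounds hold with $\mathbf 1_E$ replaced by any measurable $s\ge 0$. This extension goes through monotone approximation by simple functions, or equivalently by Tonelli's theorem applied to $\int\!\int s\,d\bb P_{Y\mid x,a}\,d\mu(x)$. It gives
\[
\epsilon\,\bb E[s(Y(a))]\le \bb P_A(a)\,\bb E[s(Y)\mid A=a]\le \bb E[s(Y(a))],
\]
where $\bb E[s(Y(a))]$ is computed under $\bb P_{Y(a)}$ as defined by \eqref{eq:int_dist}. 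Since $\bb P_A(a)>0$ and $\epsilon>0$, one side is finite if and only if the other is, which proves item 2.

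For item 1, recall that the support is the smallest closed set of full measure, equivalently the set of points all of whose open neighbourhoods have positive mass. The sandwich inequality shows that an open set $U$ has $\bb P_{Y(a)}(U)=0$ if and only if $\bb P_{Y\mid a}(U)=0$. So the two laws have the same null open sets and therefore the same support. Here I use that $\mc Y\subseteq\bb R^p$ is second countable, so the support is well defined.

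The only delicate step is the first one: handling the "$\bb P_X$-a.e." qualifier on positivity and confirming that the density bound holds. Both mixing measures only see $\bb P_X$-full sets, and $\bb P_{X\mid a}\ll\bb P_X$, so the null set where positivity fails is harmless. A minor subtlety is measurability of $x\mapsto\bb P_{Y\mid x,a}(E)$, which is guaranteed because regular conditional laws exist on standard Borel spaces. Everything else is routine.
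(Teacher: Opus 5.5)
Your proof is correct and follows essentially the paper's argument: the paper likewise bounds the mixing density ratio $\mathbb P(A=a)/\pi_a(x)$ between $\mathbb P(A=a)$ and $\mathbb P(A=a)/\epsilon$. It integrates $x\mapsto\mathbb E[s(Y)\mid A=a,X=x]$ against $\mathbb P_X$ and $\mathbb P_{X\mid a}$ to get exactly your two-sided bound for item 2, and derives item 1 from mutual absolute continuity of $\mathbb P_{Y(a)}$ and $\mathbb P_{Y\mid a}$. The only difference is that the paper obtains that equivalence via Lemma~\ref{lemma:rn}, which identifies $d\mathbb P_{Y(a)}/d\mathbb P_{Y\mid a}(y)=\mathbb E[r_a(X)\mid A=a,Y=y]$ by disintegration, whereas your setwise sandwich gives it directly without that formula.
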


\begin{figure*}
    \includegraphics[width = \textwidth]{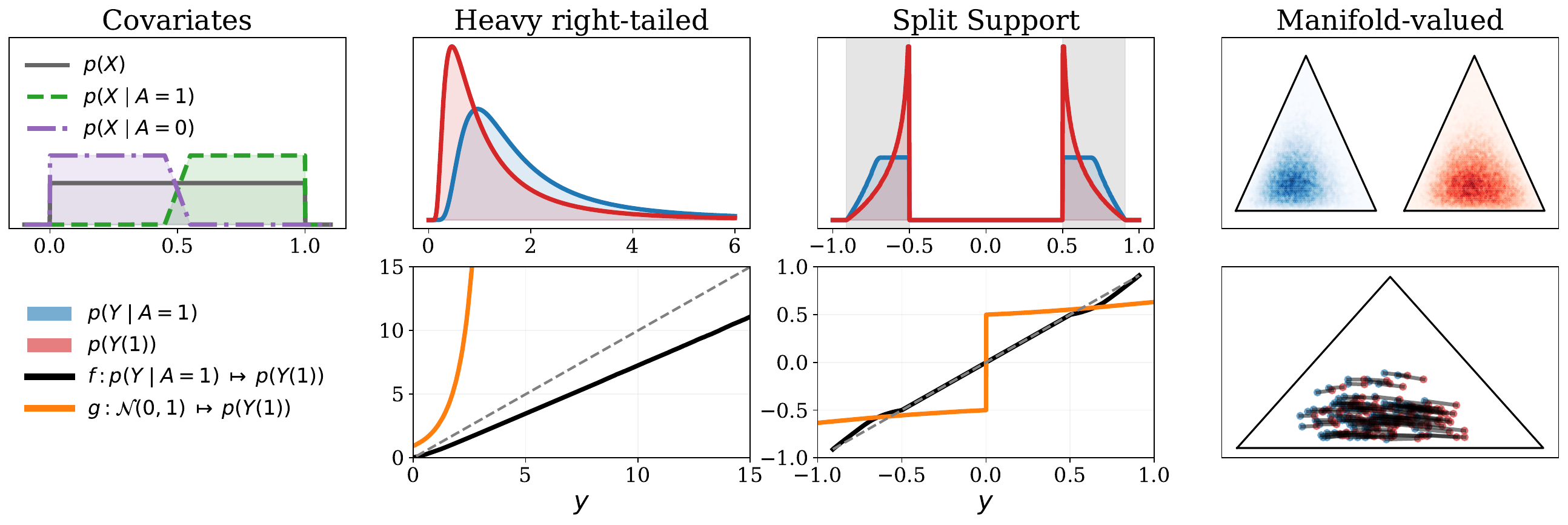}

    \vspace{-10pt}
\caption{
Top: illustrative examples of different conditional and counterfactual distributions
$(\bb P_{Y\mid A=1}, \bb P_{Y(1)})$. Although covariate overlap is weak (top left),
$\bb P_{Y\mid A=1}$ and $\bb P_{Y(1)}$ share the same support and tails (as predicted by \cref{thm:support-tails}), and are statistically much closer than the covariate mixing distributions (as predicted by \eqref{eq:dobrushin}). Bottom: the optimal transport maps between these distributions are more
regular and closer to the identity map than transport from $\mc N(0,1)$ (there is no bijective flow from $\mc N_2(0,I)$ in the simplex-valued case).
}
\vspace{-10pt}
\label{fig:example_distributions_1d}
\end{figure*}

 Shared support and tails are important properties to share in the context of flow-based learning, as they are precisely the properties that popular flow-based methods using smooth diffeomorphisms from a fixed base distribution struggle or are unable to adapt.

In particular, it is known that no Lipschitz transport can generate heavier tails than its base distribution \citep{jaini2020tails}, no continuous transport can split mass into disconnected support components, and no invertible transport can map between distributions supported on sets of different dimension. Since these properties need not be adapted when flowing between $\bb P_{Y|a}$ and $\bb P_{Y(a)}$, learning a deconfounding flow can therefore avoid mis-specification limitations of standard flow-based methods, including those used for counterfactual distributions \citep{melnychuk2023normalizing, luedtke2025doublegen}.
Figure~\ref{fig:example_distributions_1d} illustrates several toy examples where these
geometric constraints impede transporting from $\mc N(0,I)$, but not from $\bb P_{Y\mid a}$.

\subsection{Statistical Similarity Under Mild/Smooth Confounding}  \label{sec:geometry:min_displacement} \vspace{-5pt}
 
The statistical distance between $\bb P_{Y\mid a}$ and $\bb P_{Y(a)}$ can also be bounded by a notion of confounding strength. In particular, since $\bb P_{Y\mid a}$ and $\bb P_{Y(a)}$ are mixtures of the same components, their Total Variation (TV) distance can be bounded using the \emph{Dobrushin Inequality} 
\citep{dobrushin1956central},
\begin{align}  \mathrm{TV}(& \bb P_{Y(a)}, \bb P_{Y|a}) \leq \text{sup}_{x,x' \in \mc X}\mathrm{TV}(\bb P_{Y|x,a},\bb P_{Y|x',a}) \cdot \mathrm{TV}(\bb P_{X}, \bb P_{X|a}). \label{eq:dobrushin}
\end{align}
Thus, the discrepancy between \(\bb P_{Y(a)}\) and \(\bb P_{Y\mid A=a}\)
is small whenever either the association between \(A\) and \(X\) is weak, or
the conditional outcome law varies little with \(X\) within the stratum
\(A=a\). Since each TV term is at most one, the product bound is also no
larger than either component individually. This contraction is illustrated in
\cref{fig:example_distributions_1d}, where the outcome distributions are
substantially closer than their covariate mixing distributions. Under stronger regularity, we can also bound the size of the quadratic-cost optimal transport map \citep{Santambrogio_2015} between these distributions.

\begin{theorem}[Brenier Map Displacement]
\label{thm:Wp_mixture_stability}
Fix $a\in\{0,1\}$. Assume: (i) $\bb{P}_{Y|a}$ is absolutely continuous on $\bb{R}^p$; and (ii)
$
W_2\left(\bb P_{Y\mid x,a},\bb P_{Y\mid x',a}\right)
\;\le\;
L_a\, \| x - x' \|_2,
$
for every $x,x'\in\mc X$. Then the (quadratic cost) Brenier map $f_a$ transporting $\bb P_{Y\mid a}$ to
$\bb P_{Y(a)}$ exists and satisfies
\begin{align}
\| f_a - \mathrm{id}\|_{L^2(\bb P_{Y \mid a})}
\;\le\;
L_a\, W_2\left(\bb P_X,\,\bb P_{X\mid a}\right). \label{eq:transport_bound}
\end{align}
\end{theorem}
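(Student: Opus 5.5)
Our plan is to reduce \eqref{eq:transport_bound} to a $W_2$ bound between the two mixtures, using Brenier's theorem to identify the displacement of $f_a$ with $W_2(\bb P_{Y\mid a},\bb P_{Y(a)})$.

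\textbf{Step 1 (existence and identification).} Since $\bb P_{Y\mid a}$ is absolutely continuous by (i), Brenier's theorem gives a unique optimal map $f_a=\nabla\varphi$ with $(f_a)_\#\bb P_{Y\mid a}=\bb P_{Y(a)}$. This requires finite second moments. We will note that (ii) together with finiteness of $W_2(\bb P_X,\bb P_{X\mid a})$ (implicitly assumed, otherwise the bound is trivial) makes the conditional second moments grow at most linearly in $\|x\|$. Combined with \cref{thm:support-tails}(2) applied to $s(y)=\|y\|^2$, second moments of $\bb P_{Y\mid a}$ and $\bb P_{Y(a)}$ are finite together. Optimality then yields $\|f_a-\mathrm{id}\|^2_{L^2(\bb P_{Y\mid a})}=W_2^2(\bb P_{Y\mid a},\bb P_{Y(a)})$. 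So it suffices to show $W_2(\bb P_{Y\mid a},\bb P_{Y(a)})\le L_a W_2(\bb P_{X\mid a},\bb P_X)$.

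\textbf{Step 2 (gluing couplings).} Let $\gamma$ be an optimal coupling of $(\bb P_{X\mid a},\bb P_X)$. For each pair $(x,x')$, let $\kappa_{x,x'}$ be an optimal coupling of $\bb P_{Y\mid x,a}$ and $\bb P_{Y\mid x',a}$. Define $\Gamma(\cdot)=\int \kappa_{x,x'}(\cdot)\,d\gamma(x,x')$. Its first marginal is $\int \bb P_{Y\mid x,a}\,d\bb P_{X\mid a}(x)=\bb P_{Y\mid a}$, and its second is $\int\bb P_{Y\mid x',a}\,d\bb P_X(x')=\bb P_{Y(a)}$. Hence
\[
W_2^2(\bb P_{Y\mid a},\bb P_{Y(a)})\le\int W_2^2(\bb P_{Y\mid x,a},\bb P_{Y\mid x',a})\,d\gamma\le L_a^2\int\|x-x'\|^2d\gamma=L_a^2W_2^2(\bb P_{X\mid a},\bb P_X).
\]
Taking square roots finishes the proof. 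This is just joint convexity of $W_2^2$ under mixtures.

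\textbf{Main obstacle: measurability.} The delicate step is that $(x,x')\mapsto\kappa_{x,x'}$ must be chosen as a measurable kernel so that $\Gamma$ is well defined. We will invoke the measurable selection of optimal plans (Villani, \emph{Optimal Transport}, Cor.~5.22), using that $x\mapsto\bb P_{Y\mid x,a}$ is a measurable kernel on standard Borel spaces.

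One more subtlety is that (ii) is stated for all $x,x'\in\mc X$, whereas the conditional laws are only defined a.e. We only need (ii) $\gamma$-a.e., which holds for any version satisfying the hypothesis.
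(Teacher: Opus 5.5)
Your Step 2 is the paper's argument. You mix a measurably selected family of optimal couplings of $\bb P_{Y\mid x,a}$ and $\bb P_{Y\mid x',a}$ along a coupling of the covariate laws. The paper uses an arbitrary $\gamma$ and takes the infimum at the end; choosing $\gamma$ optimal from the start is equivalent. You then identify $\|f_a-\mathrm{id}\|_{L^2(\bb P_{Y\mid a})}$ with $W_2(\bb P_{Y\mid a},\bb P_{Y(a)})$ via Brenier, as the paper does. That part is fine.

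\textbf{Gap: the second-moment justification in Step 1 fails.} You correctly say the $\mc P_2$ form of Brenier's theorem needs finite second moments, but your argument does not deliver them.
\begin{itemize}
\item \cref{thm:support-tails} assumes strong overlap $\pi_a\ge\epsilon$, which is not a hypothesis here.
\item Even granting overlap, part (2) only says the second moments of $\bb P_{Y\mid a}$ and $\bb P_{Y(a)}$ are finite or infinite \emph{together}. It never produces finiteness.
\item Hypothesis (ii) does not produce finiteness either. It only makes $x\mapsto(\bb E[\|Y\|^2\mid X=x,A=a])^{1/2}$, the square root of the second moment, grow at most linearly from a base point. That helps only if some conditional second moment is finite and $\int\|x\|^2\,d\bb P_{X\mid a}<\infty$. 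Finiteness of $W_2(\bb P_X,\bb P_{X\mid a})$ gives neither.
\end{itemize}
A concrete counterexample: take $p=1$, $\mc X=[0,1]$, and $\bb P_{Y\mid x,a}=\mathrm{Cauchy}(x,1)$, with strong overlap if you like. Then (i) holds, and (ii) holds with $L_a=1$ because the quantile coupling is a translation. Yet $\bb P_{Y\mid a}$ and $\bb P_{Y(a)}$ both have infinite second moments, so the $\mc P_2$ Brenier theorem you invoke does not apply.

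\textbf{Repair: reverse the order of the steps.}
\begin{itemize}
\item Step 2 needs no moments. Optimal couplings exist for any nonnegative lower semicontinuous cost, and measurable selection of optimal plans does not need moments. So Step 2 yields $W_2(\bb P_{Y\mid a},\bb P_{Y(a)})\le L_aW_2(\bb P_{X\mid a},\bb P_X)$, which is finite whenever the right side is; otherwise the inequality is trivial.
\item McCann's theorem gives a convex $\varphi$ with $(\nabla\varphi)_\#\bb P_{Y\mid a}=\bb P_{Y(a)}$ for any absolutely continuous source, with no moment conditions. This settles existence.
\item The plan induced by $\nabla\varphi$ is cyclically monotone. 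Since the optimal cost is finite, it is therefore optimal (Villani, \emph{Optimal Transport: Old and New}, Thm.~5.10(ii)).
\item Hence $\|f_a-\mathrm{id}\|^2_{L^2(\bb P_{Y\mid a})}=W_2^2(\bb P_{Y\mid a},\bb P_{Y(a)})$, and the bound follows.
\end{itemize}
The paper's own proof is loose at exactly this point. It asserts the needed second moments are ``implicit in the $W_2$-Lipschitz assumption'', which the Cauchy example refutes. So the same repair, or an explicit finite-second-moment hypothesis, is needed there too.
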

Thus, under \(W_2\)-Lipschitz outcome variation, covariate shift directly controls
the size of the optimal deconfounding transport. If 
\(\bb P_X\) and \(\bb P_{X\mid A=a}\) are close, or the conditional outcome law
varies slowly with \(X\), then \(f_a\) is near-identity. At a high level, estimating simpler functions is statistically easier than estimating complicated ones. This motivates targeting
low-energy or optimal-transport deconfounding flows, as developed in
\cref{sec:ot}.

\subsection{Shared Confounder-Invariant Features}
\label{sec:geometry:features} \vspace{-5pt}

The shared mixture structure can also preserve important features of
\(Y\) that are insensitive to confounders $X$. Let \(\phi:\mc Y\to\mc Z\) be a measurable feature map. We say that
\(\phi(Y)\) is \emph{confounder-invariant} in arm \(a\) if
\[
    \phi_{\#}\bb P_{Y\mid X=x,A=a}=Q_a
    \qquad \bb P_X\text{-a.e. }x .
\]
For example, in the image-based settings discussed above,
\(X\) may affect nuisance visual attributes such as color, texture, lighting,
or background, while leaving much of the image geometry unchanged. In this
case, \(\phi\) encodes invariant structure---for
instance, object shape, pose, identity, edge placement, or class-relevant
semantic content. The following result formalizes that every
confounder-invariant feature has the same distribution under
$\bb P_{Y\mid A=a}$ and $\bb P_{Y(a)}$, and under transportability assumptions, there is a transport between them that preserves this feature pointwise.
\begin{theorem}[Feature-preserving deconfounding]\label{prop:invariant_confounding}
Fix $a\in\{0,1\}$ and let $\phi:\mathcal Y\to\mathcal Z$ be measurable and confounder-invariant. Then
\[\phi_{\#}\bb P_{Y(a)}=\phi_{\#}\bb P_{Y\mid A=a}=Q_a\]
Moreover, if there exist a jointly measurable family of maps \(\{T_z:\mc Y\to\mc Y \mid z \in \mc Z\}\) such that
\((T_z)_{\#}\bb P_{Y\mid A=a,\phi(Y)=z}=\bb P_{Y(a)\mid \phi(Y(a))=z}\) for \(Q_a\)-a.e. \(z\), then there is a map $f_a:\mathcal Y\to\mathcal Y$ such that
\[
    (f_a)_{\#}\bb P_{Y\mid A=a}=\bb P_{Y(a)}
    \qquad\text{and}\qquad
    \phi(f_a(Y))=\phi(Y)
    \quad \bb P_{Y\mid A=a}\text{-a.s.}
\]
\end{theorem}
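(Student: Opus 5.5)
The first claim follows from the shared mixture representation. For a measurable $B\subseteq\mc Z$, apply \eqref{eq:int_dist} to the set $\phi^{-1}(B)$:
\[
\bb P_{Y(a)}(\phi^{-1}(B))=\textstyle\int \bb P_{Y\mid x,a}(\phi^{-1}(B))\,d\bb P_X(x)=\int Q_a(B)\,d\bb P_X(x)=Q_a(B).
\]
Doing the same with mixing measure $\bb P_{X\mid a}$ gives $\bb P_{Y\mid a}(\phi^{-1}(B))=Q_a(B)$. This step needs invariance to hold $\bb P_{X\mid a}$-a.e. as well as $\bb P_X$-a.e. That holds because $\bb P_{X\mid a}\ll\bb P_X$ always: its density is $\pi_a(x)/\bb P_A(a)\le 1/\bb P_A(a)$, since $\bb P_A(a)>0$. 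Positivity is therefore not needed for this part. It also follows that $\phi(Y(a))$ and $\phi(Y)\mid A=a$ have the same law, so the two conditioning families in the hypothesis are indexed by the same measure $Q_a$.

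For the second claim I would glue the fiberwise maps. Define $f_a(y):=T_{\phi(y)}(y)$. Measurability follows from joint measurability of $(z,y)\mapsto T_z(y)$ composed with $y\mapsto(\phi(y),y)$.

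To check the pushforward, disintegrate $\bb P_{Y\mid a}$ along $\phi$. Standard Borel spaces give regular conditionals, with $\bb P_{Y\mid a}=\int \bb P_{Y\mid a,\phi=z}\,dQ_a(z)$. For bounded measurable $h$,
\[
\bb E[h(f_a(Y))\mid A=a]=\textstyle\int\!\!\int h(T_z(y))\,d\bb P_{Y\mid a,\phi=z}(y)\,dQ_a(z).
\]
On the inner integral, $y$ lies in the fiber $\{\phi=z\}$ for $Q_a$-a.e. $z$, so $T_{\phi(y)}=T_z$ there. Applying the hypothesis, the right-hand side becomes $\int\!\!\int h\,d\bb P_{Y(a)\mid\phi=z}\,dQ_a(z)$. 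By the first part, this equals $\bb E[h(Y(a))]$.

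Feature preservation requires $\phi(T_z(y))=z$ for $\bb P_{Y\mid a,\phi=z}$-a.e. $y$. This holds because $\bb P_{Y(a)\mid\phi=z}$ is concentrated on $\{\phi=z\}$ for a.e. $z$, so $\bb P_{Y\mid a,\phi=z}(\phi(T_z(Y))=z)=1$. Integrating over $Q_a$ gives $\phi(f_a(Y))=\phi(Y)$ almost surely.

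The main technical obstacle is the a.e. bookkeeping. One must show that the regular conditional distributions concentrate on their fibers for $Q_a$-a.e. $z$; this is standard for standard Borel spaces with a measurable $\phi$, via the disintegration theorem. One must also ensure that the exceptional null sets of the two disintegrations and of the hypothesis can be merged into a single $Q_a$-null set. I would handle this by fixing versions of both disintegrations and intersecting the countably many good sets, using a countable generating algebra for $\mc Y$.
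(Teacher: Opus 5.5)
Your proposal is correct and follows essentially the same route as the paper. Both arguments evaluate the two mixtures on $\phi^{-1}(B)$ using $\bb P_{X\mid A=a}\ll\bb P_X$, disintegrate along $\phi$ with fiber-concentrated conditionals, and glue $f_a(y)=T_{\phi(y)}(y)$ to verify both the pushforward and feature preservation; your explicit treatment of merging the $Q_a$-null sets tightens a step the paper handles only implicitly.
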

One implication of this result is that when confounders or nuisance attributes $X$ affect only
certain aspects of an image-based outcome, a deconfounding flow can in principle correct the correlated attributes while preserving shared \textit{image-level} structure or content. Thus the learning problem is closer to structured editing than generation from scratch. As we later show in experiments, deconfounding flows perform especially well in such settings, even with simple model architectures.

\section{Debiased Flow-Matching for Deconfounding Flows}\label{sec:est} \vspace{-5pt}
Having laid out the theoretical motivation for deconfounding flows $f_a$ from $\bb P_{Y \mid a}$
to $\bb P_{Y(a)}$, we now derive an efficient estimator of such a transport using observational data $(X_i,A_i,Y_i)_{i=1}^n{\sim}_{iid}\bb P$, via flow matching \citep{lipman2023flow}. In what follows, we specify the model parameterization and
population objective, derive a debiased estimator for the gradient of
the objective, and prove statistical efficiency. The key distinction from previous debiased generative
estimators is that our target functional for estimation depends on the data through the model itself, via the flow base. This requires us to derive a new efficient influence function
correction for debiasing, and to establish new results that extend the technical devices used to prove statistical efficiency.

\subsection{Velocity Parameterization and Flow-Matching Objective} \vspace{-5pt}
Following standard flow-matching, we model the deconfounding flow $f_a$ as the time-$1$ solution of an ODE with velocity field $v_{\theta_a} : \mc Y \times [0,1] \to \mc Y$ parameterized by $\theta_a \in \bb R^p$,
\begin{align*}
d y_t  &= v_{\theta_a}(y_t, t)dt, \quad t \in [0,1], \qquad f_a^{\theta}(y)
 :=
y + \textstyle \int_0^1 v_{\theta_a}(y_t, t)\, dt,
\qquad
y_0 = y.
\end{align*}
The target velocity field is the minimizer
of the usual conditional flow-matching objective,
\begin{equation}
\mathcal L_a(\theta_a)
:=
\bb E\Big[
\bigl\| v_{\theta_a}(\phi_t(Y_0,Y_1), t)
      - \partial_t \phi_t(Y_0,Y_1) \bigr\|^2
\Big]\,.
\label{eq:fm_population}
\end{equation}
Here $Y_0 \sim \bb P_{Y\mid a}$, $Y_1 \sim \bb P_{Y(a)}$,
$t \sim \mathrm{Unif}(0,1)$, and $\phi_t(y_0,y_1) = y_0 (1-t) + y_1(t)$ is a linear interpolation
path between $y_0,y_1$ with time derivative $\partial_t \phi_t(y_0,y_1)  = y_1 - y_0$. Minimizing \eqref{eq:fm_population} is known to target a velocity $v^*$
whose induced flow transports $\bb P_{Y\mid a}$ to $\bb P_{Y(a)}$, thus providing a natural, likelihood-free objective for deconfounding flows. 

\subsection{Debiased Estimation via Novel EIF Correction}
\label{sec:est:debiased} \vspace{-5pt}
We now derive a debiased estimator of the gradient of \eqref{eq:fm_population} for optimization, using the semiparametric estimation framework discussed in \cref{sec:background} and \cref{app:details:debiased}. Under \eqref{eq:int_dist}, the gradient of \eqref{eq:fm_population} can be expressed as the moment functional 
\begin{align}
m_a(\theta_a,\bb P)
& = \bb E_{\bb P}\!\left[
     \bb E_{ \bb P}\!\left[g_{\theta_a}(Y)\mid A=a,X\right]
   \right],
\label{eq:DecFM_moment}
\end{align}
where $g_{\theta_a}$ is an integral of the flow-matching loss score under the source $\bb P_{Y|a}$,
\[
g_{\theta_a}(y)
:=
\textstyle\int r_{\theta_a}(y, \tilde y)\,
\bb P_{Y\mid a}(d\tilde y), \qquad r_{\theta_a}(y, \tilde y)
\!:=\! \textstyle\int_0^1 \nabla_{\theta_a} \|v_{\theta_a}(\phi_t(y,\tilde y),t)
- \partial_t\phi_t(y,\tilde y)\|^2\,dt.
\]
A naive plug-in estimator for \eqref{eq:DecFM_moment} first estimates
\(\mu_{\theta_a}(x):=\bb E[g_{\theta_a}(Y)\mid A=a,X=x]\)
and averages \(\hat\mu_{\theta_a}(X_i)\) over the empirical covariate distribution. 
To remove the resulting first-order bias in this estimator that arises from estimation of $\mu_{\theta_a}$, we instead use the usual
one-step correction via an estimator of the efficient influence function (EIF) $\varphi_{\theta_a}$ of $m(\theta_a,\cdot)$. The resulting estimator is
\begin{equation}
\hat m(\theta_a)
=
\frac1n\sum_{i=1}^n
\Bigl(
\hat\mu_{\theta_a}(a,X_i)
+
\hat\varphi_{\theta_a,a}(X_i,A_i,Y_i)
\Bigr)\;.
\label{eq:DR_m}
\end{equation}
In previous debiased generative estimators, $g_\theta$ is not a functional of the data distribution $\bb P$, and so the EIF $\varphi_{\theta_a}$ takes the usual ``AIPW'' form given in \cref{app:details:debiased}. However, in our case $g_{\theta_a}$ depends on $\bb P$ through
$\bb P_{Y\mid a}$. Thus, the EIF takes a different form and needs to be derived from scratch.

\begin{theorem}[EIF for Deconfounded Flow-Matching]
\label{thm:eif_fm} Let $\mu_{\theta_a}(x)=\bb E[g_{\theta_a}(Y)\mid A=a,X=x]$
and $\chi_{\theta_a}(y) := \bb E[\bb E[r_{\theta_a}(Y,y)\mid A=a,X]]$. Then, under the overlap and
integrability conditions in Assumption~\ref{ass:eif_regularity} in \cref{app:math}, the EIF for $m_a(\theta_a)$ is
\begin{align*}
 \varphi_{\theta_a}(\tilde x,\tilde a,\tilde y) \!= & \frac{\bm 1_{\{\tilde a\!=\! a\}}}{\pi_a(\tilde x)}
\!\bigl(g_{\theta_a}\!(\tilde y)-\mu_{\theta_a}\!(\tilde x)\bigr)
+\mu_{\theta_a}\!(\tilde x)-m_a(\theta_a)  
+ \frac{\bm 1_{\{\tilde a\!=\! a\}}}{\bb P_A(a)}
\bigl(\chi_{\theta_a}\!(\tilde y)
\!-\!\bb E[\chi_{\theta_a}\!(Y)| A\!=\! a]\bigr) 
\end{align*}
\end{theorem}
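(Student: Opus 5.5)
We derive the EIF by the pathwise-derivative (Gateaux) method. Take a regular one-dimensional parametric submodel $\{\bb P_\varepsilon\}$ through $\bb P$ with score $S(x,a',y)=S_X(x)+S_{A\mid X}(a'\mid x)+S_{Y\mid X,A}(y\mid x,a')$. We will show that $\frac{d}{d\varepsilon}m_a(\theta_a,\bb P_\varepsilon)\big|_{\varepsilon=0}=\bb E[\varphi_{\theta_a}S]$, with $\varphi_{\theta_a}$ mean zero. Since the model is nonparametric (the tangent space is all of $L^2_0(\bb P)$), any such mean-zero gradient is automatically the efficient one. The key structural step is to write $m_a$ as a double integral that makes both occurrences of $\bb P$ explicit:
\[
m_a(\theta_a,\bb P)=\int\!\!\int r_{\theta_a}(y,\tilde y)\,\bb P_{Y(a)}(dy)\,\bb P_{Y\mid a}(d\tilde y),\qquad \bb P_{Y(a)}=\textstyle\int \bb P_{Y\mid x,a}\,d\bb P_X(x).
\]
We then apply the product rule. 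The derivative splits into (I), which perturbs $\bb P_{Y(a)}$ with $\bb P_{Y\mid a}$ held fixed, and (II), which perturbs $\bb P_{Y\mid a}$ with $\bb P_{Y(a)}$ held fixed.

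For term (I), $g_{\theta_a}$ is a fixed function once the source is frozen. The functional is then the standard counterfactual mean $\bb E[\bb E[g_{\theta_a}(Y)\mid A=a,X]]$. Its known AIPW pathwise derivative gives the contribution
\[
\frac{\bm 1_{\{\tilde a=a\}}}{\pi_a(\tilde x)}\bigl(g_{\theta_a}(\tilde y)-\mu_{\theta_a}(\tilde x)\bigr)+\mu_{\theta_a}(\tilde x)-m_a(\theta_a).
\]
I would re-verify this quickly. Differentiate $\int\!\int g\, d\bb P_{Y\mid x,a,\varepsilon}\,d\bb P_{X,\varepsilon}$ to get $\bb E[\mu_{\theta_a}(X)S_X(X)]+\bb E[\bb E[g(Y)S_{Y\mid X,A}\mid X,A=a]]$. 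Then use $\bm 1_{\{A=a\}}/\pi_a(X)$ to convert the conditional expectation into an expectation under $\bb P$, together with the usual orthogonality of the score components.

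For term (II), the fixed function is $y\mapsto\int r_{\theta_a}(y',y)\,\bb P_{Y(a)}(dy')$. Swapping the order of integration by Fubini, this equals $\chi_{\theta_a}(y)=\bb E[\bb E[r_{\theta_a}(Y,y)\mid A=a,X]]$. Term (II) is therefore the pathwise derivative of the conditional mean $\bb E[\chi_{\theta_a}(Y)\mid A=a]=\bb E[\chi_{\theta_a}(Y)\bm 1_{\{A=a\}}]/\bb P_A(a)$. Applying the quotient rule to numerator and denominator gives the gradient $\frac{\bm 1_{\{\tilde a=a\}}}{\bb P_A(a)}\bigl(\chi_{\theta_a}(\tilde y)-\bb E[\chi_{\theta_a}(Y)\mid A=a]\bigr)$. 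Summing (I) and (II) gives the stated $\varphi_{\theta_a}$. Each piece is visibly mean zero, which checks the centering.

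The main obstacle is rigor rather than algebra. We must justify interchanging $d/d\varepsilon$ with the double integral and the inner $\int_0^1 dt$ and $\nabla_\theta$ in $r_{\theta_a}$. We must also justify the product-rule decomposition for a functional that is bilinear in two measures, since the cross term has to vanish at first order. I would handle this with the integrability in Assumption~\ref{ass:eif_regularity}. The plan is to assume $r_{\theta_a}$ is square-integrable under $\bb P_{Y(a)}\otimes\bb P_{Y\mid a}$, and to work with submodels of the form $d\bb P_\varepsilon=(1+\varepsilon h)\,d\bb P$ with bounded $h$. Along these submodels each perturbed measure is an explicit linear perturbation, and the $\varepsilon^2$ cross term is bounded by $\|h\|_\infty^2\,\bb E|r_{\theta_a}|$ divided by factors of $\bb P_A(a)$ and $\epsilon$. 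Overlap $\pi_a\ge\epsilon$ ensures that $\varphi_{\theta_a}\in L^2$. Bounded scores are dense in $L^2_0$, so this identifies the gradient uniquely.
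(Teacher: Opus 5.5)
Your proposal is correct and follows essentially the same route as the paper: write $m_a$ as the bilinear functional $T_{\theta_a}(\bb P_{Y(a)},\bb P_{Y\mid a})$ and split its pathwise derivative by the product rule (the paper's Lemma~\ref{lem:nested}), which gives the AIPW influence function for the counterfactual mean of $g_{\theta_a}$ plus the stratum-mean influence function for $\chi_{\theta_a}$. Your bounded-score submodel argument for discarding the $O(\varepsilon^2)$ cross term is more careful than the paper's lemma; one small correction is that $\bb P_{Y(a),\varepsilon}$ is not literally linear in $\varepsilon$, because the conditional normalization makes it rational in $\varepsilon$, but its density with respect to $\bb P_{Y(a)}$ is uniformly $1+O(\varepsilon)$, which is all your bound needs.
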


The first two terms are the usual EIF correction when \(g_{\theta_a}\) is fixed; the final \(\chi_{\theta_a}\)-terms are new and 
account for the dependence on the estimated base 
distribution \(\bb P_{Y\mid A=a}\).

\paragraph{A Practical Estimator for Optimization} The standard approach to estimating \eqref{eq:DR_m} is to (i) estimate $\mu_{\theta_a}$ and the components of $\varphi_{\theta_a}$ on one half of the data, and (ii) substitute these estimators into \eqref{eq:DR_m} and sum over the other half of the data. Naively,
this would require repeatedly estimating several \(\theta_a\)-dependent nuisance
components during optimization. We avoid this by outsourcing all estimation to the distributions \(\hat\pi_a, \hat{\bb P}_{Y\mid A,X}, \hat{\bb P}_A, \hat{\bb P}_{Y\mid a}\) which do not depend on $\theta_a$, and then estimating $\hat{\bb P}_A, \hat{\bb P}_{Y\mid a}$ via the empirical measures on
the \emph{same data} used to evaluate \eqref{eq:DR_m}. In this case, by constructing
\[
\hat g_{\theta_a}(y)
:=
\textstyle\int r_{\theta_a}(y,\tilde y)\,
\hat{\bb P}_{Y\mid A=a}(d\tilde y),
\qquad
\hat\mu_{\theta_a}(x)
:=
\textstyle\int \hat g_{\theta_a}(y)\,
\hat{\bb P}_{Y\mid A=a,X=x}(dy),
\]
the  resulting one-step estimator \eqref{eq:DR_m} simplifies to
\begin{align*}
\hat m(\theta_a)
\!:=\!
\frac{1}{n}\sum_{i=1}^n
\Bigg[
\frac{\bm 1_{\{A_i=a\}}}{\hat\pi_a(X_i)}
\!\!\int \!\!r_{\theta_a}(Y_i,\tilde y)
\hat{\bb P}_{Y\mid a}(d\tilde y)
 +
\Bigl(1\!-\!\frac{\bm 1_{\{A_i=a\}}}{\hat\pi_a(X_i)}\Bigr) \!\!\!
\int \!\!
r_{\theta_a}(\hat y,\tilde y)\,
\hat{\bb P}_{Y\mid a,x}(d \hat y)\hat{\bb P}_{Y\mid a}(d\tilde y)
\Bigg]
\label{eq:dr_grad_practical}
\end{align*}
The benefit of outsourcing estimation to components independent of \(\theta_a\) is that
\(\hat m(\theta_a)\) can be approximated for any \(\theta_a\) using Monte Carlo with minibatches of samples from \(\hat{\bb P}_{Y\mid A=a,X}\) and
\(\hat{\bb P}_{Y\mid A=a}\), and hence can be used for stochastic gradient descent
(see Algorithm~\ref{alg:dr-fm} in \cref{app:algs}). The benefit of using the evaluation fold for
\(\hat{\bb P}_A\) and \(\hat{\bb P}_{Y\mid A=a}\) is that the additional \(\chi_{\theta_a}\)-type nuisance terms appearing in
the full EIF cancel empirically, and so do not need estimating in practice.

\paragraph{Efficiency of Estimator}
By using the same data for estimating \(\hat{\bb P}_A\) and \(\hat{\bb P}_{Y\mid A=a}\) and evaluating the moment, one cannot use the usual sample-splitting
arguments for statistical efficiency of debiased estimators under fully non-parametric nuisance models \citep[e.g.,][]{kennedy2024semiparametric}. However, because \(\hat{\bb P}_A\) and \(\hat{\bb P}_{Y\mid A=a}\) are empirical measures, we can rewrite the estimator as a
collection of V-processes. By carefully extending the usual proof techniques to handle such processes (see \cref{app:math:lemmas}), in Theorem~\ref{corr:efficiency} we recover the usual
asymptotic expansion,
\begin{align*} & \hat m(\theta_a) - m(\theta_a) = O_{\bb{P}}(n^{-1/2}) \;+ O_{\bb{P}}\big( n^{-\frac 12} (\|\hat\pi_a-\pi_a\|_{L_2(\bb P_X)} + \Delta_{1}) + \|\hat\pi_a-\pi_a\|_{L_2(\bb P_X)}\Delta_{1} \big) \;, \end{align*} 
where $\Delta_{1} := \| W_1\!\big( \hat{\bb P}_{Y\mid a,x}, \bb P_{Y\mid a,x} \big) \|_{L_2(\bb P_X)}$, and the leading term is from the mean-zero empirical average of the EIF.
Hence, our estimator retains the usual double robustness and efficiency
properties: it is consistent if either \(\hat\pi_a\) or
\(\hat{\bb P}_{Y\mid A=a,X}\) is consistent while the other is bounded, and
attains the semiparametric efficiency bound whenever
\(
\|\hat\pi_a-\pi_a\|_{L_2(\bb P_X)}\Delta_1=o_{\bb P}(n^{-1/2}).
\)

\paragraph{Nuisance Model Estimation}
In the spirit of deconfounding flows, we estimate $\hat{\bb P}_{Y\mid a,x}$ via a flow from $\bb P_{Y \mid a}$, using standard flow-matching for conditional distributions. Although ${\bb P}_{Y\mid a,x}$ and $\bb P_{Y\mid a}$ do not necessarily share the same structural similarities analyzed in \cref{sec:geometry}, they do share the same tails and support whenever these properties of ${\bb P}_{Y\mid a,x}$ are not affected by $x$, and are closer in distance than the supremum term of the Dobrushin bound in \cref{sec:geometry:min_displacement}. The propensity model $\hat\pi_a$ can be estimated with any probabilistic classifier. Further details on nuisance estimation are in \cref{app:experiments}.

\subsection{Extension for Targeting OT Flows In Higher Dimensions} \label{sec:ot} \vspace{-5pt}
In one dimension ($\mc Y\subset\bb R$), the population objective
\eqref{eq:fm_population} targets the velocity $v^*$ 
whose induced time-1 flow  coincides with the quadratic cost OT map $f_a$ studied in \cref{sec:geometry:min_displacement}. Moreover, its kinetic energy $E(v^*)$ corresponds to the transport cost,
\[
E(v^*) := \textstyle\int_0^1\!\!\int \|v_t(y)\|^2\,\bb P_t(dy)\,dt = \|f_a-\mathrm{id}\|^2_{L_2(\bb P_{Y\mid a})}.
\]
This energy is minimal among all velocities with flows that transport $P_0:= \bb P_{Y\mid a}$ to $P_1:= \bb P_{Y(a)}$.
Thus, a ``near-identity'' deconfounding OT flow (e.g., under structured or mild confounding) implies a ``near-zero'' velocity, and regularity properties of $f_a$ are known to directly carry over to $v^*$. For example, if $f_a$ is bi-Lipschitz, then $f_t$ is bi-Lipschitz and $v_t$ is Lipschitz  \citep{ambrosio2005gradient}.

In higher dimensions, however, this equivalence only holds when replacing the
independent coupling $\bb P_{Y\mid a}\otimes\bb P_{Y(a)}$ in
\eqref{eq:fm_population} by the {OT}
coupling $\gamma$ between $\bb P_{Y\mid a}$ and $\bb P_{Y(a)}$. 
\begin{align}
\hspace{-5pt}\mathcal L_a^{\gamma}(\theta_a)
:=\!\!\!\!\!
 \underset{(Y_0,Y_1)\sim \gamma}{\bb E}
\!
\bigl\| v_{\theta_a}(\phi(Y_0,Y_1), t)
      - \partial_t \phi(Y_0,Y_1) \bigr\|^2 \label{eq:pop_fm_coupling}
\end{align}
Therefore, we extend the estimator of \cref{sec:est:debiased} to  target such OT flows in higher dimensions, as an additional implementation option.
To do so, we write the coupling $\gamma$ that defines \eqref{eq:pop_fm_coupling} as $\bb P_{Y(a)} \otimes \gamma_{0 \mid 1}$, with $ \gamma_{0 \mid 1}$ the conditional distribution of $Y_0 \mid Y_1$. For any fixed  $\gamma_{0 \mid 1}$, we derive a debiased estimator of $\nabla_{\theta_a}\mathcal L_a^\gamma(\theta_a)$ using the equivalent EIF, resulting in an analogous gradient estimator for optimization,
\begin{align*}
\hat m(\theta_a)
:=
\frac{1}{n}\!\sum_{i=1}^n\!
\Bigg[
\frac{\bm 1_{\{A_i=a\}}}{\hat\pi_a(X_i)}
\!\!\int \!\!r_{\theta_a}\!(Y_i,\tilde y)
 \gamma_{0 \mid 1}\!(d\tilde y |Y_i)\!+\!
\Bigl(\!1\!-\!\frac{\bm 1_{\{A_i=a\}}}{\hat\pi_a(X_i)}\Bigr) \!\!\!
\int \!\!r_{\theta_a}\!(\hat y,\tilde y) \gamma_{0 \mid 1}\!(d\tilde y |\hat y)
\hat{\bb P}_{Y\mid a,x}(d\hat y)
\Bigg]
\end{align*}
The independent-coupling estimator of \cref{sec:est:debiased} is recovered by taking
\(\gamma_{0\mid 1}(\cdot\mid y)=\hat{\bb P}_{Y\mid a}\). Here we instead propose to estimate $\gamma_{0\mid 1}$ via an entropic-OT coupling between
\(\hat{\bb P}_{Y\mid a}\) and a plug-in estimate of \(\bb P_{Y(a)}\), recomputed on
minibatches using Sinkhorn's algorithm \citep{tong2023improving}. To reduce sensitivity to plug-in misspecification, one can optionally tilt samples from \(\hat{\bb P}_{Y(a)}\) to match doubly-robust moment estimates. Details are in \cref{app:experiments} and Algorithm~\ref{alg:dr-ot-fm}.

Estimating the OT coupling introduces an
additional source of statistical error not corrected at first order by the fixed-\(\gamma\) EIF correction. Bias-correcting OT couplings is not well-understood and left to future work. However, in exchange, we get
a simpler geometric target: straighter, lower-energy paths and near-identity flows under mild confounding. In high-dimensional experiments, we find this estimator improves distributional accuracy, demonstrating that the trade-off is worthwhile.

\section{Experiments}
\label{sec:experiments}  \vspace{-5pt}

We now implement deconfounding flows (\textsc{DecFM}) in several experiments and causal benchmarks. Design and implementation details for all experiments are in \cref{app:experiments}.

\subsection{Observational vs. Gaussian Base Distribution under Different Outcome Geometries}\label{sec:exp:1dablation}  \vspace{-5pt}
\begin{wraptable}[8]{r}{0.5\columnwidth}
\vspace{-1em}
\centering
\setlength{\tabcolsep}{3pt}
\footnotesize
\resizebox{\linewidth}{!}{%
\begin{tabular}{lcc}
\toprule
Outcomes $\downarrow$ / Base $\to$ & $\bb P_{Y\mid a}$ & $\mc N(0,I)$ \\
\midrule
Gaussian &
0.15 $\pm$ 0.05 &
\textbf{0.13 $\pm$ 0.05} \\

Inverse Gamma &
\textbf{0.71 $\pm$ 0.32} &
1.04 $\pm$ 0.46 \\

Mixture of Uniform &
\textbf{0.04 $\pm$ 0.01} &
0.08 $\pm$ 0.01 \\
\bottomrule
\end{tabular}
}
\vspace{-8pt}
\caption{
$W_1$ error (mean $\pm$ sd, 50 trials) in estimated $\bb P_{Y(a)}$ via flow from
$\bb P_{Y\mid a}$ versus $\mc N(0,I)$.
}
\label{tab:1dablation}
\end{wraptable}
\paragraph{Experiment} We start with a simple ablation of the effect of flowing from $\bb P_{Y|a}$ compared to flowing from a fixed $\mc N(0,I)$ base distribution under different outcome geometries. We draw covariates $X \sim \mathrm{Unif}[0,1]$, treatment 
\(
A \mid X \sim \mathrm{Bern}\bigl(\sigma(5(X-0.5))\bigr)
\), and then $Y \sim \bb P_{Y\mid X,A}$ using:
(i) a simple conditional Gaussian with covariate-dependent mean, where the Gaussian
base is well matched to the target distribution;
(ii) an inverse-Gamma distribution with covariate-dependent tails, which do not match those of the Gaussian base; and
(iii) a mixture of uniform distributions with disconnected support, which is also mis-matched to the Gaussian base. For both flows we use the debiased estimator of \cref{sec:est:debiased} (instantiated via
{Algorithm~\ref{alg:dr-fm}} in \cref{app:algs}) on $n=1000$ samples from each design, and in all cases parameterize the velocity using a MLP. \vspace{-5pt}

\paragraph{Results} Table~\ref{tab:1dablation} shows estimated performance in the Wasserstein-1 metric (averaged over $a \in \{0,1\}$) from 50 trials. In the unimodal Gaussian setting, flowing from $\bb P_{Y|a}$ does not degrade performance when the $\mc N(0,I)$ base is well-suited. In contrast, for the heavy-tailed and split-support outcomes, 
we see a substantial improvement in flowing from $\bb P_{Y\mid a}$, as predicted in \cref{sec:geometry}. \vspace{-5pt}

\subsection{Counterfactual Distribution Estimation on Causal Benchmarks}\label{sec:exp:benchmark}  \vspace{-5pt}

\begin{wraptable}[12]{r}{0.60\columnwidth}
\vspace{-0.75em}
\centering
\setlength{\tabcolsep}{2pt}
\renewcommand{\arraystretch}{0.9}
\scriptsize
\resizebox{\linewidth}{!}{%
\begin{tabular}{l|ccc|c}
\toprule
\textbf{Dataset} $\rightarrow$
 & \textbf{ACIC}
 & \textbf{401k}
 & \textbf{TWINS}
 & \textbf{ACIC16} \\
\midrule
Outcomes $\rightarrow$
 & Gaussian & SplitSupport & HeavyTails
 & Challenge \\ 
\midrule

KDE
 & 0.14 $\pm$ 0.01
 & 0.50 $\pm$ 0.01
 & 0.28 $\pm$ 0.02
 & 0.73 $\pm$ 0.14 \\

TSE
 & 0.07 $\pm$ 0.02
 & 0.26 $\pm$ 0.06
 & 2.75 $\pm$ 1.82
 & 0.46 $\pm$ 0.38 \\

INF
 & 0.10 $\pm$ 0.03
 & 0.06 $\pm$ 0.07
 & 0.42 $\pm$ 0.05
 & 0.50 $\pm$ 0.11 \\

DRMKSD
 & 0.18 $\pm$ 0.01
 & 0.08 $\pm$ 0.04
 & 0.34 $\pm$ 0.07
 & 1.04 $\pm$ 0.68 \\

DoubleGen
 & 0.05 $\pm$ 0.02
 & 0.07 $\pm$ 0.01
 & 0.36 $\pm$ 0.14
 & 0.62 $\pm$ 0.24 \\

\midrule
\textbf{DecFM}
 & \textbf{0.05 $\pm$ 0.02}
 & \textbf{0.04 $\pm$ 0.01}
 & \textbf{0.25 $\pm$ 0.08}
 & \textbf{0.35 $\pm$ 0.15} \\
\bottomrule
\end{tabular}
}
\vspace{-8pt}
\caption{
Average $W_1$ error on causal benchmarks. Results are mean $\pm$ sd over 10 trials.
}
\label{tab:causal_benchmarks}
\vspace{-0.75em}
\end{wraptable}







\paragraph{Experiment}
We next compare deconfounding flows to a range of state-of-the-art debiased estimators for counterfactual distributions (see \cref{app:details:debiased,appx:baselines}), across causal benchmark datasets with different outcome properties. Specifically, we consider three datasets with real covariates \(X\) and treatment
\(A\): TWINS~\citep{louizos2017causal}, 401(k)~\citep{engen2000effects},
and ACIC 2019~\citep{acic2019challenge}. We synthetically generate outcomes
using different outcome mechanisms per each dataset (Gaussian, SplitSupport, HeavyTails) described in \cref{app:experiments}. We also consider 10 ACIC 2016 challenge
datasets~\citep{dorie2019automated}, where true potential outcomes are provided. \vspace{-5pt}

\paragraph{Results}
\cref{tab:causal_benchmarks} reports the mean $\pm$ standard deviation Wasserstein-1 distance between the learned and true counterfactual distributions, averaged over 10 random seeds for the synthetic outcome experiments and across the 10 ACIC 2016 challenge datasets. Overall, deconfounding flows (DecFM) achieve the best average performance across the considered designs, with competing methods occasionally matching but not exceeding its accuracy, and the largest gains occurring under more challenging outcome geometries. Note that DoubleGen \citep{luedtke2025doublegen} was implemented with flow-matching, and so enables us to isolate that in 3/4 cases, our performance improvement is a consequence of using \emph{deconfounding} flows specifically, rather than simply due to flow-matching outperforming other modeling approaches. \vspace{-5pt}






\begin{figure*}
    \centering
    \includegraphics[width=0.53\linewidth]{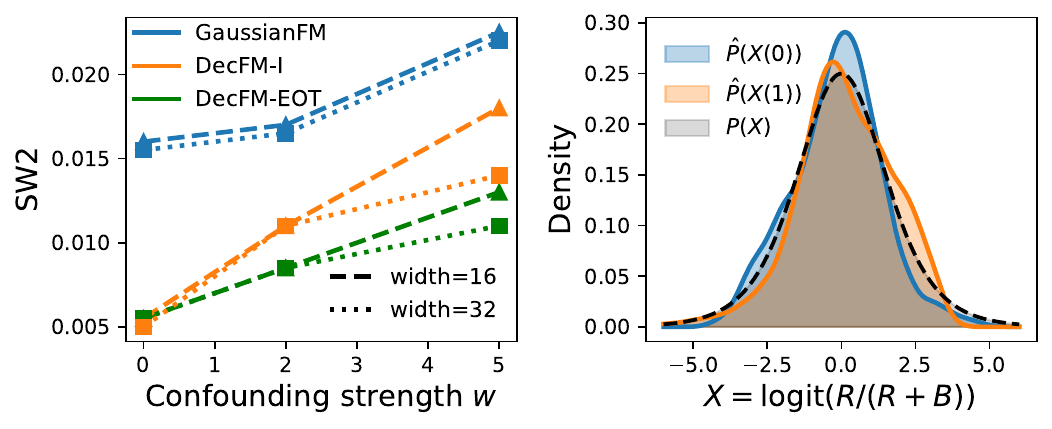}
    \hspace{2pt}
    \raisebox{10pt}{
    \includegraphics[width=0.44\linewidth]{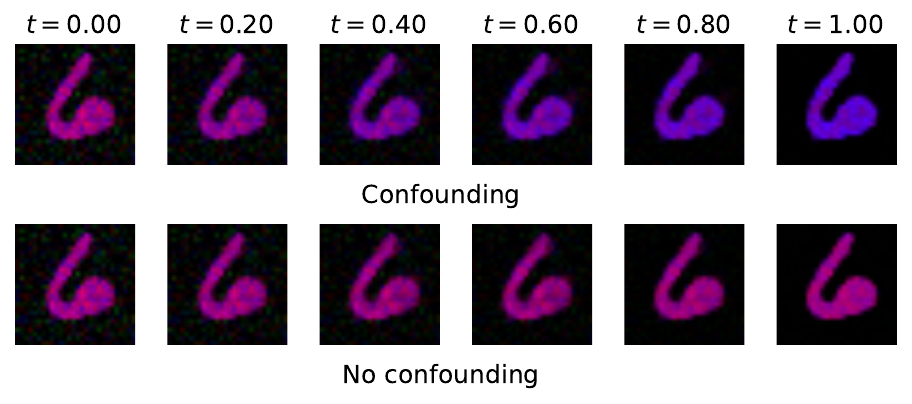}
    }
        \vspace{-20pt}
\caption{{\color{black}\textbf{ColorMNIST}}. \emph{Left:} Avg $SW_2(\bb P_{Y(a)}, \bb P_{\hat \theta_a})$ over 3 seeds vs.\ confounding
        strength for deconfounding flows (Ind + EOT coupling) and \ flow-matching from $\mc N(0,I)$, for different U-Net widths. \emph{Middle:} Learned color distributions $\hat {\bb P}_{X(0)}$ and $\hat {\bb P}_{X(1)}$ by DecFM-EOT vs. true $\bb P(X)$, under strongest confounding design ($w=5$).
        \emph{Right:} Same DecFM-EOT sample trajectory under high (top) and no (bottom) confounding.
        }    \label{fig:cmnist}
                \vspace{-10pt}
\end{figure*}

\subsection{Deconfounding Generation Biases in ColorMNIST}\label{sec:exp:mnist} \vspace{-5pt}

\paragraph{Experiment}
We next analyze deconfounding flows in a fully controlled image-deconfounding
task using ColorMNIST, where \(\bb P_{X,A,Y}\) is known by construction. We generate \(n=20{,}000\) images from two digit classes, \(A\in\{1,6\}\), with uniformly varying foreground colour
\(X=R/(R+B)\) along the red--blue axis. Confounding is induced by drawing \(A\) conditional on colour according to \(\bb P(A=6\mid X=x)=\sigma(w(x-1/2))\), where \(w\) is varied to control confounding strength. We aim to generate samples from the counterfactual distribution \(\bb P_{Y(a)}\), i.e. images with digit \(a\) under the color distribution \(\bb P_X\). We implement deconfounding flows with the independent coupling (DecFM-I) and EOT coupling (DecFM-EOT) using U-Net velocity fields of two widths \citep{ronneberger2015u} and compare against equivalent flows estimated from \(\mc N(0,I)\) with the usual independent coupling.
\vspace{-5pt}
\paragraph{Results}
\cref{fig:cmnist} (left) shows SW2 error against the true counterfactual
distribution (averaged over three seeds and both arms), for varying confounding strength and architecture. Both deconfounding flows
outperformed Gaussian-base flow matching across confounding strengths and U-Net widths, consistent with the fact that they start from images with the correct digit structure and only need to adjust colour. The EOT extension also improved performance overall, consistent with the simpler trajectories it induces (in \cref{app:details:ablation} we further investigate OT versus independent
coupling on a 2d synthetic problem). In the strongest-confounding setting, \cref{fig:cmnist} (middle) shows that pixel-estimated colour distributions are rebalanced toward \(\bb P_X\), while \cref{fig:cmnist} (right) shows that the learned flow preserves image fidelity and makes little modification when no confounding is present. 
\vspace{-5pt}

\subsection{Attribute Rebalancing on CelebA}  \vspace{-5pt}
\begin{figure}[t]
    \centering
\begin{minipage}[c]{0.49\linewidth}
    \centering
    \small
    \setlength{\tabcolsep}{4pt}
\begin{tabular}{ccccc}
\toprule
$a$ & Method & SW2 (base) & SW2 (target) \\
\midrule
\multirow{3}{*}{$0$}
    & Gaussian-FM & 0.575 $\pm$ 0.016 & 0.126 $\pm$ 0.004 \\
    & DecFM-I      & 0.066 $\pm$ 0.005 & 0.035 $\pm$ 0.001 \\
    & DecFM-EOT    & 0.066 $\pm$ 0.005 & 0.032 $\pm$ 0.002 \\
\midrule
\multirow{3}{*}{$1$}
    & Gaussian-FM & 0.547 $\pm$ 0.002 & 0.149 $\pm$ 0.007 \\
    & DecFM-I      & 0.059 $\pm$ 0.004 & 0.036 $\pm$ 0.007 \\
    & DecFM-EOT    & 0.059 $\pm$ 0.004 & 0.035 $\pm$ 0.004 \\
\bottomrule
\end{tabular}
\end{minipage}
    \begin{minipage}[c]{0.5\linewidth}
        \centering
        \includegraphics[width=\linewidth]{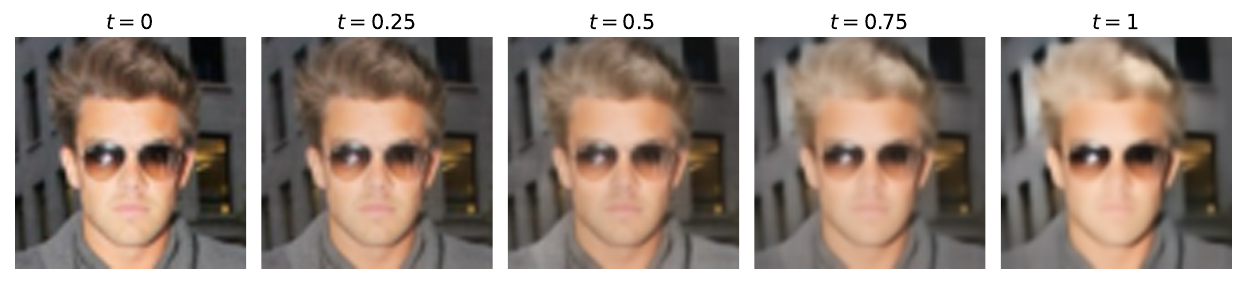}

        \vspace{2pt}

        \includegraphics[width=\linewidth]{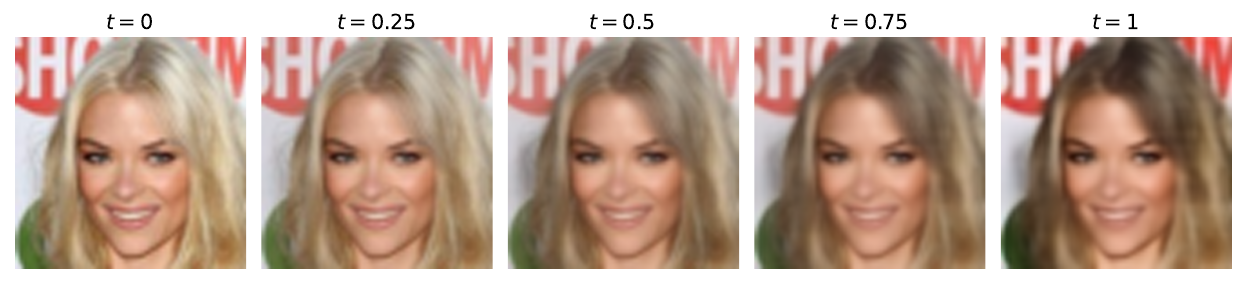}
    \end{minipage}
    \caption{
    {\color{black}\textbf{CelebA attribute rebalancing}}.
Left: Mean \(\pm\) SD Sliced Wasserstein--2 results over 3 seeds. SW2 (base) = distance from the flow source distribution to the
target distribution, SW2 (target) = distance after applying the learned flow.
Right: trajectories from DecFM-EOT.
    } \vspace{-15pt}
    \label{fig:celeba2}
\end{figure}
\paragraph{Experiment}
We evaluate deconfounding flows on CelebA as a realistic, high-dimensional
counterfactual image-rebalancing task, where the aim is to generate images with
semantic attribute \(A=a\) under a regime in which a correlated visual attribute
\(X\) follows its population distribution \(\bb P_X\), rather than its
group-specific distribution \(\bb P_{X\mid A=a}\).
We instantiate this problem on CelebA with \(A=\mathrm{Sex}\) and \(X=\mathrm{HairColor}\), using hair colour as a visually salient but comparatively low-stakes attribute for evaluation. We construct a training set with a strong association between \(A\) and \(X\) by sampling images without replacement from the four \((A,X)\) cells according to prescribed proportions (i.e., $\bb P(A=1|X=1) = 0.1, \bb P(A=1|X=0)=0.6$, $\bb P(X=1) = 0.3$), yielding \(n\approx20{,}000\) training images due to the limiting size of the smallest cell.  \vspace{-5pt}

\paragraph{{\color{black}Results}}
As \cref{fig:celeba2} shows, deconfounding flows substantially improve over Gaussian flow matching, with the EOT-coupled variant performing best. Before applying the flow, \(\bb P_{Y\mid A=a}\) is already a much better image-space source than Gaussian noise (SW2 (base)), despite having the wrong hair-colour mixture. Applying DecFM then further reduces SW2 by roughly \(40\)--\(50\%\), showing that the flow learns a meaningful rebalancing beyond simply starting from a favourable source. Qualitatively, the selected trajectories demonstrate structured hair-colour edits while preserving much of the original image content (see \cref{fig:celeba} for more examples).
\vspace{-7pt}

\section{Discussion of Impact, Limitations and Future Work} \label{sec:impact}  \vspace{-5pt}
\paragraph{Societal Impact}Beyond the methodological benefits established in this paper, deconfounding
flows may also have positive societal impact by helping mitigate harmful
generative biases, for example by rebalancing generated samples with respect to
sensitive or socially salient attributes.  As with all generative methods,
there is also potential for misuse, for example by producing misleading or
manipulated synthetic content, and so the usual safeguards, provenance tools, and deployment
constraints required for generative models remain necessary here. \vspace{-10pt}

\paragraph{Limitations and Future Work} Deconfounding flows inherit the usual limitations of observational causal
inference methods such as relying on no-unobserved confounding, positivity, and sufficiently accurate nuisance estimation. The benefits of deconfounding flows also depend on \(\bb P_{Y\mid A=a}\) being a useful geometric anchor for
\(\bb P_{Y(a)}\); despite shared support, tails, and closeness under structured
confounding, challenging transports may remain under weak overlap and/or highly
variable outcome mechanisms. Future work could develop diagnostics for this
regime, and also characterize
the extent to which distributional similarities hold in other causal regimes such as under the front-door criterion.

\bibliographystyle{apalike}
\bibliography{references}



\appendix

\onecolumn

\section{Additional Background}

\subsection{Background on Debiased Estimation of Counterfactual Distributions}\label{app:details:debiased}

\begin{table}[ht]
\centering
\resizebox{\textwidth}{!}{%
\begin{tabular}{lllll}
\toprule
Method & Model Class & Discrepancy $\mc D$
& Moment Function $g_\theta$
& Nuisance Outcome $\mu$ \\
\midrule
\citet{kennedy2023semiparametric}
& Exponential family
& KL divergence
& $-\partial_\theta \log p_\theta(y)$
& Score regression \\

\citet{kennedy2023semiparametric}
& Truncated series
& $L^2$ density error
& $\partial_\theta p_\theta(y)$
& Basis regression \\

\citet{melnychuk2023normalizing}
& Normalizing flows
& KL divergence
& $-\partial_\theta \log p_\theta(y)$
& Normalizing Flow \\

\citet{martinez2024counterfactual}
& Energy-based models
& Kernel Stein
& $\mc T_{p_\theta} f(y)$
& RKHS regression \\

\citet{luedtke2025doublegen}
& Continuous flows
& Flow matching
& $\bb E_{ty_t, y_0 , t \mid y}[\nabla_\theta v_\theta(y_t,t)^\top
\bigl(v_\theta(y_t,t)-u^\star(y_0,y)\bigr)]$
& Continuous flow \\

\bottomrule
\end{tabular}
}
\caption{
Different generative estimators of counterfactual distributions via debiased estimation of moment equations.}
\label{tab:existing_methods}
\end{table}

Recent literature has combined generative modeling with tools from semiparametric debiased estimation \cite{kennedy2024semiparametric}.
The starting point is to specify a generative model class
\[
\mathcal M := \{\mathbb P_{\theta_a} : \theta_a \in \Theta\},
\]
and define the target parameter $\theta_{a,0}$ as the projection of the
true distribution $\mathbb P_{Y(a)}$ onto this class via a discrepancy
$\mathcal D$:
\begin{equation}
\theta_{a,0} \in \arg\min_{\theta_a \in \Theta}
\mathcal D\left(\mathbb P_{Y(a)}, \mathbb P_{\theta_a}\right).
\label{eq:poprisk}
\end{equation}
By choosing $\mathcal D$ such that
\[
\arg\min_{\theta_a \in \Theta}
\mathcal D\left(\mathbb P_{Y(a)}, \mathbb P_{\theta_a}\right)
=
\arg\min_{\theta_a \in \Theta}
\mathbb E\left[\ell(Y(a); \theta_a)\right]
\]
for a suitable loss $\ell$, first-order optimality of $\theta_{a,0}$ yields a
population estimating equation
\[
m(\theta_{a,0})
:=
\mathbb E\left[g_{\theta_{a,0}}(Y(a))\right]
= 0,
\quad 
g_{\theta_a}(y) := \nabla_{\theta_a}\ell(y; \theta_a).
\]
For example, when $\mathcal D$ is the Kullback--Leibler divergence,
$g(\theta_a,y) = -\partial_{\theta_a}\log p_{\theta_a}(y)$,
where $p_{\theta_a}$ denotes the density of $\mathbb P_{\theta_a}$.
Table~\ref{tab:existing_methods} summarizes the model
classes, discrepancies, and induced moment functions used in the literature. We note that while the recent flow-matching and score matching estimators used in this framework are not gradients of divergences, they can still be characterized as first-order optimality conditions, giving rise to equivalent moment conditions. 

From here, one uses semiparametric theory to derive efficient estimators for $\theta_{a,0}$. The basic idea is to write $m_a(\theta_a)$ as a functional
of the observational distribution $\mathbb P:= \bb P_{Y,A,X}$, 
\[
m(\theta_a,\mathbb P)
=
\mathbb E_{\mathbb P}\left[
\mathbb E_{\mathbb P}\left[g_{\theta_a}(Y)\mid A=a,X\right]
\right],
\]

Assuming pathwise differentiability of $m$, one can take a first-order von Mises expansion
around estimator $\hat {\mathbb P}$,
\[
m(\theta_a,\mathbb P)
=
m(\theta_a,\hat{\mathbb P})
+
\int \varphi_a(o)\, d(\mathbb P-\hat{\mathbb P})(o)
+
R_2(\hat{\mathbb P},\mathbb P).
\]
Here $o:=(x,a,y)$, $\varphi_{\theta_a,a}$ is called the
\emph{efficient influence function} (EIF) of $m(\theta_a,\cdot)$ and $R_2$ is a second-order remainder term. The EIF is the unique, mean-zero function with minimum asymptotic variance that characterizes the first-order sensitivity of $m(\theta_a,\cdot)$ to perturbations of $\mathbb P$ \cite{kennedy2024semiparametric}. Operationally, it captures the leading (first-order) error incurred when approximating $m(\theta_a,\mathbb P)$ by $m(\theta_a,\hat{\mathbb P})$. In the present setting, the EIF takes the explicit augmented inverse propensity weighted (AIPW) form
\[
\varphi_a(X,A,Y)
=
\frac{\mathbf 1\{A=a\}}{\pi_a(X)}
\bigl(g(\theta_a,Y)-\mu_{\theta_a}(a,X)\bigr)
+
\mu_{\theta_a}(a,X)
-
m_a(\theta_a),
\]
where $\pi_a(X)=\mathbb P(A=a\mid X)$ and
$\mu_{\theta_a}(a,X)=\mathbb E[g(\theta_a,Y)\mid A=a,X]$ are called `nuisance models'.

Using this expansion, one constructs a `debiased' estimator for $m(\theta_a,\bb P)$ in two steps. (i) form the plug-in estimator
\begin{align*}
\hat m_{\mathrm{PI}}(\theta_a)
:=
\frac{1}{n}\sum_{i=1}^n \hat\mu_{\theta_a}(a,X_i),
\end{align*}
using an estimator $\hat\mu_{\theta_a}$ for
$\mu_{\theta_a}$, and (ii) remove its first-order bias by adding an empirical approximation of the
EIF term, using the nuisance estimators $\hat \mu_{\theta_a}$ and $\hat \pi_a$. This yields the so-called ``one-step' or ``doubly-robust'' estimator
\begin{align}
\hat m_{\mathrm{DR}}(\theta_a)
:=
\hat m_{\mathrm{PI}}(\theta_a)
+
\frac{1}{n}\sum_{i=1}^n \hat\varphi_{\theta_a,a}(X_i,A_i,Y_i) \label{eq:onestep}
\end{align}
In practice, one finds the target parameter that solves $\hat m_{\mathrm{DR}}(\theta_a)=0$ either in closed form (when available) \cite{kennedy2023semiparametric}, or by minimizing an antiderivative
of $\hat m_{\mathrm{DR}}$ \cite{melnychuk2023normalizing, martinez2024counterfactual,luedtke2025doublegen}. For statistical efficiency reasons, one usually trains $\hat \mu$ and $\hat \pi$ on a separate fold of data than $(X_i,A_i,Y_i)_{i=1}^n$, or uses ``cross-fitting'' estimators which combine multiple sample split estimators to make use of the full data for nuisance training and moment evaluation. 

\paragraph{Fast Rates, Efficiency, and Double Robustness of the Debiased Estimator}
The naive plug-in estimator
$\hat m_{\mathrm{PI}}(\theta_a)=n^{-1}\sum_i \hat\mu_{\theta_a}(a,X_i)$
inherits its convergence rate directly from the nuisance estimator
$\hat\mu_{\theta_a}$, and is therefore typically limited by the accuracy with which
$\mu_{\theta_a}$ can be estimated.

In contrast, the estimation error of the debiased (or doubly-robust) estimator
$\hat m_{\mathrm{DR}}(\theta_a)$ admits an expansion with a second-order remainder,
whose leading term depends on the \emph{product} of the estimation errors of the
nuisance models $\hat\mu_{\theta_a}$ and $\hat\pi_a$. As a consequence, under standard
regularity conditions (e.g.\ Donsker classes) or under cross-fitting,
\[
\hat m_{\mathrm{DR}}(\theta_a) - m_a(\theta_a)
= \mc O_p\left(
\|\hat\mu_{\theta_a}-\mu_{\theta_a}\|\,
\|\hat\pi_a-\pi_a\|
\right),
\]
up to empirical process terms of order $n^{-1/2}$.

This structure yields two important properties.
First, if both nuisance estimators converge at rates faster than
$\mc O(n^{-1/4})$, the remainder term is $o_p(n^{-1/2})$, and the resulting estimator
is known to achieve $\sqrt n$-consistency and semiparametric efficiency \cite{kennedy2024semiparametric}.
Second, the estimator is \emph{doubly robust} in the sense that consistency of
$\hat m_{\mathrm{DR}}(\theta_a)$ only requires one of the nuisance models
$\hat\mu_{\theta_a}$ or $\hat\pi_a$ to be consistently estimated.

\subsection{Additional Ablation of Independent vs OT Coupling}\label{app:details:ablation}

\begin{wrapfigure}[15]{r}{0.52\linewidth}
    \vspace{-8pt}
    \centering
    \includegraphics[width=\linewidth]{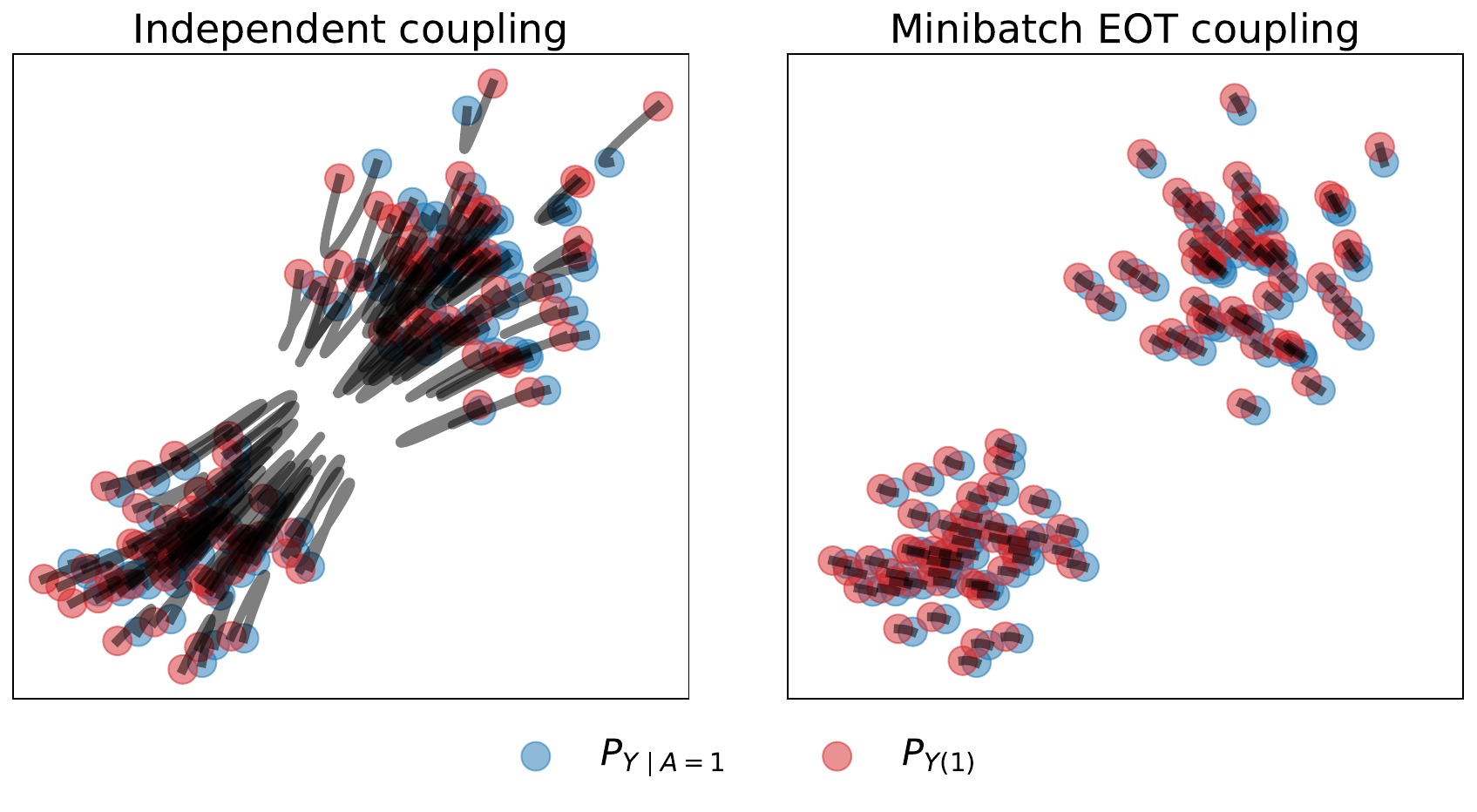}
    \vspace{-12pt}
    \caption{
    Deconfounding flow trajectories learned using independent coupling estimator (left)
    and minibatch EOT coupling estimator (right) for a mixture-of-Gaussians
    outcome under weak confounding.
    }
    \label{fig:ot_trajs}
    \vspace{-12pt}
\end{wrapfigure}

Here we examine the relative performance of our independent and Minibatch EOT estimators for deconfounding flows, in a two-dimensional synthetic design where we can control both the confounding strength and outcome multimodality. Covariates are drawn as
$X \sim \mathcal N(0, I_2)$,
treatment is assigned according to
$A \sim \mathrm{Bernoulli}(\sigma(\omega^\top X))$,
and outcomes are generated as
$Y = X + \xi$, where  $\xi \sim \tfrac 12 \mc N(0,\sigma^2) + \tfrac 12 \mc N(\mu,\sigma^2)$ for varying $\mu \in \{0,5\}$. Note that while there is no causal effect of $A$ on $Y$, learning the counterfactual distribution $\mathbb P_{Y(a)}$ still requires deconfounding since
$\mathbb P_X \neq \mathbb P_{X \mid a}$.

\paragraph{Results}
\cref{fig:coupling_ablation} shows the effect of the coupling choice on distributional accuracy (SW2, sliced Wasserstein-2) across different confounding strength ($\|\omega\|$), noise settings, and velocity architectures (from 10 trials of $n=1000$ observational samples per trial). In the unimodal Gaussian setting, differences between
coupling strategies are relatively modest once a sufficiently expressive MLP model is
used for the velocity. However, the
OT coupling is more robust to model capacity: even a much smaller architecture
(width=4) achieves performance comparable to wider models (width=64), suggesting that OT
induces simpler transport paths that are easier to fit by the estimator.

In contrast, for the multimodal mixture-of-Gaussians outcome, the independent coupling
exhibits a pronounced degradation in performance compared to the OT coupling, and
this gap is not fully addressed by using the larger capacity model.
\begin{wrapfigure}{r}{0.6\linewidth}
    \vspace{-8pt}
    \centering
    \includegraphics[width=\linewidth]{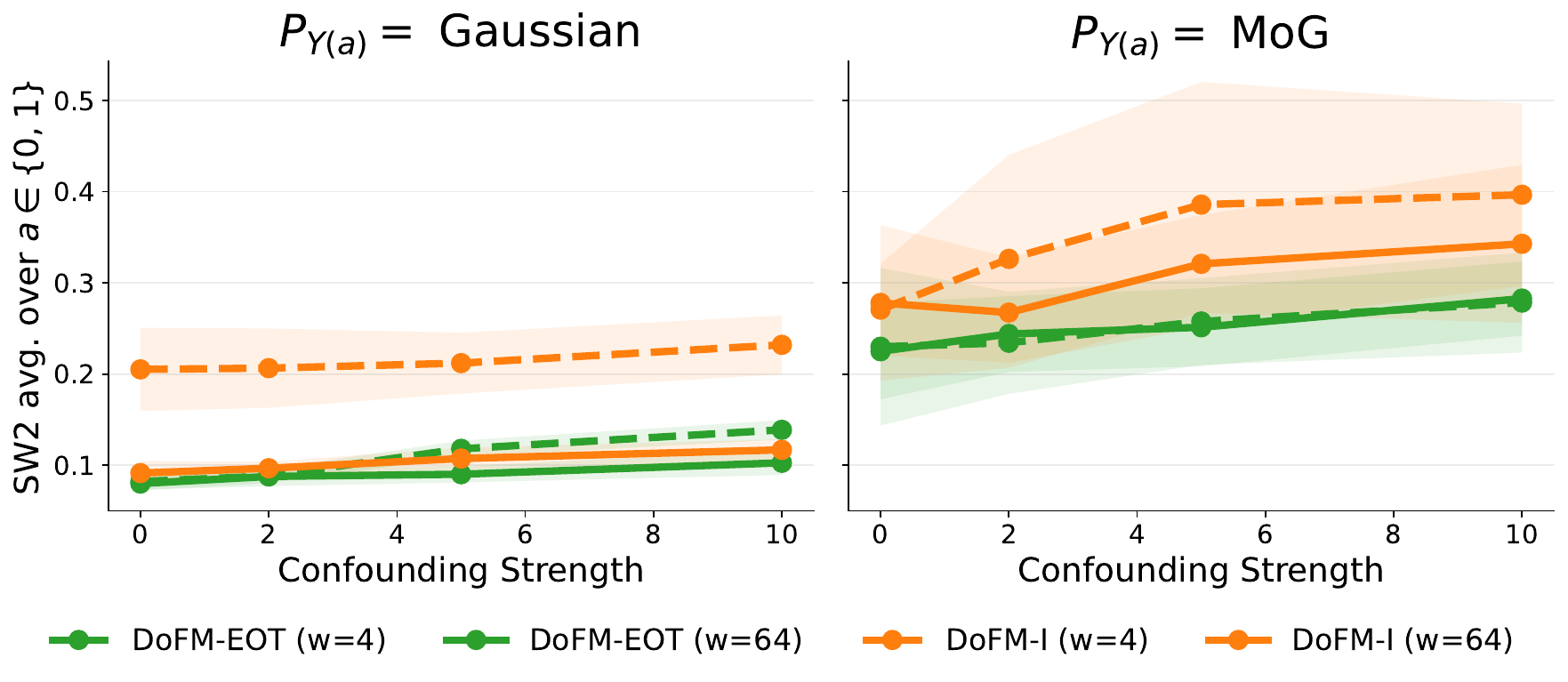}
    \vspace{-12pt}
    \caption{
    Coupling ablation. SW$_2$ error (mean $\pm$ sd) as a function of
    confounding strength for Gaussian (left) and multimodal (right) outcomes. $W = $ Velocity MLP width.
    }
    \label{fig:coupling_ablation}
    \vspace{-12pt}
\end{wrapfigure}This agrees with the
trajectory visualizations in \cref{fig:ot_trajs}, which show the independent coupling produces
    highly curved transport paths, a consequence of its known trajectory bias toward the mean \citep{guan2025mirror}. In contrast, the OT coupling produces straight paths, which are therefore much easier to fit by the estimator. Overall, this experiment highlights clear benefits of directly
targeting minimal-energy deconfounding flows.  \newline

\section{Experiment and Implementation Details} \label{app:experiments}

\paragraph{Compute resources.}
Non-image experiments were run on a single-GPU and
took minutes per seed. Image experiments were run as single-GPU jobs on an HPC
cluster; compatible GPU types available on the cluster included NVIDIA RTX
4500/A4500-class GPUs, RTX 4000 Ada GPUs, A100s, L40S GPUs, H100s, and H200s.
ColorMNIST and CelebA runs required several GPU-hours per seed depending on
U-Net width and whether minibatch EOT was used. The reported experiments did not
require multi-node or large-scale distributed training.

\paragraph{Existing assets and licenses.}
All datasets used in this paper are existing research benchmarks and are used
only for research evaluation. ColorMNIST is derived from MNIST, whose dataset
license is Creative Commons Attribution--ShareAlike 3.0. CelebA is used under
its non-commercial research terms; we do not redistribute CelebA images, derived
image data, or pretrained image generators.{\color{black} We also use ACIC challenge data~\citep{dorie2019automated,acic2019challenge},
401(k)~\citep{engen2000effects}, and TWINS~\citep{louizos2017causal}
through their standard public benchmark constructions.} For benchmark distributions where no
explicit dataset license is provided by the source distribution, we follow the
published research-use conventions, credit the original sources, and do not
redistribute the underlying datasets.

\subsection{Implementation Details}

\subsubsection{(DecFM) Deconfounding Flows}

In this section, we provide the details on implementation of our method. 

\paragraph{Plug-in estimation of $\hat{\bb P}_{Y\mid X,A}$ via conditional flow matching.}

As a nuisance component of our method, we estimate the conditional outcome
distribution $\bb P_{Y\mid X,A}$ using a plug-in conditional flow-matching model.
Specifically, we learn a conditional transport that maps the empirical
treatment-conditional distribution $\hat{\bb P}_{Y\mid A=a}$ to
$\bb P_{Y\mid X=x,A=a}$, for $(x,a)\in\mc X\times\mc A$.

Concretely, we parameterize a \emph{single} time-dependent velocity field
\[
v_\eta:\mc Y\times[0,1]\times\mc X\times\mc A\to\mc Y,
\]
which takes the treatment indicator $A$ as part of the conditioning context, and
define the neural ODE
\[
\frac{d}{dt}y_t = v_\eta(y_t,t,x,a), \qquad y_0\sim \hat{\bb P}_{Y\mid A=a}.
\]
For fixed $(x,a)$, the corresponding time-$1$ flow map induces a conditional
distribution $\hat{\bb P}_{Y\mid X=x,A=a}$.

Training is performed using standard \emph{conditional flow matching}
\cite{lipman2023flow}. That is, given observational samples
$\{(X_i,A_i,Y_i)\}_{i=1}^n$, we optimize the plug-in conditional flow-matching
objective
\[
\mathcal L^{\mathrm{plug\text{-}in}}(\eta)
\;=\;
\hat{\bb E}\!\left[
\bigl\|v_\eta(Y_t,t,X,A)-u^\star\bigr\|^2
\right],
\]
where the empirical expectation is taken over minibatches constructed as
follows. For each observed datapoint $(X_i,A_i,Y_i)$ in a minibatch, we draw an
independent base sample by resampling an outcome from the same treatment arm as $Y_i$
\[
\tilde Y_i \sim \hat{\bb P}_{Y\mid A=A_i}\,.
\]
We then
sample an interpolation time $t\sim\mathrm{Unif}(0,1)$ and form the linear
interpolation
\[
Y_{t,i} := (1-t)\tilde Y_i + tY_i,
\qquad
u_i^\star := Y_i-\tilde Y_i .
\]
The procedure is summarized in Algorithm~\ref{alg:plugin-cfm}.

After training, samples $\hat Y\sim\hat{\bb P}_{Y\mid X=x,A=a}$ are obtained by
drawing $\tilde Y\sim\hat{\bb P}_{Y\mid A=a}$ and numerically integrating the ODE
from $t=0$ to $1$ under context $(x,a)$. This conditional model is used solely as a
plug-in nuisance estimator within our debiased and doubly robust estimator for the deconfounding flow.

\paragraph{Propensity Score Estimation} In all experiments except CelebA, propensity scores $\pi_a(x)=\bb P(A=a\mid X=x)$ are estimated
using random forests. We fix the minimum number of samples per leaf node to one,
and select the maximum tree depth by cross--validation on the training data, from the grid $[1,3,5,10]$. In CelebA, as $X \in \{0,1\}$ is binary, we just use the empirical estimator of the propensity score.

\paragraph{Sample Splitting for Nuisance Models}
The semiparametric efficiency results in \cref{sec:est:debiased} require that nuisance
models (e.g., propensity scores and conditional outcome models) be trained on
data that are independent of the samples used in the empirical averages
defining the estimating equations. This can be achieved by sample splitting,
where the dataset is partitioned into disjoint folds, with nuisance models
trained on one fold and evaluated on another.

In practice, this requirement can be relaxed using standard cross-fitting
techniques. In the present setting, cross-fitting amounts to pre-estimating a
collection of nuisance models
$\{\hat m^{(k)}\}_{k=1}^K$, each trained on a different data fold, and evaluating
each sample using the model corresponding to the fold in which that sample was
\emph{not} used for training. During stochastic optimization with minibatches,
each data point is passed through its associated nuisance model according to
its fold assignment.

In our experiments, we follow this principle as follows. In the outcome geometry ablation we train nuisance models on a separate hold-out
sample of size $n$ and fix them throughout training, ensuring independence
between nuisance estimation and the empirical estimating equations. In the causal benchmarks, we use two-fold cross-fitting. In the
image-based experiments, we do not apply sample splitting or cross-fitting in
order to reduce computational cost and maximize sample efficiency. As a result, the corresponding results
should be interpreted as empirical performance evaluations rather than
guarantees of semiparametric efficiency.

\paragraph{Parameter Sharing Across Treatment Arms}
For notational clarity, the theoretical development in the main text treats the
deconfounding flow and its associated velocity field separately for each
treatment level $a \in \{0,1\}$. In practice, however, we implement a single
parameterized velocity field
\[
v_\theta : \mathcal Y \times [0,1]\times \{0,1\}\to \mathcal Y,
\]
which takes the treatment indicator $a \in \{0,1\}$ as an additional input and is shared
across treatment arms. The arm-specific velocities are then recovered as
$v_{\theta,a}(y,t) := v_\theta(y,t,a)$.

This parameter-sharing strategy reduces variance and computational cost by
allowing statistical strength to be shared across arms, and is natural in
settings where the geometric structure of the outcome space is largely common
across treatments. Importantly, this modeling choice does not restrict
expressiveness: since $a$ is provided as an explicit input, the shared network
can represent distinct velocity fields for each treatment level. Our efficiency
results apply to each arm-specific velocity $v_{\theta,a}$, and parameter
sharing should therefore be viewed as a practical architectural choice rather
than an additional modeling assumption.

\paragraph{Details on Minibatch EOT Implementation}

Recall in \cref{sec:ot} our debiased gradient estimator is written in
terms of a conditional entropic coupling
\(\gamma^{\varepsilon}_{0\mid 1}(\cdot \mid y)\),
which defines, for each query point \(y\), a conditional distribution over base
samples \(\tilde y\):
\begin{align*}
&  \hat m(\theta_a):=
\nonumber \\
&\frac{1}{n}\!\sum_{i=1}^n\!
\Bigg[\!
\frac{\bm 1\{A_i=a\}}{\hat\pi_a(X_i)}\!\!
\int \! r_{\theta_a}(Y_i,\tilde y)\,
 \gamma_{0 \mid 1}(d\tilde y |Y_i)+
\Bigl(\!1-\frac{\bm 1\{A_i=a\}}{\hat\pi_a(X_i)}\!\Bigr) \!
\!\iint \!r_{\theta_a}(\hat y,\tilde y)\, \gamma_{0 \mid 1}(d\tilde y |\hat y)
\hat{\bb P}_{Y\mid a,x}(d\hat y)\!
\Bigg]
\end{align*}
Here we briefly show how we derived this estimator and  describe how the conditional coupling is
approximated in practice using minibatch entropic optimal transport, at each iteration of gradient descent.

\emph{Derivation}
Fix $a\in\{0,1\}$ and a (possibly entropic) coupling $\gamma^\varepsilon$ between
$\bb P_{Y\mid A=a}$ (base) and $\bb P_{Y(a)}$ (target), and write its disintegration as
$\gamma^\varepsilon(d\tilde y,dy)=\gamma^\varepsilon_{0\mid 1}(d\tilde y\mid y)\,\bb P_{Y(a)}(dy)$.
Using the per-pair score $r_{\theta_a}(\cdot,\cdot)$ defined in \cref{sec:est:debiased}, define
\[
g^{\gamma}_{\theta_a}(y)
:=\int r_{\theta_a}(y,\tilde y)\,\gamma^\varepsilon_{0\mid 1}(d\tilde y\mid y),
\qquad
m(\theta_a):=\bb E\!\left[g^{\gamma}_{\theta_a}(Y(a))\right].
\]
Under the definition of $\bb P_{Y(a)}$ in \eqref{eq:int_dist}, $m(\theta_a)$ admits the representation
$m(\theta_a)=\bb E\!\left[\mu_{\theta_a}(X)\right]$ with
$\mu_{\theta_a}(x):=\bb E\!\left[g^{\gamma}_{\theta_a}(Y)\mid A=a,X=x\right]$.
Treating $\gamma^\varepsilon_{0\mid 1}$ as fixed, the efficient influence function for
$m(\theta_a)$ (as a functional of the observational law) is the standard AIPW form
\[
\phi^{\gamma}_{\theta_a,a}(X,A,Y)
=
\frac{\bm 1\{A=a\}}{\pi_a(X)}
\Big(g^{\gamma}_{\theta_a}(Y)-\mu_{\theta_a}(X)\Big)
+\mu_{\theta_a}(X)-m(\theta_a).
\]
From here,  replacing
$\pi_a$ and $\mu_{\theta_a}$ with nuisance estimators and empirically averaging
$\phi^{\gamma}_{\theta_a,a}$ recovers the gradient estimator above.

\emph{Practical Approximation} Fix a treatment arm \(a\) and a minibatch of size \(B\). Let
\(\mc S := \{\hat Y_{i,k}\}_{i=1,k=1}^{B,M}\) denote plug-in samples drawn from
\(\hat{\bb P}_{Y\mid X_i,A=a}\), and let
\(\mc T := \{\tilde Y_j\}_{j=1}^B\) denote base samples drawn from
\(\hat{\bb P}_{Y\mid A=a}\).

For each set of such minibatches drawn during an iteration of gradient descent, we estimate the entropic OT coupling between \(\mc S\) and \(\mc T\) with quadratic
cost \(c(y,y')=\|y-y'\|^2\) and regularization parameter \(\varepsilon>0\), by running $K$ iterations of Sinkhorn's algorithm. The
resulting coupling has the form
\[
\hat\gamma^{\varepsilon}(\ell,j)
\;\propto\;
a_\ell
\exp\!\left(
-\frac{\|\hat Y_\ell-\tilde Y_j\|^2}{\varepsilon}
+ v_j
\right),
\]
where \(\{v_j\}_{j=1}^B\) are the target dual potentials obtained by Sinkhorn
iterations, and \(\{a_\ell\}\) are source weights defined below. In practice, we
compute only the target dual potentials \(v_j\), which suffice to define the
conditional distribution.

Given the dual potentials, the conditional coupling appearing in the loss is
approximated via the standard entropic out-of-sample extension
\[
\hat\gamma^{\varepsilon}_{0\mid 1}(\tilde Y_j \mid y)
\;\propto\;
\exp\!\left(
v_j - \frac{\|y-\tilde Y_j\|^2}{\varepsilon}
\right),
\]
from which a single base draw \(\tilde Y\) is (re)-sampled for each query point \(y\). Note that the out-of-sample extension is tractable because we only use the
extension in the \emph{conditioning direction}. In particular, the dual
potentials are never required for new conditioning points \(y\), and by
restricting the left-hand side of
\(\hat\gamma^{\varepsilon}_{0\mid 1}(\tilde Y_j \mid y)\) to base samples
\(\{\tilde Y_j\}\) that were used to construct the minibatch coupling, the
conditional distribution depends only on the target duals \(\{v_j\}\), which
are already available from the Sinkhorn iterations.

As a result, the conditional coupling can be viewed as inducing a
data-dependent \emph{re-ordering} of the base minibatch, where the effective
pairing between \(\tilde Y_j\) and a query \(y\) depends on the term of the loss
being evaluated. This procedure is applied both to observed outcomes \(Y_i\)
and to plug-in samples \(\hat Y_{i,k}\) when evaluating the inner functional
\(
h_\theta(y)=\bb E_{\tilde Y\mid y}[\ell_\theta(y,\tilde Y)].
\)

{\color{black}\emph{Exponential tilting.}} When using minibatch entropic optimal transport, the coupling $\gamma^\varepsilon$
is computed from samples drawn from a plug--in estimate of the counterfactual
distribution $\hat{\bb P}_{Y(a)}$, which in the above implementation is constructed via
a nuisance model for $\bb P_{Y\mid A,X}$. If this plug--in model is misspecified,
the induced empirical coupling may allocate mass to regions of $\mathcal Y$ that
are incompatible with the moment condition defining $m(\theta_a)$. In particular,
because the coupling itself now depends on $\hat{\bb P}_{Y(a)}$, the usual
double--robustness property of the debiased estimator—consistency under correct
specification of either the propensity or outcome model—no longer holds in
general once the coupling is treated as data--dependent.

To reduce sensitivity to this dependence, one can optionally apply an exponential tilting (or
exponential reweighting) scheme to a fixed reservoir of plug--in samples
$\{\hat y_j\}_{j=1}^M \sim \hat{\bb P}_{Y(a)}$ prior to computing the entropic
coupling. Specifically, we define a tilted empirical measure
\[
\hat{\bb P}^{\mathrm{tilt}}_{Y(a)} := \sum_{j=1}^M w_j\,\delta_{\hat y_j},
\qquad
w_j(\lambda)
:= \frac{\exp\{\lambda^\top s(\hat y_j)\}}{\sum_{\ell=1}^M \exp\{\lambda^\top s(\hat y_\ell)\}},
\]
where $s(\cdot)$ denotes a chosen collection of moment features. The tilt
parameter $\lambda$ is selected once, prior to flow training, so that the
weighted empirical moments of the reservoir approximately match the
corresponding moments estimated via the doubly robust estimating equations,
\[
\sum_{j=1}^M w_j(\lambda)\, s(\hat y_j)
\;\approx\;
\hat{\bb E}_{\mathrm{DR}}\!\left[s(Y(a))\right].
\]
The resulting tilted empirical distribution is then used as the target marginal
when computing the minibatch entropic coupling $\gamma^\varepsilon$. Algorithm~\ref{alg:exp-tilt} in \cref{app:algs} summarizes the procedure. In an additional coupling ablation experiment we ran in \cref{app:details:ablation}, we choose $m=128$ random moments  $s(y) = \cos(W y + b)$, where the rows of $W \in \bb R^{128 \times \dim(Y)}$
are drawn independently from a mean--zero Gaussian distribution and
$b \in [0,2\pi]^{128}$ is drawn uniformly at random). We do not apply exponential tilting on ColorMNIST/CelebA for computational reasons.

\paragraph{Architectural and Training Details}
We distinguish between non--image and image--based experiments.

\emph{Non--image based Architectures.}
For all non--image experiments, both the plug--in conditional flow and the
deconfounding (target) flow use identical velocity network architectures unless
explicitly varied in ablations. Each velocity field is parameterized by a
multilayer perceptron with one hidden layer of width $64$, using SiLU
activations. The time variable $t$ is concatenated directly to the input, and no
positional encoding is used in low--dimensional settings.

Models are trained for $1000$ epochs using minibatches of size $512$ and the Adam
optimizer with learning rate $10^{-3}$ and default hyperparameters. No learning
rate schedules or additional regularization are applied unless stated otherwise.
For experiments involving minibatch entropic optimal transport, we use $50$
Sinkhorn iterations per optimization step and a fixed entropic regularization
parameter $\varepsilon=0.1$.

\emph{Image-based architectures.}
For image-based experiments, velocity fields are parameterized using a U-Net
architecture with skip connections. The base channel width is denoted by \(c\).

The plug--in conditional flow used to estimate \(\hat{\bb P}_{Y\mid X,A}\)
uses a U-Net that conditions on covariates \(X\), the treatment indicator \(A\),
and time \(t\in[0,1]\). The encoder consists of four convolutional blocks with
channel dimensions \((c,2c,4c,8c)\), where each block applies two \(3\times3\)
convolutions with SiLU activations, and downsampling is performed using
\(2\times2\) max pooling between blocks. The decoder mirrors the encoder using
transposed convolutions for upsampling. At each decoder stage, the upsampled
feature map is concatenated with the corresponding encoder skip connection and a
time embedding, resulting in channel dimensions \((12c,6c,3c)\) at successive
resolutions. A final \(1\times1\) convolution maps the decoder output to the
velocity field.

Conditioning on \(A\) is implemented via class-conditional normalization layers,
while conditioning on \(X\) is incorporated through feature-wise linear
modulation (FiLM) layers~\citep{perez2018film} applied to intermediate feature
maps. Time is encoded using sinusoidal embeddings followed by learned linear
projections, with separate embeddings of dimensions \((c,2c,4c)\) injected at
the corresponding decoder resolutions. The resulting plug--in velocity field has
the form
\[
v_\eta:\mc Y\times[0,1]\times\mc X\times\mc A\to\mc Y.
\]

The deconfounding target flow uses the same U-Net backbone and time-conditioning
scheme, but omits conditioning on \(X\). In our shared-arm implementation, the
target model conditions only on the treatment indicator \(A\) and time \(t\),
resulting in a velocity field of the form
\[
v_\theta:\mc Y\times[0,1]\times\mc A\to\mc Y.
\]

Across all image-based experiments, we use SiLU activations throughout. In ColorMNIST we vary
the base channel width $c\in\{8,16,32\}$ as part of the experimental design, in CelebA we use fixed $c = 64$.
Models are trained using the Adam optimizer with learning rate $10^{-4}$ and
minibatches of size $128$ for ColorMNIST and $64$ for CelebA. For entropic optimal transport, we use $20$  Sinkhorn
iterations per optimization step for ColorMNIST and 10 for CelebA, with the entropic regularization parameter
$\varepsilon$ set to $0.1$ times the empirical standard deviation of the
minibatch cost matrix. We ran on ColorMNIST for 1000 epochs, and CelebA for 500 epochs. To avoid overfitting, during training we inject the base samples from $\bb P_{Y|A=a}$ with $\mc N(0,\sigma^2)$ Gaussian noise with $\sigma = 0.1$.

\subsubsection{Baselines} \label{appx:baselines}

\paragraph{KDE (AIPW density estimator) \cite{kim2018causal}.}
We implement the augmented inverse--probability--weighted (AIPW)  kernel density estimator as follows. In particular, let $K_h(u)=h^{-1}K(u/h)$ denote the Gaussian kernel with bandwidth $h$. The doubly robust
(AIPW) estimator of the smoothed counterfactual density $p_{Y(a)}$ at $y$ is
\[
\hat p^{\mathrm{AIPW}}_{Y(a)}(y)
=
\frac{1}{n}\sum_{i=1}^n
\Bigg[
\frac{\bm 1\{A_i=a\}}{\hat\pi_a(X_i)}
\Bigl\{ K_h(Y_i-y)-\hat m_a(y\mid X_i) \Bigr\}
+
\hat m_a(y\mid X_i)
\Bigg],
\]
where $\hat\pi_a(x)$ is the estimated propensity score and
\[
\hat m_a(y\mid x)
:=
\bb E_{\hat p_{Y\mid X=x,A=a}}\!\left[K_h(Y-y)\right]
\]
is a plug--in estimate of the conditional smoothed density obtained from the
outcome regression model. We use the following implementation choices and hyperparameters: (i) The kernel bandwidth is selected using the median heuristic, separately within each treatment arm. (ii) Outcome regression and propensity score models are estimated using random forests, trained identically to DecFM, including cross--validation over maximum tree depth. (iii) Two--fold cross--fitting is used to ensure independence between nuisance estimation and evaluation. (iv) Wasserstein--1 distances are computed by numerical integration of the estimated density.

\paragraph{TSE (Truncated Series Estimator) \cite{kennedy2023semiparametric}.}
We implement the Truncated series estimator (TSE) for counterfactual density
estimation, following the doubly robust construction described in
\citet{kennedy2023semiparametric}. The counterfactual density is approximated by
expanding it in an orthonormal cosine basis on a bounded interval.

Let $[y_{\min},y_{\max}]$ denote the maximum and minimum outcome in the observational set, and define the scaled
outcome
\[
\tilde Y := \frac{\min\{\max\{Y,y_{\min}\},y_{\max}\}-y_{\min}}{y_{\max}-y_{\min}} \in [0,1].
\]
We use the cosine basis
\[
\phi_j(\tilde y) = \sqrt{2}\cos(\pi j \tilde y),
\qquad j=1,\dots,d,
\]
with $d=20$ basis functions. The counterfactual density is represented as
\[
p_{Y(a)}(y)
\;\approx\;
\sum_{j=1}^d \theta_{a,j}\,\phi_j(\tilde y),
\]
where the coefficients $\theta_{a,j} = \bb E[\phi_j(\tilde Y(a))]$ are estimated
using a doubly robust estimating equation.

Specifically, for each basis coefficient we use the AIPW estimator
\[
\hat\theta_{a,j}
=
\frac{1}{n}\sum_{i=1}^n
\Bigg[
\frac{\bm 1\{A_i=a\}}{\hat\pi_a(X_i)}
\Bigl(\phi_j(\tilde Y_i)-\hat\mu_{a,j}(X_i)\Bigr)
+
\hat\mu_{a,j}(X_i)
\Bigg],
\]
where $\hat\pi_a$ is the estimated propensity score and
$\hat\mu_{a,j}(x)=\bb E[\phi_j(\tilde Y)\mid X=x,A=a]$ is estimated via outcome
regression.

The estimated density $\hat p_{Y(a)}$ is obtained by combining the estimated
coefficients $\{\hat\theta_{a,j}\}_{j=1}^d$. Wasserstein--1 distances are computed
by numerical integration of the resulting density.

In all experiments, propensity scores and outcome regressions are estimated using
random forests with the same hyperparameter grid as DecFM, and two--fold
cross--fitting is used to ensure independence between nuisance estimation and
evaluation.

\paragraph{Interventional Normalizing Flows (INFs) \cite{melnychuk2023normalizing}.}
We implement Interventional Normalizing Flows following
\citet{melnychuk2023normalizing}, which directly model the counterfactual
distribution $\bb P_{Y(a)}$ via a normalizing flow trained using a
KL--divergence–based estimating objective.

For each treatment arm $a\in\{0,1\}$, the target interventional distribution is
parameterized by a neural autoregressive flow and trained by minimizing an
empirical KL divergence between the model density and the distribution implied by
the identifying moment condition in \citet{melnychuk2023normalizing}. The target
flow architecture is fixed across experiments and consists of a fully connected
network with a single hidden layer of width $64$.

INFs additionally employ nuisance models for treatment assignment and outcome
modeling. These are implemented as two fully connected networks, where the
propensity network produces a $9$--dimensional embedding of the covariates that
is used as input to the outcome model. The architectures of these nuisance
networks are selected by cross--validation over hidden layer widths
$\{16,32,64\}$ and depths $\{1,2\}$.

In line with the original INFs methodology, we do not apply sample splitting or
cross--fitting. For evaluation, we draw $10{,}000$ samples from each fitted
interventional model and compute Wasserstein--1 distances empirically using these
samples.

\paragraph{DR-MKSD \cite{martinez2024counterfactual}.}
We implement the doubly robust kernel Stein discrepancy (DR-MKSD) estimator of
\citet{martinez2024counterfactual}, which estimates the counterfactual distribution
$\bb P_{Y(a)}$ via an energy-based model trained to satisfy a Stein moment
condition.

The target distribution is parameterized as an energy-based model (EBM) with
energy function given by a fully connected neural network with a single hidden
layer of width $64$ and SiLU activations. Model parameters are estimated by
minimizing the empirical kernel Stein discrepancy. Optimization details are identical to DecFM.

The estimator relies on nuisance models for both the propensity score and the
conditional outcome. Propensity scores are estimated using random forests,
trained identically to those used for DecFM, including cross-validation over tree
depth. The outcome regression is estimated using kernel ridge regression in an
RKHS, with the regularization parameter selected by cross-validation over a grid
of $10$ values logarithmically spaced between $10^{-3}$ and $10^{-1}$.

We use two-fold cross-fitting to ensure independence between nuisance estimation
and evaluation. Wasserstein--1 distances are computed by drawing $n=10,000$ samples from the
fitted energy-based model via the Unadjusted Langevin Algorithm and estimating the distance empirically.

\paragraph{DoubleGen \cite{luedtke2025doublegen}}
We implement the flow-matching variant of DoubleGen using our own estimator and architectural/training details but with a Gaussian base. Concretely, DoubleGen uses the same velocity architecture, optimizer, training
epochs, nuisance models, and cross-fitting scheme as DecFM, but replaces the
empirical observational base \(\bb P_{Y\mid A=a}\) by a standard Gaussian base.

\subsection{Experimental Designs}

\subsubsection{Synthetic 1d Outcome Designs}

In \cref{sec:exp:1dablation} we consider three one--dimensional outcome designs with a continuous confounder
$X\in[0,1]$, a binary treatment $A\in\{0,1\}$, and scalar outcome $Y\in\bb R$.
Across all designs, confounding is induced through the treatment assignment
mechanism
\[
A \mid X=x \sim \mathrm{Bernoulli}(\pi(x)),
\qquad
\pi(x) = \sigma\bigl(5(x-\tfrac12)\bigr)
\]
where $\sigma(\cdot)$ is the logistic sigmoid.

\paragraph{Gaussian outcome.}
In the Gaussian design, outcomes are conditionally normal:
\[
Y \mid X=x, A=a \sim \mathcal N(10x + a,\;\sigma^2),
\]
with $\sigma=1$ unless stated otherwise. This setting serves as a baseline with
matched support and light tails, for which Gaussian base distributions are
well-aligned with the target geometry.

\paragraph{Inverse--Gamma outcome (heavy tails).}
To study the effect of tail mismatch, we consider a heavy--tailed outcome model:
\[
Y \mid X=x, A=a \sim \mathrm{InverseGamma}(\alpha(x),\beta(x,a)),
\qquad
\alpha(x)=10-8.5x,\;\;
\beta(x,a)=a+1+10x.
\]
Here, treatment affects scale but not tail index, while confounding through $X$
induces substantial heteroskedasticity and heavy tails in both observational and
interventional distributions.

\paragraph{Uniform mixture (split support).}
To isolate support mismatch, we consider a bimodal outcome distribution with
disconnected support:
\[
Y \mid X=x, A=a = a + U,
\]
where $U$ is drawn from a symmetric mixture of uniforms
\[
U \sim \tfrac12\,\mathrm{Unif}[-\beta(x),-\alpha(x)] 
     + \tfrac12\,\mathrm{Unif}[\alpha(x),\beta(x)],
\]
with
\[
\alpha(x)=1-\varepsilon(c x+1),
\qquad
\beta(x)=1+\varepsilon(c x+1).
\]
In this design, the two support components are separated by a gap around zero,
whose width depends on $X$. We use the parameterization $c = 100$, $\epsilon = 0.001$.

\subsubsection{Synthetic 2d Mixture of Normals}

In \cref{app:details:ablation} we consider an additional two--dimensional synthetic design with continuous covariates
$X\in\bb R^2$, binary treatment $A\in\{0,1\}$, and outcome $Y\in\bb R^2$. The
design is constructed to induce geometric confounding through a linear
projection of $X$, while allowing control over multimodality and support
separation in the outcome space.

Covariates are drawn as
\[
X \sim \mathcal N(0,I_2).
\]
Treatment assignment follows a logistic propensity model
\[
A \mid X=x \sim \mathrm{Bernoulli}\!\left(\sigma(\gamma\, w^\top x)\right),
\]
where $w=(1,-1)/\sqrt{2}$ is a fixed confounding direction and $\gamma>0$ controls the
strength of confounding. In all experiments, we vary $\gamma$ to interpolate
between weak and strong confounding regimes.

Outcomes are generated as 
\[
Y = X + \varepsilon,
\]
so that there is no average or distributional treatment effect. The observed
outcome is given by $Y=Y(A)$.

The noise term $\varepsilon$ induces a bimodal mixture structure:
\[
\varepsilon =
\begin{cases}
\eta, & \text{with probability } \tfrac12,\\
(\delta, \delta) + \eta, & \text{with probability } \tfrac12,
\end{cases}
\qquad
\eta \sim \mathcal N(0,\sigma^2 I_2),
\]
where $\delta>0$ controls the separation between mixture components and $\sigma$
controls within--component variance. In our experiments, we fix $\sigma=0.2$
and vary $\delta$ (referred to as the \emph{mixture width}) to control the degree
of support separation.

\subsubsection{Causal Benchmarks}

In \cref{sec:exp:benchmark} we evaluate our method on a collection of standard causal benchmark datasets,
combining both fully synthetic and semi--synthetic designs.

\paragraph{ACIC 2016 Challenge datasets.}
We consider $10$ datasets from the ACIC 2016 Causal Inference Challenge, each of
which provides observational data generated from a known structural causal
model with heterogeneous treatment effects and complex confounding. For each
dataset, we construct our own random train--test split of the available data,
using $70\%$ of the samples for training and $30\%$ for evaluation. All models
are trained exclusively on the training split, and performance is reported on
the held--out test split.

\paragraph{Semi--synthetic benchmarks (ACIC 2019, 401(k), and Twins).}
For the ACIC 2019, 401(k), and Twins datasets, we adopt a semi--synthetic
construction. In each case, we begin from a real observational dataset with
covariates $X$ and treatment assignment $A$, and then generate potential
outcomes using one of the outcome--generation mechanisms used in
\cref{sec:exp:1dablation} (In \cref{tab:causal_benchmarks}, SplitSupport = Mixture of Uniform, HeavyTail = InverseGamma). This allows us to evaluate distributional
counterfactual estimators under different outcome geometries while preserving
realistic covariate and treatment structures.

For these datasets, we randomly split the data into $70\%$ training and $30\%$
test sets. All nuisance models and deconfounding flows are trained on the
training split, and evaluation is performed on the held--out test split.

\emph{Strengthening confounding via feature selection.}
To ensure challenging confounding regimes, we construct a low--dimensional
summary $\phi(X)$ of the covariates by ranking features according to their
marginal Wasserstein--1 distance between treated and untreated groups.
Specifically, for each covariate $X_j$, we compute a confounding score
\[
s_j
=
{W1}\!\bigl(\bb P(X_j \mid A=1),\,\bb P(X_j \mid A=0)\bigr).
\]
Covariates with larger $s_j$ exhibit stronger distributional separation between
treatment groups and are therefore more strongly confounded with $A$. We retain the top $K=5$ covariates with the largest scores and define
$\phi(X)$ as their normalized sum,
\[
\phi(X) := \bm 1^\top X / \sqrt{K}.
\]
Outcomes are then generated using the mechanisms described in
\cref{sec:exp:1dablation}, with $\phi(X)$ replacing the original one--dimensional
covariate. However, \emph{all covariates are used when training the models}.

\paragraph{Twins dataset.}
For the Twins dataset, we follow the treatment assignment construction of
\cite{louizos2017causal}. We first form an observational dataset by selecting one
twin at random from each same--sex twin pair and restricting attention to
low--birthweight pairs. Treatment assignment is then generated according to a
logistic model depending on gestational age,
\[
A \sim \mathrm{Bernoulli}\!\left(\sigma(w_h z)\right),
\]
where $z$ denotes a standardized gestational age covariate and $w_h$ is a
randomly drawn slope parameter. 

\subsubsection{Image Generation: ColorMNIST}

In \cref{sec:exp:mnist} We consider an image--based debiasing task constructed from the MNIST dataset,
following a foreground--color confounding design. The goal is to estimate
counterfactual image distributions in the presence of strong confounding between
treatment assignment and a visual attribute.

\paragraph{Data generation.}
We restrict attention to two digit classes $(d_0,d_1) = (1,6)$ and construct RGB
images by recoloring grayscale MNIST digits. Each image is generated from a
latent scalar confounder $X\in[0,1]$, drawn uniformly. Treatment assignment
follows a logistic propensity model
\[
A \mid X=x \sim \mathrm{Bernoulli}\!\left(\sigma\bigl(w(x-\tfrac12)\bigr)\right),
\]
where $w>0$ controls the strength of confounding.

Conditional on $(X,A)$, the observed image $Y$ is generated by selecting a digit
from class $d_A$ and recoloring its foreground using an RGB mapping that depends
on $X$. Foreground recoloring is implemented via a soft mask derived from the
grayscale digit intensity. Let $S\in[0,1]^{28\times28}$ denote the original MNIST
digit. We define a foreground mask
\[
M = \sigma\bigl(k(S-\tau)\bigr),
\]
where $\sigma(\cdot)$ is the logistic sigmoid and $(\tau,k)$ control the
foreground--background separation. Given a foreground RGB color
$c_{\mathrm{fg}}(X)\in[0,1]^3$ and a fixed background color $c_{\mathrm{bg}}$, the
recolored image is constructed pixelwise as
\[
Y = M \odot (c_{\mathrm{fg}}(X)\, S) + (1-M)\odot c_{\mathrm{bg}}.
\]

In our experiments, the foreground color varies smoothly with the confounder
according to
\[
c_{\mathrm{fg}}(X) = (X,\alpha,1-X),
\]
with fixed $\alpha\in[0,1]$ and constant background color $c_{\mathrm{bg}}$.
This construction induces a nonlinear (approximately logistic) distribution over
the relative red--blue intensity ratio, creating a strong but smooth association
between color, treatment assignment, and digit class.

\paragraph{Evaluation setting.}
Models are trained on \(n=20{,}000\) observational samples \((X,A,Y)\). To
evaluate counterfactual performance, we generate \(5000\) reference samples from
each interventional distribution \(\bb P_{Y(a)}\) by fixing the digit class to
\(d_a\) and resampling \(X\) independently from its marginal distribution. The
foreground colors are then assigned using the same \(X\)-dependent recoloring
rule as in the observational data, so that reference images preserve the latent
color variation induced by \(X\) while removing treatment-selection bias. We
compare generated and reference samples using sliced Wasserstein--2 distance,
averaged over \(a\in\{0,1\}\) and over three random seeds.

\subsection{Image Generation: CelebA}

\paragraph{Data Generation.} We use CelebA as a high-dimensional attribute-rebalancing benchmark. Images are
taken from the aligned CelebA release, center-cropped, resized to
\(64\times 64\), converted to RGB tensors, and normalized to \([-1,1]\). We set
\(A=\mathrm{Male}\), so that \(A=1\) denotes male and \(A=0\) denotes female,
and set \(X=\mathrm{Blond\_Hair}\). To make \(X\) a clean binary hair-colour
attribute, we restrict to images with exactly one of \(\mathrm{Blond\_Hair}\)
or \(\mathrm{Brown\_Hair}\) active, and remove images labelled with
\(\mathrm{Black\_Hair}\), \(\mathrm{Gray\_Hair}\), \(\mathrm{Bald}\), or
\(\mathrm{Wearing\_Hat}\). Thus, in this experiment \(X=1\) denotes blond hair
and \(X=0\) denotes brown hair.

We construct a controlled observational sample from the cleaned train and
validation splits of CelebA. Specifically, we combine the official train and
validation splits into a single cleaned pool, form the four \((A,X)\) cell
pools, and then sample observed and reference images without replacement from
these cells. The observed sample has size \(n_{\mathrm{obs}}=20{,}000\) and is
sampled to satisfy
\[
    \bb P(X=1)=0.3,\qquad
    \bb P(A=1\mid X=0)=0.6,\qquad
    \bb P(A=1\mid X=1)=0.1 .
\]
Equivalently, the target cell proportions for the observational sample are
\[
\begin{array}{c|cc}
 & X=0 & X=1 \\
\hline
A=0 & 0.28 & 0.27 \\
A=1 & 0.42 & 0.03
\end{array}
\]
up to integer rounding. This induces a strong association between sex and hair
colour: females are over-represented among blond examples, while males are
under-represented.

\paragraph{Evaluation setting.} For evaluation, we construct disjoint reference samples for each intervention
arm. For each \(a\in\{0,1\}\), the reference set contains
\(n_{\mathrm{ref}}=2{,}000\) held-out images sampled from the same cleaned pool
with \(A=a\) fixed and
\[
    \bb P_{\mathrm{ref}}(X=1\mid A=a)=0.3 .
\]
Thus the reference distribution for arm \(a\) preserves within-cell CelebA image
variation but replaces the group-specific hair-colour distribution
\(\bb P_{X\mid A=a}\) by the population target \(\bb P_X\). Observed and
reference indices are sampled disjointly within each \((A,X)\) cell. We report
sliced Wasserstein--2 distances between generated samples and these reference
distributions, as well as the corresponding distance between the observational
base \(\bb P_{Y\mid A=a}\) and the reference target. Results in Figure~4 are
reported as mean \(\pm\) sample standard deviation over three random seeds.

\subsection{Additional Figures}

\begin{figure}[h]
    \centering
    \includegraphics[width=0.7\linewidth]{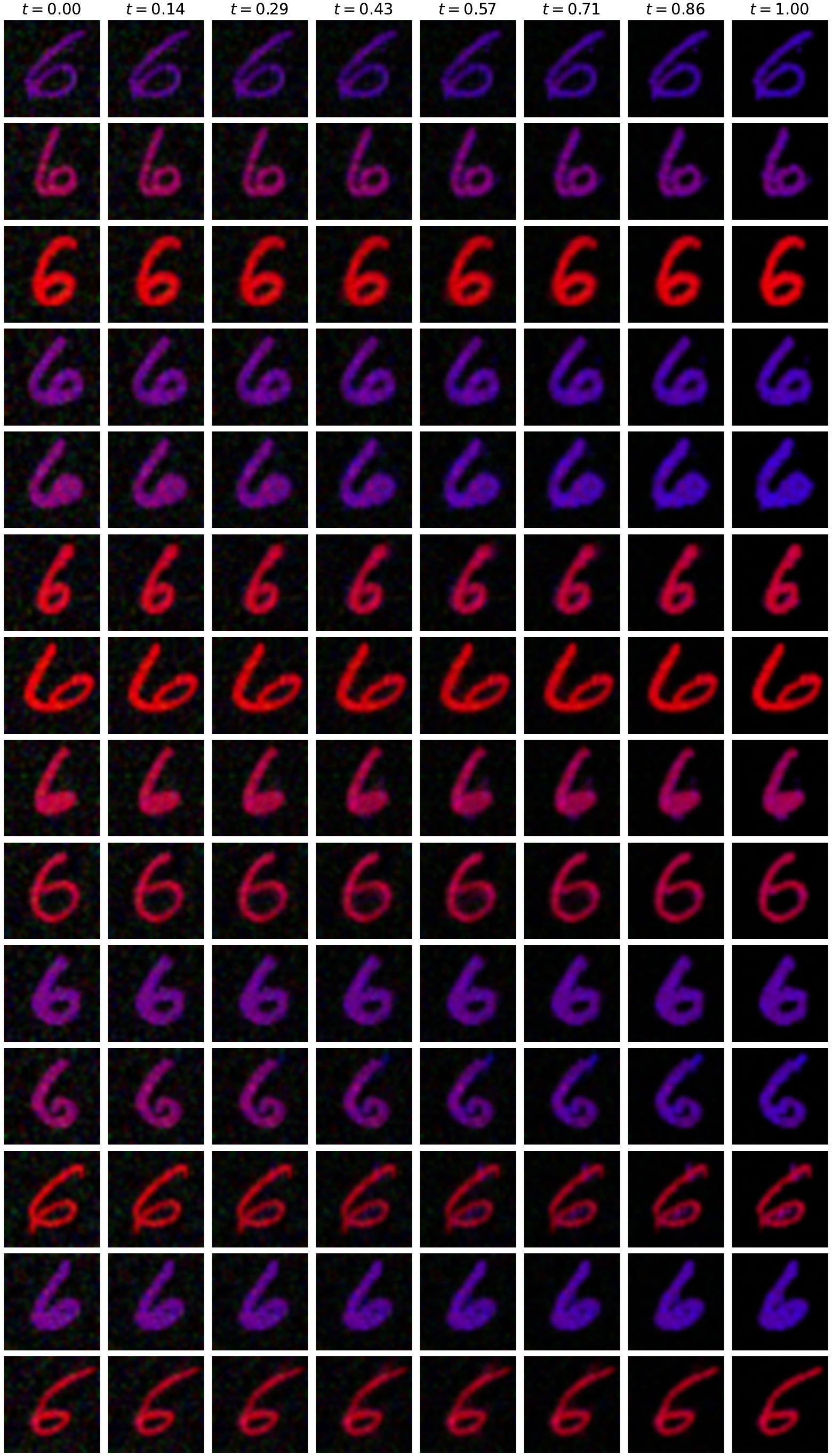}
\caption{Additional sampled ColorMNIST trajectories from DecFM-EOT.}        
\label{fig:placeholder}
\end{figure}

\begin{figure}[h]
    \centering
    \includegraphics[width=\linewidth]{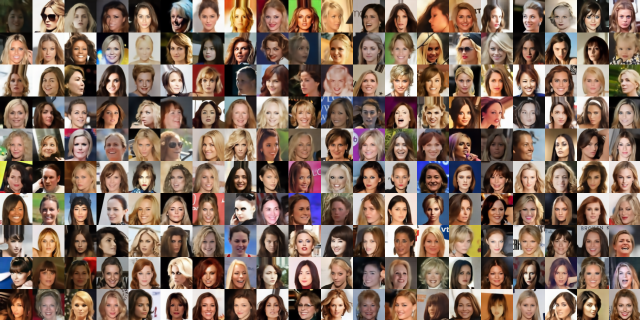}
\caption{200 Generated Samples with $\mathrm{Sex}=\mathrm{Female}$ from DecFM-EOT on CelebA}        
\label{fig:placeholder}
\end{figure}

\begin{figure}[h]
    \centering
    \includegraphics[width=\linewidth]{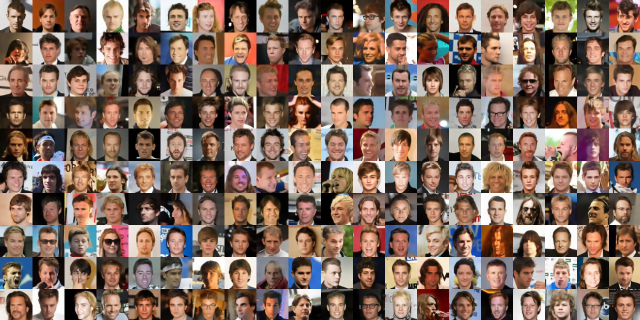}
\caption{200 Generated Samples with $\mathrm{Sex}=\mathrm{Male}$ from DecFM-EOT on CelebA}        
\label{fig:placeholder}
\end{figure}

\clearpage 

\section{Algorithms}\label{app:algs}

\begin{algorithm}[H]
\caption{Plug-in conditional flow matching nuisance for $\hat{\bb P}_{Y\mid X,A}$}
\label{alg:plugin-cfm}
\KwRequire{Observational data $\mc D=\{(X_i,A_i,Y_i)\}_{i=1}^n$; velocity network $v_\eta(y,t,x,a)$; optimizer \textsc{OPT}; number of iterations $T$; minibatch size $B$}

Partition indices by treatment arm: $\mc I_a := \{i\in[n]: A_i=a\}$ for $a\in\{0,1\}$\;

\For{$k=1,\dots,T$}{
    Sample a minibatch of indices $\mc B \subset [n]$ with $|\mc B|=B$\;
    
    Form batch tensors $\{(x_i,a_i,y_i)\}_{i\in\mc B}$\;
    
    \ForEach{$i\in\mc B$}{
        Sample $j \sim \mathrm{Unif}(\mc I_{a_i})$ independently\;
        
        Sample $\tilde y_i  \sim \hat{\bb P}_{Y\mid A=a_i}$\;
    }
    
    Sample times $t_i \stackrel{\text{iid}}{\sim}\mathrm{Unif}(0,1)$ for $i\in\mc B$\;
    
    Compute interpolates $y_{t,i} \gets (1-t_i)\tilde y_i + t_i y_i$\;
    
    Compute targets $u_i^\star \gets y_i - \tilde y_i$\;
    
    Compute loss
    \[
    \widehat{\mathcal L}^{\mathrm{plug\text{-}in}}(\eta)
    \;=\;
    \frac{1}{B}\sum_{i\in\mc B}
    \bigl\|v_\eta(y_{t,i},t_i,x_i,a_i)-u_i^\star\bigr\|^2
    \]
    
    Update parameters $\eta \gets \textsc{OPT}\!\left(\eta,\nabla_\eta \widehat{\mathcal L}^{\mathrm{plug\text{-}in}}(\eta)\right)$\;
}
\Return{$v_\eta$}\;
\end{algorithm}

\begin{algorithm}[H]
\caption{I-Do-FM: Debiased Flow Matching with Independent Coupling}
\label{alg:dr-fm} 
\label{alg:i-DecFM}
\KwRequire{Observational data $\{(X_i,A_i,Y_i)\}_{i=1}^n$;
propensity estimator $\hat\pi$;
plug-in conditional sampler $\hat{\bb P}_{Y\mid X,A}$;
velocity field $v_\theta$;
minibatch size $B$;
outer MC size $M$ (plug-in draws);
base MC size $M_0$;
optimizer \textsc{OPT};
iterations $T$.}

Construct empirical bases $\hat{\bb P}_{Y\mid A=a}$ for $a\in\{0,1\}$.\;

\For{$t = 1,\dots,T$}{
    Sample minibatch indices $\mc B \subset \{1,\dots,n\}$ with $|\mc B|=B$.\;
    
    Extract batch $(X_b,A_b,Y_b)_{b\in\mc B}$.\;
    
    Sample interpolation times $\tau_b \stackrel{iid}{\sim} \mathrm{Unif}(0,1)$ for $b\in\mc B$.\;
    
    \For{each arm $a\in\{0,1\}$}{
        Let $\mc B_a := \{b\in\mc B : A_b=a\}$ and $B_a:=|\mc B_a|$; \textbf{continue} if $B_a=0$.\;
        
        \For{each $b\in\mc B_a$}{
            Draw base samples $\tilde Y_{b,1:M_0} \overset{iid}{\sim} \hat{\bb P}_{Y\mid A=a}$.\;
            
            Draw plug-in samples $\hat Y_{b,1:M} \overset{iid}{\sim} \hat{\bb P}_{Y\mid X=X_b,A=a}$.\;
            
            Set IPW weight $w_b \gets \bm 1\{A_b=a\}/\hat\pi_a(X_b)$.\;
        }
        
        Accumulate DR loss for arm $a \in \{0,1\}$:
        \[
        \widehat{\mc L}_a(\theta)
        =
        \frac{1}{B_a}\sum_{b\in\mc B_a}
        \frac{1}{M_0}\sum_{m'=1}^{M_0}
        \Bigg[
        w_b\,\ell_\theta\!\bigl(Y_b,\tilde Y_{b,m'};\tau_b\bigr)
        +
        \frac{1-w_b}{M}\sum_{m=1}^{M}
        \ell_\theta\!\bigl(\hat Y_{b,m},\tilde Y_{b,m'};\tau_b\bigr)
        \Bigg].
        \]
    }
    
    Set $\widehat{\mc L}(\theta)\gets \sum_{a\in\{0,1\}}\widehat{\mc L}_a(\theta)$.\;
    
    Update parameters \(
    \theta \gets \textsc{OPT}\!\left(\theta,\nabla_\theta \widehat{\mc L}(\theta)\right).\)\;
}
\Return{$\theta$}\;
\end{algorithm}

\begin{algorithm}[H]
\caption{DR Flow Matching with Minibatch Entropic OT Conditionals (Do-FM-EOT)}
\label{alg:dr-ot-fm}
\KwRequire{Data $\mc D=\{(X_i,A_i,Y_i)\}_{i=1}^n$; propensity model $\hat\pi(x)$; plug-in conditional sampler $\hat{\bb P}_{Y\mid X,A}$; base sampler $\hat{\bb P}_{Y\mid A}$; velocity model $v_\theta$; pairwise FM loss $\ell_\theta(y,\tilde y)$; minibatch size $B$; plug-in outer MC size $M$; OT plug-in subsample size $K\le M$; Sinkhorn reg.\ $\varepsilon$ and iters $S$; \textsc{OPT}.}

\For{each training iteration}{
    Sample indices $\mc B \subset [n]$ with $|\mc B|=B$ and form batch $(X,A,Y)$.\;
    
    \For{arm $a\in\{0,1\}$}{
        Let $\mc B_a := \{i\in\mc B: A_i=a\}$ and $B_a:=|\mc B_a|$; \textbf{continue} if $B_a=0$.\;
        
        Draw base samples $\tilde Y^{\mathrm{base}} \sim \hat{\bb P}_{Y\mid A=a}$ of size $B_a$ \hfill (target set $\mc T$)\;
        
        Draw plug-in samples $\hat Y_{i,k}\sim \hat{\bb P}_{Y\mid X=X_i,A=a}$ for each $i\in\mc B_a$, $k=1,\dots,M$\;
        
        Flatten $\hat Y_{\mathrm{flat}}:=\{\hat Y_{i,k}\}_{i\in\mc B_a,k\le M}$ (size $B_a M$)\;
        
        Subsample $K$ columns per $i$ to form $\hat Y_{\mathrm{OT}}$ (size $N_s=B_aK$) \hfill (source set $\mc S$)\;
        
        \tcp{(Optional) compute exponential-tilt source weights $a_\ell$ on $\mc S$}
        
        \If{tilting enabled}{
            Compute features $g(\hat Y_{\mathrm{OT}})$ and logits $r_\ell := \hat\lambda_a^\top g(\hat Y_{\mathrm{OT},\ell})$\;
            
            Set source weights $a_\ell \gets \mathrm{softmax}(r)_\ell$ \hfill (sum to $1$)\;
        }
        \Else{
            Set $a_\ell \gets 1/N_s$ for $\ell=1,\dots,N_s$\;
        }
        
        \tcp{Weighted Sinkhorn: source=$\mc S$ (weights $a$), target=$\mc T$ (uniform)}
        
        Compute target duals $v \gets \mathrm{SinkhornDualTarget}(\mc S,\mc T,a,\varepsilon,S)$ \hfill ($v\in\bb R^{B_a}$)\;
        
        \tcp{Use the conditional $\hat\gamma^{\varepsilon}_{0\mid 1}(\cdot\mid y)$ to draw base points}
        
        Draw $\tilde Y^{\mathrm{obs}}_i \sim \hat\gamma^{\varepsilon}_{0\mid 1}(\cdot \mid Y_i)$ for each $i\in\mc B_a$\;
        
        Draw $\tilde Y^{\mathrm{hat}}_{i,k} \sim \hat\gamma^{\varepsilon}_{0\mid 1}(\cdot \mid \hat Y_{i,k})$ for all $i\in\mc B_a,k\le M$\;
        
        Compute $\ell_{\mathrm{obs},i} \gets \ell_\theta(Y_i,\tilde Y^{\mathrm{obs}}_i)$ for $i\in\mc B_a$\;
        
        Compute $\ell_{\mathrm{hat},i} \gets \frac{1}{M}\sum_{k=1}^M \ell_\theta(\hat Y_{i,k},\tilde Y^{\mathrm{hat}}_{i,k})$ for $i\in\mc B_a$\;
        
        Set IPW weights $w_i \gets \bm 1\{A_i=a\}/\hat\pi_a(X_i)$ for $i\in\mc B_a$\;
        
        Accumulate DR loss contribution
        \[
        \widehat{\mc L}_a(\theta)
        \;\gets\;
        \frac{1}{B_a}\sum_{i\in\mc B_a}
        \Bigl(
        \ell_{\mathrm{hat},i} + w_i\bigl(\ell_{\mathrm{obs},i}-\ell_{\mathrm{hat},i}\bigr)
        \Bigr).
        \]
    }
    
    Set $\widehat{\mc L}(\theta)\gets \sum_{a\in\{0,1\}}\widehat{\mc L}_a(\theta)$ and update $\theta \gets \textsc{OPT}(\theta,\nabla_\theta \widehat{\mc L}(\theta))$.\;
}
\Return{$\theta$}\;
\end{algorithm}

\begin{algorithm}[H]
\caption{Exponential Tilting of Plug--in Counterfactual Samples}
\label{alg:exp-tilt}
\KwRequire{Observational data $\{(X_i,A_i,Y_i)\}_{i=1}^n$; treatment $a$;
plug--in sampler for $\hat{\bb P}_{Y(a)}$; moment map $s(\cdot)$;
propensity model $\hat\pi_a$; nuisance outcome model $\hat\mu$;
reservoir size $M$; regularization $\lambda_{\mathrm{reg}}$}

Draw a global plug--in reservoir $\{\hat y_j\}_{j=1}^M \sim \hat{\bb P}_{Y(a)}$\;

Compute moment features $G_j \gets s(\hat y_j)$ for $j=1,\dots,M$\;

Estimate doubly robust target moments
\[
\hat m_a \;\gets\; \frac{1}{n}\sum_{i=1}^n
\Bigl[
s(\hat Y_i(a))
+ \frac{\bm 1\{A_i=a\}}{\hat\pi_a(X_i)}\bigl(s(Y_i)-s(\hat Y_i(a))\bigr)
\Bigr]
\]

Solve for tilting parameter
\[
\hat\lambda
\;\in\;
\arg\min_{\lambda}
\Bigl\|
\sum_{j=1}^M w_j(\lambda)\,G_j - \hat m_a
\Bigr\|_2^2
\;+\;
\lambda_{\mathrm{reg}}\|\lambda\|_2^2,
\quad
w_j(\lambda)=\frac{\exp(\lambda^\top G_j)}{\sum_{\ell}\exp(\lambda^\top G_\ell)}
\]

Form tilted empirical distribution
\[
\hat{\bb P}^{\mathrm{tilt}}_{Y(a)} := \sum_{j=1}^M w_j(\hat\lambda)\,\delta_{\hat y_j}
\]

Use $\hat{\bb P}^{\mathrm{tilt}}_{Y(a)}$ as the target marginal when computing
the minibatch entropic OT coupling $\gamma^\varepsilon$\;
\end{algorithm}

 \clearpage

\section{Mathematical Appendix}\label{app:math}

\subsection{Technical Lemmas}\label{app:math:lemmas}

\subsubsection{Lemmas for Theorems in \cref{sec:geometry}}

\begin{lemma}[Bounded RN Derivative]\label{lemma:rn}
Assume $\exists \epsilon > 0$ such that $\pi_a(x) := \bb P(A=a|X=x) \geq \epsilon\; \forall a \in \{0,1\},\; \bb P_X$-a.e.
Fix $a\in\{0,1\}$ and write $r_a(x):=\bb P(A=a) / \pi_a(X)$. Then $\bb P_{Y(a)} \sim \bb P_{Y\mid a}$ and, for $\bb P_{Y\mid a}$-a.e.\ $y$, the Radon-Nikodym derivative $g_a(y) := \frac{d \bb P_{Y(a)}}{d \bb P_{Y \mid a}}(y)$ exists and satisfies
\begin{align}
    g_a(y)
& = \bb E \left[r_a(X)  \; |\; A=a,\, Y=y\right] \\
\frac{\bb P(A=a)}{1-\varepsilon}
& \;\le\;
g_a(y)
\;\le\;
\frac{\bb P(A=a)}{\varepsilon}
\qquad \text{for $\bb P_{Y\mid a}$-a.e.\ $y$.}
\end{align}
\end{lemma}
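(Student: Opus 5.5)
The plan is to compute the Radon--Nikodym derivative directly from the mixture representations of the two laws, then read off the bounds from positivity. First I would note that under positivity $\bb P_X \sim \bb P_{X\mid a}$, with Bayes' rule giving
\[
\frac{d\bb P_X}{d\bb P_{X\mid a}}(x) = \frac{\bb P(A=a)}{\pi_a(x)} = r_a(x), \qquad \bb P_X\text{-a.e.}
\]
This holds because $\bb P_{X\mid a}(dx) = \pi_a(x)\bb P_X(dx)/\bb P(A=a)$ and $\pi_a \ge \epsilon > 0$, so the density can be inverted.

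Next, for any measurable $B \subseteq \mc Y$, I would substitute this density into the identification formula \eqref{eq:int_dist}:
\[
\bb P_{Y(a)}(B) = \int \bb P_{Y\mid x,a}(B)\, r_a(x)\, \bb P_{X\mid a}(dx) = \bb E\big[\bm 1_B(Y)\, r_a(X) \,\big|\, A=a\big].
\]
The second equality uses that, under the law of $(X,Y)$ given $A=a$, the conditional law of $Y$ given $X=x$ is $\bb P_{Y\mid x,a}$. I would then condition further on $Y$ by the tower property within the arm $A=a$:
\[
\bb P_{Y(a)}(B) = \int_B \bb E[r_a(X)\mid A=a, Y=y]\, \bb P_{Y\mid a}(dy).
\]
This formula shows $\bb P_{Y(a)} \ll \bb P_{Y\mid a}$ and identifies $g_a(y) = \bb E[r_a(X)\mid A=a,Y=y]$ as a version of the Radon--Nikodym derivative. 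The conditional expectation is well defined as a regular version because the spaces are standard Borel and $r_a$ is bounded, as shown next.

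For the bounds, positivity for both arms gives $\epsilon \le \pi_a(x) \le 1 - \pi_{1-a}(x) \le 1-\epsilon$ for $\bb P_X$-a.e. $x$. Hence $\bb P(A=a)/(1-\epsilon) \le r_a(x) \le \bb P(A=a)/\epsilon$ for $\bb P_X$-a.e. $x$, and therefore also $\bb P_{X\mid a}$-a.e. since $\bb P_{X\mid a} \ll \bb P_X$. Conditional expectations preserve almost-sure bounds, so the same bounds hold for $g_a(y)$ for $\bb P_{Y\mid a}$-a.e. $y$. Since the lower bound is strictly positive, $\bb P_{Y\mid a} \ll \bb P_{Y(a)}$ as well, which gives the equivalence $\bb P_{Y(a)} \sim \bb P_{Y\mid a}$.

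The only delicate step is the measure-theoretic bookkeeping in the change-of-measure step. There I need that the null sets of $\bb P_X$ on which positivity fails are also null for $\bb P_{X\mid a}$, and that the disintegration of the joint law of $(X,Y)$ given $A=a$ is legitimate. Both follow from absolute continuity and standard Borel regularity, so I do not expect a real obstacle.
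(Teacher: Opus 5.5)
Your proposal is correct and follows essentially the same route as the paper: the change of measure $d\mathbb P_X = r_a\, d\mathbb P_{X\mid a}$ inside the mixture formula, disintegration within the arm $A=a$ to identify $g_a(y)=\mathbb E[r_a(X)\mid A=a,Y=y]$, and transfer of the bounds on $r_a$ through the conditional expectation (using positivity of both arms to get $\pi_a\le 1-\epsilon$). The one difference is the reverse absolute continuity: you use $g_a\ge \mathbb P(A=a)/(1-\epsilon)>0$, so that $d\mathbb P_{Y\mid a}/d\mathbb P_{Y(a)}=1/g_a$, whereas the paper reruns the computation with $\tilde r_a=\pi_a/\mathbb P(A=a)$ conditioned on $Y(a)$; your version is shorter and does not require specifying a joint law for $(X,Y(a))$.
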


\begin{proof}
Let $B\in\mathcal B(\bb X)$ be measurable. By the law of total probability,
\[
\bb P(X\in B, A=a)=\int_B \pi_a(x)\,\bb P_X(dx),
\qquad
\bb P(A=a)=\int_{\bb X}\pi_a(x)\,\bb P_X(dx).
\]
If $\bb P_X(B)=0$, then $\bb P(X\in B, A=a)=0$ hence $\bb P_{X\mid a}(B)=0$, so
$\bb P_{X\mid a}\ll \bb P_X$.

Conversely, suppose $\bb P_{X\mid a}(B)=0$. Then $\bb P(X\in B,A=a)=0$, i.e.
$\int_B \pi_a(x)\,\bb P_X(dx)=0$. Since $\pi_a(x)\ge \varepsilon>0$ $\bb P_X$-a.s., this implies
$\bb P_X(B)=0$. Hence $\bb P_X\ll \bb P_{X\mid a}$.
Therefore $\bb P_X\sim \bb P_{X\mid a}$.

Moreover, for $\bb P_X$-a.e.\ $x$,
\[
\frac{d\bb P_{X\mid a}}{d\bb P_X}(x)
=\frac{\pi_a(x)}{\bb P(A=a)}
\quad\text{and}\quad
r_a(x):=\frac{d\bb P_X}{d\bb P_{X\mid a}}(x)
=\frac{\bb P(A=a)}{\pi_a(x)}.
\]
By overlap, $r_a(x)$ is essentially bounded:
\[
\frac{\bb P(A=a)}{1-\varepsilon}\le r_a(x)\le \frac{\bb P(A=a)}{\varepsilon}
\qquad \text{for $\bb P_{X\mid a}$-a.e.\ $x$.}
\]

Let $h:\bb Y\to\bb R$ be bounded measurable. Using the representation from \eqref{eq:int_dist},
$\bb P_{Y(a)}(\cdot)=\int \bb P_{Y\mid X=x,A=a}(\cdot)\,\bb P_X(dx)$,
Tonelli/Fubini yields
\begin{align*}
\int h(y)\,\bb P_{Y(a)}(dy)
&=\int_{\bb X}\int_{\bb Y} h(y)\,\bb P_{Y\mid X=x,A=a}(dy)\,\bb P_X(dx)\\
&=\int_{\bb X}\int_{\bb Y} h(y)\,\bb P_{Y\mid X=x,A=a}(dy)\, r_a(x)\,\bb P_{X\mid a}(dx)\\
&=\int_{\bb X\times \bb Y} h(y)\, r_a(x)\,\bb P_{X,Y\mid A=a}(dx,dy)\\
&=\int_{\bb Y} h(y)\left(\int_{\bb X} r_a(x)\,\bb P_{X\mid Y=y,A=a}(dx)\right)\bb P_{Y\mid a}(dy)\\
&=\int_{\bb Y} h(y)\,\bb E[r_a(X)\mid A=a,Y=y]\,\bb P_{Y\mid a}(dy).
\end{align*}
Since $h$ was arbitrary, the Radon--Nikodym theorem gives $\bb P_{Y(a)}\ll \bb P_{Y\mid a}$ and
\[
\frac{d \bb P_{Y(a)}}{d \bb P_{Y \mid a}}(y)
=\bb E[r_a(X)\mid A=a,Y=y]
=\bb E\left[\frac{\bb P(A=a)}{\bb P(A=a\mid X)}\ \Big|\ A=a,Y=y\right]
\]
for $\bb P_{Y\mid a}$-a.e.\ $y$.

By our earlier result, $r_a(X)\in\left[\frac{\bb P(A=a)}{1-\varepsilon},\frac{\bb P(A=a)}{\varepsilon}\right]$ a.s.
Conditioning preserves bounds, hence for $\bb P_{Y\mid a}$-a.e.\ $y$,
\[
\frac{\bb P(A=a)}{1-\varepsilon}
\le \bb E[r_a(X)\mid A=a,Y=y]
\le \frac{\bb P(A=a)}{\varepsilon}.
\]

Define $\tilde r_a(x):=\frac{d\bb P_{X\mid a}}{d\bb P_X}(x)=\frac{\pi_a(x)}{\bb P(A=a)}$,
which is strictly positive $\bb P_X$-a.s.\ (indeed $\tilde r_a(x)\ge \varepsilon/\bb P(A=a)$).
Repeat the calculation in Step 2 but now start from
\[
\int h(y)\,\bb P_{Y\mid a}(dy)
=\int_{\bb X}\int_{\bb Y} h(y)\,\bb P_{Y\mid X=x,A=a}(dy)\,\bb P_{X\mid a}(dx)
\]
and substitute $\bb P_{X\mid a}(dx)=\tilde r_a(x)\,\bb P_X(dx)$. This yields
\[
\int h(y)\,\bb P_{Y\mid a}(dy)
=\int_{\bb Y} h(y)\,\bb E[\tilde r_a(X)\mid Y(a)=y]\,\bb P_{Y(a)}(dy),
\]
so $\bb P_{Y\mid a}\ll \bb P_{Y(a)}$.
(Here $\bb E[\cdot\mid Y(a)=y]$ is well-defined on standard Borel spaces.)
Therefore $\bb P_{Y(a)}\sim \bb P_{Y\mid a}$.
\end{proof}

\subsubsection{Lemmas for Theorems in \cref{sec:est}}

\begin{lemma}[AIPW for counterfactual mean of a scalar outcome, \cite{kennedy2024semiparametric}]
\label{lem:aipw-cf}
Let \(P_0\) be the true data distribution and suppose the target
\(\bb P_{Y(a)}\) is defined by \eqref{eq:int_dist}. Assume overlap, i.e.,
there exists \(\epsilon>0\) such that
\[
\pi_a(X):=P_0(A=a\mid X)\ge \epsilon
\qquad P_0\text{-a.s.}
\]
Then, the efficient influence function of the functional \(
T(\bb P_0)=\bb E_{\bb P_0}[Y(a)]\),
at $\bb P_0$ is
\[
\varphi_T(O)
=
\frac{\mathbf 1\{A=a\}}{\pi_a(X)}\bigl(Y-\zeta(X)\bigr)
+\zeta(X)-T(\bb P_0),
\]
where $\zeta(x)=\bb E_{\bb P_0}[Y\mid X=x,A=a]$ and
$\pi_a(x)=\bb P_0(A=a\mid X=x)$.
\end{lemma}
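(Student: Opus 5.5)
The plan is to derive the efficient influence function of $T(\bb P)=\bb E_{\bb P}[\bb E_{\bb P}[Y\mid A=a,X]]$ by a pathwise (Gateaux) derivative along regular parametric submodels. Since the observed-data model is nonparametric, any influence function we obtain is automatically the efficient one. Concretely, we take a submodel $\bb P_t$ with density $p_t(o)=p(o)(1+t s(o))$, where $s$ is bounded and mean zero. We factor the score as
\[
s(o)=s_X(x)+s_{A}(a\mid x)+s_Y(y\mid x,a),
\]
where each component has conditional mean zero given its conditioning variables. We then compute $\frac{d}{dt}T(\bb P_t)\big|_{t=0}$ and show it equals $\bb E[\varphi_T(O)s(O)]$ with $\varphi_T$ as stated. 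Overlap, $\pi_a\ge\epsilon$, guarantees that the weight $\mathbf 1\{A=a\}/\pi_a(X)$ is bounded, so $\varphi_T$ has finite variance whenever $\bb E[Y^2]<\infty$; we take this as implicit.

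The first step is to write
\[
T(\bb P_t)=\int\!\!\int y\,p_t(y\mid x,a)\,dy\;p_t(x)\,dx
\]
and differentiate by the product rule, giving two contributions. The covariate part is $\int \zeta(x)\,s_X(x)\,p(x)\,dx=\bb E[(\zeta(X)-T)\,s(O)]$. This holds because $\zeta(X)-T$ is mean zero and is orthogonal to $s_A$ and $s_Y$, both of which have conditional mean zero given $X$. The outcome part is
\[
\int\!\!\int y\, s_Y(y\mid x,a)\,p(y\mid x,a)\,dy\,p(x)\,dx=\bb E\big[(Y-\zeta(X))\,s_Y(Y\mid X,a)\,\big|\,X,A=a\big]\text{ averaged over }\bb P_X.
\]
The second step is the key one: convert the integral against $\bb P_X$ into an observed-data expectation by inserting the indicator $\mathbf 1\{A=a\}/\pi_a(X)$. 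This gives
\[
\bb E\Big[\frac{\mathbf 1\{A=a\}}{\pi_a(X)}(Y-\zeta(X))\,s_Y\Big].
\]
We may then replace $s_Y$ by the full score $s$, because the extra terms $s_X+s_A$ depend only on $(X,A)$ and $Y-\zeta(X)$ has conditional mean zero given $X$ and $A=a$.

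Summing the two contributions gives $\frac{d}{dt}T(\bb P_t)\big|_{t=0}=\bb E[\varphi_T(O)\,s(O)]$. It remains to check that $\varphi_T$ is mean zero, which follows from iterated expectations. Because the tangent space of the nonparametric model is all of $L_2^0(\bb P)$, $\varphi_T$ is the unique gradient and hence the EIF. The main technical care goes into justifying differentiation under the integral and the change-of-measure step. Bounded scores together with $\bb E[Y^2]<\infty$ give dominated convergence, and $\pi_a\ge\epsilon$ makes the inverse weighting valid. This matches the standard derivation in \citet{kennedy2024semiparametric}, which we can also cite directly.
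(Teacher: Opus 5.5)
Your proposal is correct. The paper does not actually prove this lemma; it imports the result from \citet{kennedy2024semiparametric}. In that literature, such influence functions are usually obtained in one of two ways. One is the discrete-data ``influence-function calculus'': apply the product rule to $T=\sum_x \zeta(x)p(x)$, using the known influence functions of $p(x)$ and $p(y\mid x,a)$. The other is a point-mass Gateaux derivative.

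Your route is the rigorous version of the same computation, and it is complete for the nonparametric model. Its ingredients are:
\begin{itemize}
\item a score-based pathwise derivative along $p_t=p(1+ts)$;
\item the orthogonal decomposition $s=s_X+s_A+s_Y$;
\item the inverse-propensity change of measure for the outcome contribution;
\item the observation that $s_X+s_A$ is annihilated because $\mathbb E[Y-\zeta(X)\mid X,A=a]=0$.
\end{itemize}
Bounded mean-zero scores are dense in $L_2^0(\mathbb P)$, so the gradient you obtain is unique and is therefore the EIF.

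What your derivation buys over the bare citation is that it makes the hypotheses explicit. Overlap is used for the change of measure, for differentiating under the integral, and for square-integrability of $\varphi_T$. You also correctly flag that a moment condition such as $\mathbb E[Y^2]<\infty$ is needed for $\varphi_T\in L_2(\mathbb P)$, i.e.\ for the EIF to exist in the usual sense. The lemma as stated in the paper leaves this implicit.

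The paper later applies this lemma to the vector-valued outcome $g_{\theta_a}(Y)$ in the proof of Theorem~\ref{thm:eif_fm}. Your argument extends to that case coordinatewise without change.
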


\begin{lemma}[AIPW for stratum mean, \cite{kennedy2024semiparametric}]
\label{lem:stratum}
Let $\bb P_0$ be the true distribution and let
\[
T(\bb P_0)=\bb E_{\bb P_0}[Y\mid A=a],
\]
with scalar $Y$ and $p_a=\bb P_0(A=a)>0$.
Then the efficient influence function of $T$ at $\bb P_0$ is
\[
\varphi_T(O)
=
\frac{\mathbf 1\{A=a\}}{p_a}
\bigl(Y-\bb E_{\bb P_0}[Y\mid A=a]\bigr).
\]
\end{lemma}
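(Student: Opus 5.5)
We prove this by writing $T$ as a ratio of two unconditional means and computing its pathwise derivative along regular one-dimensional submodels of the nonparametric model. Write $T(\bb P_0)=\psi_1(\bb P_0)/\psi_2(\bb P_0)$ with $\psi_1(\bb P)=\bb E_{\bb P}[Y\mathbf 1\{A=a\}]$ and $\psi_2(\bb P)=\bb P(A=a)=p_a>0$.

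Because the model for $O=(X,A,Y)$ is nonparametric, the tangent space is all of $L^2_0(\bb P_0)$. Hence any mean-zero square-integrable gradient is automatically the unique, and therefore efficient, influence function. It suffices to show that the candidate $\varphi_T$ satisfies
\[
\frac{d}{dt}T(\bb P_t)\Big|_{t=0}=\bb E_{\bb P_0}[\varphi_T(O)s(O)]
\]
for every submodel $d\bb P_t=(1+ts)\,d\bb P_0$ with bounded mean-zero score $s$. Throughout we assume $\bb E[Y^2]<\infty$, so that $\varphi_T\in L^2$.

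First, compute the pathwise derivatives of the two linear functionals. Along the submodel,
\[
\psi_1(\bb P_t)=\psi_1(\bb P_0)+t\,\bb E[Y\mathbf 1\{A=a\}s(O)],
\]
so $\psi_1$ has derivative $\bb E[Y\mathbf 1\{A=a\}s]$ and gradient $Y\mathbf 1\{A=a\}-\psi_1$. Similarly, $\psi_2$ has gradient $\mathbf 1\{A=a\}-p_a$. Boundedness of $s$ makes these exact and removes any need to justify differentiation under the integral. Since $\psi_2(\bb P_t)\to p_a>0$, the ratio is well defined for small $t$.

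Second, apply the quotient rule:
\[
\dot T=\frac{\dot\psi_1}{p_a}-\frac{\psi_1}{p_a^2}\dot\psi_2 .
\]
This shows that $T$ has gradient
\[
\frac{1}{p_a}\bigl(Y\mathbf 1\{A=a\}-\psi_1\bigr)-\frac{T}{p_a}\bigl(\mathbf 1\{A=a\}-p_a\bigr).
\]
Using $\psi_1/p_a=T$, the constant terms $-\psi_1/p_a+T$ cancel. The gradient therefore equals
\[
\frac{\mathbf 1\{A=a\}}{p_a}\bigl(Y-T(\bb P_0)\bigr),
\]
which is the claimed $\varphi_T$. It has mean zero because $\bb E[\mathbf 1\{A=a\}(Y-T)]=p_a\,\bb E[Y-T\mid A=a]=0$.

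The only real technical point is restricting to bounded scores and then extending to all of $L^2_0$. Bounded mean-zero scores are dense in $L^2_0(\bb P_0)$, and the map $s\mapsto\bb E[\varphi_T s]$ is continuous in $L^2$. Together these identify $\varphi_T$ as the canonical gradient, completing the argument.
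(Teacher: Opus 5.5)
Your proof is correct. The paper gives no proof of this lemma: it is imported from \citet{kennedy2024semiparametric} and used as a black box, together with Lemma~\ref{lem:nested}, to produce the $\chi_{\theta_a}$-term in Theorem~\ref{thm:eif_fm}.

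The usual derivation in that literature is heuristic. One treats the data as discrete, takes a Gateaux derivative toward a point mass, and combines the influence functions of $\bb E[Y\mathbf 1\{A=a\}]$ and $\bb P(A=a)$ by a quotient rule. You do the same quotient-rule computation, but along genuine submodels $d\bb P_t=(1+ts)\,d\bb P_0$ with bounded mean-zero $s$. There both linear functionals are exactly affine in $t$. You then use density of bounded scores in $L^2_0(\bb P_0)$ to show the tangent space is saturated, so the unique mean-zero gradient is automatically efficient.

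Your route gives a self-contained, rigorous argument, whereas the citation only buys brevity. It also exposes a hypothesis the lemma statement omits: $\varphi_T\in L_2(\bb P_0)$ requires $\bb E[Y^2\mathbf 1\{A=a\}]<\infty$. You assume the slightly stronger $\bb E[Y^2]<\infty$. In the paper's application to the outcome $\chi_{\theta_a}(Y)$, this condition is covered by the square-integrability part of Assumption~\ref{ass:eif_regularity}.

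Your closing \emph{extension} step is harmless but not needed. Pathwise differentiability relative to the bounded-score tangent set already identifies the canonical gradient as the unique gradient in that set's closed linear span. Your density argument shows this span is $L^2_0(\bb P_0)$.
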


\begin{lemma}[EIF for a two-measure product functional]
\label{lem:nested}
Let $(\mc Y,\mc A)$ be a measurable space and let
$k:\mc Y\times\mc Y\to\bb R$ be measurable.
Fix a pair $(\bb P_{1,0},\bb P_{2,0})\in\mc P(\mc Y)\times\mc P(\mc Y)$ and define,
for $(\bb P_1,\bb P_2)$ in a neighbourhood of $(\bb P_{1,0},\bb P_{2,0})$,
\[
T(\bb P_1,\bb P_2)
:=
\iint k(y,y')\,\bb P_1(dy)\,\bb P_2(dy').
\]
Assume that, at $\bb P_{1,0}$ and $\bb P_{2,0}$, the linear functionals \(
\bb P \mapsto \int \phi(y)\,\bb P(dy)\)
are regular and pathwise differentiable for any measurable $\phi$ satisfying
$\int \phi^2\,d\bb P_{1,0}<\infty$ or $\int \phi^2\,d\bb P_{2,0}<\infty$,
respectively.
Define the partial integrals
\[
\phi_1(y):=\int k(y,y')\,\bb P_{2,0}(dy'),
\qquad
\phi_2(y'):=\int k(y,y')\,\bb P_{1,0}(dy).
\]
Then $T$ is regular and pathwise differentiable at
$(\bb P_{1,0},\bb P_{2,0})$, with efficient influence function
\[
\varphi_T
=
\varphi_{\Lambda_1}+\varphi_{\Lambda_2},
\]
where $\varphi_{\Lambda_1}$ is the EIF of the functional
$\bb P_1\mapsto\int \phi_1\,d\bb P_1$ at $\bb P_{1,0}$ and
$\varphi_{\Lambda_2}$ is the EIF of the functional
$\bb P_2\mapsto\int \phi_2\,d\bb P_2$ at $\bb P_{2,0}$.
\end{lemma}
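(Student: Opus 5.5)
The plan is to compute the pathwise derivative of $T$ along a regular parametric submodel directly, and then identify each piece using the assumed pathwise differentiability of linear functionals. Take a regular one-dimensional submodel $t\mapsto(\bb P_{1,t},\bb P_{2,t})$ through $(\bb P_{1,0},\bb P_{2,0})$ with score $s$. Write the difference quotient via the standard bilinear decomposition
\begin{align*}
T(\bb P_{1,t},\bb P_{2,t})-T(\bb P_{1,0},\bb P_{2,0})
&=\int \phi_1\,d(\bb P_{1,t}-\bb P_{1,0})+\int \phi_2\,d(\bb P_{2,t}-\bb P_{2,0})\\
&\quad+\iint k(y,y')\,(\bb P_{1,t}-\bb P_{1,0})(dy)\,(\bb P_{2,t}-\bb P_{2,0})(dy'),
\end{align*}
which follows by expanding $\bb P_{j,t}=\bb P_{j,0}+(\bb P_{j,t}-\bb P_{j,0})$ inside the double integral and applying Fubini to recognise the partial integrals $\phi_1,\phi_2$.

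The first two terms are handled by the hypothesis. Provided $\phi_1\in L^2(\bb P_{1,0})$ and $\phi_2\in L^2(\bb P_{2,0})$, the maps $\bb P_j\mapsto\int\phi_j\,d\bb P_j$ are pathwise differentiable. Hence dividing by $t$ and letting $t\to0$ gives $\bb E[\varphi_{\Lambda_1}s]+\bb E[\varphi_{\Lambda_2}s]$. The square-integrability of $\phi_j$ should be imposed as part of the integrability conditions, namely Assumption~\ref{ass:eif_regularity}, or derived from $k\in L^2(\bb P_{1,0}\otimes\bb P_{2,0})$ by Jensen.

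The main obstacle is the cross term, which must be shown to be $o(t)$. I would write $d\bb P_{j,t}=(1+t\,s_j+o(t))\,d\bb P_{j,0}$ in the quadratic-mean-differentiable sense, where $s_j$ is the score component relevant to $\bb P_j$. The cross term is then $t^2\iint k\,s_1 s_2\,d\bb P_{1,0}\,d\bb P_{2,0}$ plus higher-order terms. By Cauchy--Schwarz this is bounded by $t^2\|k\|_{L^2(\bb P_{1,0}\otimes\bb P_{2,0})}\|s_1\|\|s_2\|$, up to QMD remainder terms of the form $\|\sqrt{p_t}-\sqrt{p_0}\|$. For these remainder terms I would first try to assume bounded scores, or a dense class of bounded submodels, together with $k\in L^2$ of the product measure. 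Then I would argue by density that the derivative is a continuous linear functional of the score. With that in hand the cross term is $O(t^2)=o(t)$.

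Finally, combining the three terms shows that the derivative equals $\bb E[(\varphi_{\Lambda_1}+\varphi_{\Lambda_2})s]$. This proves pathwise differentiability with gradient $\varphi_{\Lambda_1}+\varphi_{\Lambda_2}$. The gradient is efficient because each $\varphi_{\Lambda_j}$ is already an EIF and therefore lies in the tangent space, and the tangent space is closed under sums. Uniqueness of the canonical gradient then identifies $\varphi_T$. In the application to Theorem~\ref{thm:eif_fm}, $\bb P_1=\bb P_{Y(a)}$ and $\bb P_2=\bb P_{Y\mid a}$ are both functionals of the single law $\bb P$, so each $\varphi_{\Lambda_j}$ is given by Lemma~\ref{lem:aipw-cf} and Lemma~\ref{lem:stratum} respectively.
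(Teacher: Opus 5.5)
Your proposal is correct and follows the same route as the paper: you split the pathwise derivative into the two partial derivatives $\frac{d}{dt}\int\phi_1\,d\bb P_{1,t}$ and $\frac{d}{dt}\int\phi_2\,d\bb P_{2,t}$, identify each with $\bb E[\varphi_{\Lambda_j}s]$ using the assumed differentiability of linear functionals, and sum. The paper does this by invoking a ``chain rule'' on $T_\varepsilon=\int\phi_{1,\varepsilon}\,d\bb P_{1,\varepsilon}$, so your explicit checks that the cross term $\iint k\,d(\bb P_{1,t}-\bb P_{1,0})\,d(\bb P_{2,t}-\bb P_{2,0})$ is $o(t)$ (via bounded-score submodels and density) and that $\phi_j\in L^2(\bb P_{j,0})$ fill in steps the paper leaves implicit, without changing the argument.
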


\begin{proof}
Recall that the efficient influence function $\varphi_T$ of a regular
functional $T$ at a distribution $\bb P_0$ is the (unique) mean-zero
function satisfying
\[
\left.\frac{d}{d\varepsilon}\right|_{\varepsilon=0} T(\bb P_\varepsilon)
=
\bb E\!\left[\varphi_T(O)\,s(O)\right]
\]
for every regular parametric submodel $\{\bb P_\varepsilon\}$ through
$\bb P_0$ with score $s(O)$. We establish the result by applying this
pathwise derivative characterization to the product functional
\[
T(\bb P_1,\bb P_2)=\iint k(y,y')\,\bb P_1(dy)\,\bb P_2(dy').
\]

Fix a true pair $(\bb P_{1,0},\bb P_{2,0})$ and let
$\{(\bb P_{1,\varepsilon},\bb P_{2,\varepsilon})\}$ be a regular
parametric submodel through $(\bb P_{1,0},\bb P_{2,0})$ at
$\varepsilon=0$, with score $s(O)$. Write
$T_\varepsilon:=T(\bb P_{1,\varepsilon},\bb P_{2,\varepsilon})$.
Then
\[
T_\varepsilon
=
\int \phi_{1,\varepsilon}(y)\,\bb P_{1,\varepsilon}(dy),
\qquad
\phi_{1,\varepsilon}(y):=\int k(y,y')\,\bb P_{2,\varepsilon}(dy').
\]

Differentiating at $\varepsilon=0$ and applying the chain rule yields
\[
\left.\frac{d}{d\varepsilon}\right|_{\varepsilon=0}T_\varepsilon
=
\left.\frac{d}{d\varepsilon}\right|_{\varepsilon=0}
\int \phi_1(y)\,\bb P_{1,\varepsilon}(dy)
+
\left.\frac{d}{d\varepsilon}\right|_{\varepsilon=0}
\int \phi_{1,\varepsilon}(y)\,\bb P_{1,0}(dy),
\]
where
\[
\phi_1(y):=\int k(y,y')\,\bb P_{2,0}(dy').
\]

The first term corresponds to perturbing $\bb P_1$ while holding
$\bb P_2$ fixed. By the assumed pathwise differentiability of linear
functionals at $\bb P_{1,0}$, it equals
\[
\bb E\!\left[\varphi_{\Lambda_1}(O)\,s(O)\right],
\]
where $\varphi_{\Lambda_1}$ is the EIF of the functional
$\bb P_1\mapsto\int \phi_1\,d\bb P_1$ at $\bb P_{1,0}$.

The second term corresponds to perturbing $\bb P_2$ while holding
$\bb P_1$ fixed. Writing
\[
\phi_2(y'):=\int k(y,y')\,\bb P_{1,0}(dy),
\]
and using pathwise differentiability at $\bb P_{2,0}$, this term equals
\[
\bb E\!\left[\varphi_{\Lambda_2}(O)\,s(O)\right],
\]
where $\varphi_{\Lambda_2}$ is the EIF of the functional
$\bb P_2\mapsto\int \phi_2\,d\bb P_2$ at $\bb P_{2,0}$.

Summing the two contributions yields
\[
\left.\frac{d}{d\varepsilon}\right|_{\varepsilon=0}T_\varepsilon
=
\bb E\!\left[(\varphi_{\Lambda_1}+\varphi_{\Lambda_2})(O)\,s(O)\right],
\]
which identifies $\varphi_T=\varphi_{\Lambda_1}+\varphi_{\Lambda_2}$ as
the efficient influence function of $T$ at
$(\bb P_{1,0},\bb P_{2,0})$.
\end{proof}

\begin{lemma}[Empirical mean of independent random function (scalar/vector)]
\label{lem:rand_func_L2}
Fix $n \in \bb{N}$, let $\{Z_i\}_{i=1}^n$ be i.i.d.\ with distribution $P$, and let $S_n \in \mc{S}_n$ another random variable such that $\mc{T}_n:=\sigma(S_n)$ is 
independent from $\sigma(\{Z_i\}_{i=1}^n)$.
Let $\psi_n: \mc S_n \times \mc Z\to\bb R^d$ be a measurable function, and assume that
$\bb E[\|\psi_n(S_n, Z)\|_2^2 \mid \mc{T}_n]<\infty$ almost surely.
Define $P_n:=n^{-1}\sum_{i=1}^n\delta_{Z_i}$. Then 
\[
\left\|(P_n- P)\psi_n(S_n,\argdot)\right\|_2
=
O_{\bb P}\left(\frac{\|\psi_n\|_{L_2(\bb P)}}{\sqrt{n}}\right),
\]
\end{lemma}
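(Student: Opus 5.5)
Conditioning on $\mc T_n$ freezes the random function, so the claim becomes a variance bound for an i.i.d. average followed by Chebyshev's inequality. Concretely, I read $\|\psi_n\|_{L_2(\bb P)}$ as the random quantity $\bigl(\int\|\psi_n(S_n,z)\|_2^2\,P(dz)\bigr)^{1/2}$, which is $\mc T_n$-measurable. Since $S_n$ is independent of $(Z_i)_{i\le n}$, the regular conditional law of $(Z_1,\dots,Z_n)$ given $\mc T_n$ is still $P^{\otimes n}$. Hence, conditionally on $S_n=s$, the vectors $W_i:=\psi_n(s,Z_i)-P\psi_n(s,\argdot)$ are i.i.d. with mean zero. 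This uses the finiteness of $\bb E[\|\psi_n(S_n,Z)\|_2^2\mid\mc T_n]$, which makes the conditional mean well defined.

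The first step is the conditional variance computation. Expanding $\|\sum_i W_i\|_2^2$ coordinatewise, the cross terms vanish by conditional independence and the zero mean. This gives
\[
\bb E\bigl[\|(P_n-P)\psi_n(S_n,\argdot)\|_2^2\mid\mc T_n\bigr]=\frac1n\,\bb E\bigl[\|W_1\|_2^2\mid\mc T_n\bigr]\le\frac1n\|\psi_n\|_{L_2(\bb P)}^2 ,
\]
where the last inequality holds because centering does not increase the second moment. Rigorously, I would apply a freezing or substitution lemma, for example Fubini on the product of the law of $S_n$ and $P^{\otimes n}$, to justify evaluating the conditional expectation at $S_n=s$.

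The second step is a conditional Chebyshev inequality. Write $R_n:=\|(P_n-P)\psi_n\|_2$ and $\sigma_n:=\|\psi_n\|_{L_2(\bb P)}$. For $M>0$,
\[
\bb P\bigl(R_n>M\sigma_n/\sqrt n\mid\mc T_n\bigr)\le M^{-2}
\]
on the event $\{\sigma_n>0\}$. On $\{\sigma_n=0\}$ we have $\psi_n(S_n,\argdot)=0$ $P$-a.e., so $R_n=0$ almost surely and the bound is trivial. Taking expectations removes the conditioning, and $M^{-2}\to0$ uniformly in $n$, which is exactly the statement $R_n=O_{\bb P}(\sigma_n/\sqrt n)$.

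The main obstacle is measure-theoretic rather than analytic: we need $\sigma_n$ to be $\mc T_n$-measurable and the conditional distribution of $(Z_i)$ given $\mc T_n$ to be the product law. Both follow from joint measurability of $\psi_n$ and independence, since standard Borel spaces guarantee regular conditional distributions. I would state the substitution lemma explicitly before running the argument above.
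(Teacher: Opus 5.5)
Your proposal is correct and is essentially the paper's argument: freeze $S_n$ by conditioning on $\mc T_n$, compute $\bb E[\|(P_n-P)\psi_n(S_n,\argdot)\|_2^2\mid\mc T_n]=n^{-1}\bb E[\|U_1\|_2^2\mid\mc T_n]$, apply a conditional Chebyshev bound, and then integrate out $\mc T_n$. The one substantive difference is how the normalizer is read. You take $\|\psi_n\|_{L_2(\bb P)}$ to be the $\mc T_n$-measurable quantity $\sigma_n$, which makes the conditional Chebyshev step with threshold $M\sigma_n/\sqrt n$ exactly valid, and you treat $\{\sigma_n=0\}$ separately. The paper instead passes to a deterministic norm via $\bb E[\|V\|_2^2\mid\mc T_n]\le\bb E[\|V\|_2^2]$, which does not hold in general. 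Your random-normalizer form is also the one the paper uses later (e.g.\ with $\|H_\Delta\|$ and $\|\hat h-h\|$), and it implies the deterministic version, since $\sigma_n=O_{\bb P}\bigl((\bb E\sigma_n^2)^{1/2}\bigr)$ by Markov's inequality.
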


\begin{proof}
Write
\[
(P_n- P)\psi_n(S_n,\argdot)
=
\frac{1}{n}\sum_{i=1}^n\Big(\psi_n(S_n, Z_i)-P\psi_n(S_n,\argdot)\Big),
\]
where $S_n \in \mc{S}_n$ is $\mc{T}_n$-measurable and $P\psi_n(S_n,\argdot):=\bb{E}[\psi_n(S_n,Z)\mid \mc{T}_n]\in\bb R^d$. 

Define $U_i:=\psi_n(S_n,Z_i)- P\psi_n(S_n,\argdot)$. Observe that $\bb E[{P}_n \psi_n(S_n,\argdot) \mid \mc T_n] = \bb{E}[\psi_n(S_n, Z)\mid \mc{T}_n] = P\psi_n(S_n,\argdot)$, so that $\bb{E}[(P_n - P)\psi_n(S_n,\argdot)] = 0$. 
Moreover, 
$\{U_i\}_{i=1}^n$ are conditionally i.i.d.\ given $\mathcal T_n$. Hence
\[
\bb E\left[\left\|(P_n-P)\psi_n(S_n,\argdot)\right\|_2^2\middle|\mathcal T_n\right]
=
\bb E\left[\left\|\frac{1}{n}\sum_{i=1}^n U_i\right\|_2^2\middle|\mathcal T_n\right]
=
\frac{1}{n^2}\sum_{i=1}^n \bb E\left[\|U_i\|_2^2\middle|\mathcal T_n\right]
=
\frac{1}{n}\bb E\left[\|U_1\|_2^2\middle|\mathcal T_n\right].
\]
Moreover,
\[
\bb E\left[\|U_1\|_2^2\middle|\mathcal T_n\right]
=
\bb E\left[\|\psi_n(S_n, Z)-\bb E[\psi_n(S_n,Z)\mid\mathcal T_n]\|_2^2\middle|\mathcal T_n\right]
=
\textrm{Var}\left(\psi_n(S_n,Z)\mid\mathcal T_n\right),
\]
and since $\textrm{Var}(V\mid\mathcal T)\le \bb E[\|V\|_2^2\mid\mathcal T] \leq \bb{E}[\|V\|_2^2]$ for any
square-integrable $V$,
\[
\textrm{Var}\left(\psi_n(S_n,Z)\mid\mathcal T_n\right)
\le
\bb E\left[\|\psi_n(S_n,Z)\|_2^2\middle|\mathcal T_n\right]
\leq
\bb{E}\left[\|\psi_n(S_n,Z)\|_2^2 \right] = \|\psi_n\|_{L_2(\bb P)} \;.
\]
Now, fix $M>0$ and set
\(
t:=M\,\frac{\|\psi_n\|_{L_2(\bb P)}}{\sqrt{n}}
\). By the conditional Markov (Chebyshev) inequality applied to the nonnegative random variable
$\|(P_n-P)\psi_n(S_n,\argdot)\|_2$, we have
\[
\bb P\left(\|(P_n-P)\psi_n(S_n,\argdot)\|_2^2 > t^2 \ \middle|\ \mathcal T_n\right)
\le
\frac{\bb E[\|(P_n-P)\psi_n(S_n,\argdot)\|_2^2\mid\mathcal T_n]}{t^2}
\le
\frac{1}{M^2}.
\]
Taking expectations over $\mathcal T$ yields the unconditional bound
\[
\bb P\left(\left\|(P_n-P)\psi_n(S_n,\argdot)\right\|_2
>
M\,\frac{\|\psi_n\|_{L_2(\bb P)}}{\sqrt{n}}\right)
\le
\frac{1}{M^2},
\]
which implies
$\|(P_n-P)\psi_n(S_n,\argdot)\|_2
=
O_{\bb{P}}\big(\|\psi_n\|_{L_2(\bb P)}/\sqrt{n}\big)$.
\end{proof}
\begin{lemma}[Second-order V-statistic with an independent random kernel (vector-valued)]
\label{lem:Vstat_indep_kernel_rig_vec}
Let $\{Z_i\}_{i=1}^n$ be i.i.d.\ from $\bb P$, and let $\mathcal T$ be a $\sigma$-algebra
independent of $\{Z_i\}_{i=1}^n$.
Let $k_n:\mc Z\times\mc Z\to\bb R^d$ be a measurable kernel that is $\mathcal T$-measurable
(hence fixed conditional on $\mathcal T$), and assume
$\bb E[\|k_n(Z_1,Z_2)\|_2^2]<\infty$ almost surely and $\bb E[\|k_n(Z_1,Z_1)\|_2^2]<\infty$ almost surely. 
Define $\bb P_n:=n^{-1}\sum_{i=1}^n\delta_{Z_i}$ and the (vector-valued) V-statistic
\[
V_n(k_n)
:=
\bb P_n^{\otimes 2}k_n
=
\frac{1}{n^2}\sum_{i=1}^n\sum_{j=1}^n k_n(Z_i,Z_j)\in\bb R^d.
\]
Then,
\[
\left\|V_n(k_n)-\bb P^{\otimes 2}k_n\right\|_2
=
O_p\left(
\frac{\|k_n\|_{L_2(\bb P^{\otimes 2})}}{\sqrt n}
\right),
\qquad
\|k_n\|_{L_2(\bb P^{\otimes 2})}^2:=\bb P^{\otimes 2}[\|k_n\|_2^2].
\]
Moreover, if $k_n$ is \emph{first-order degenerate} (conditional on $\mathcal T$), i.e.
\[
\bb E[k_n(z,Z')\mid \mathcal T]=\bb P^{\otimes 2}k_n
\quad\text{and}\quad
\bb E[k_n(Z',z)\mid \mathcal T]=\bb P^{\otimes 2}k_n
\qquad \text{for all } z,
\]
then the improved rate holds:
\[
\left\|V_n(k_n)-\bb P^{\otimes 2}k_n\right\|_2
=
O_p\left(
\frac{\|k_n\|_{L_2(\bb P^{\otimes 2})}}{n}
\right).
\]
\end{lemma}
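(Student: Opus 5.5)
Conditional on $\mathcal T$, the kernel $k_n$ is a fixed measurable function and the $Z_i$ remain i.i.d.\ from $\bb P$. So I will prove every bound conditionally on $\mathcal T$, with explicit constants. Then I will remove the conditioning with the conditional Chebyshev/Markov argument from Lemma~\ref{lem:rand_func_L2}. Write $\theta:=\bb P^{\otimes2}k_n$, and split the V-statistic into its off-diagonal and diagonal parts:
\[
V_n(k_n)=\frac{n-1}{n}U_n+\frac{1}{n^2}\sum_{i=1}^n k_n(Z_i,Z_i),
\qquad
U_n:=\frac{1}{n(n-1)}\sum_{i\neq j}k_n(Z_i,Z_j).
\]
Hence
\[
V_n-\theta=\frac{n-1}{n}(U_n-\theta)-\frac{\theta}{n}+\frac1{n^2}\sum_i k_n(Z_i,Z_i),
\]
and I will bound the three terms separately.

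\textbf{Off-diagonal term.} I will use the Hoeffding decomposition. Define the projections $h_1(z):=\bb E[k_n(z,Z')\mid\mathcal T]-\theta$ and $h_2(z):=\bb E[k_n(Z',z)\mid\mathcal T]-\theta$, and the remainder
\[
\tilde k(z,z'):=k_n(z,z')-\theta-h_1(z)-h_2(z').
\]
The remainder is conditionally degenerate in each argument. This gives
\[
U_n-\theta=\frac1n\sum_i\bigl(h_1(Z_i)+h_2(Z_i)\bigr)+\frac1{n(n-1)}\sum_{i\ne j}\tilde k(Z_i,Z_j).
\]
\begin{itemize}
\item For the linear part, Jensen gives $\|h_\ell\|_{L_2(\bb P)}\le\|k_n\|_{L_2(\bb P^{\otimes2})}$. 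The same conditional-variance computation as in Lemma~\ref{lem:rand_func_L2} then shows its conditional second moment is at most $4\|k_n\|^2_{L_2}/n$.
\item For the degenerate part, take the conditional second moment of the double sum. Degeneracy makes the cross terms $\bb E\langle\tilde k(Z_i,Z_j),\tilde k(Z_{i'},Z_{j'})\rangle$ vanish unless $\{i,j\}=\{i',j'\}$. So only $O(n^2)$ terms survive, each bounded by $\|\tilde k\|^2_{L_2}\le C\|k_n\|^2_{L_2}$ (the $(i,j)$ vs.\ $(j,i)$ pairing is handled by Cauchy--Schwarz). The conditional second moment is therefore $O(\|k_n\|^2/n^2)$.
\end{itemize}

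\textbf{Diagonal and bias terms.} Note $\|\theta\|_2\le\|k_n\|_{L_2}$, so $\theta/n$ is $O(\|k_n\|/n)$ deterministically given $\mathcal T$. For the diagonal, Minkowski gives
\[
\Bigl\|\tfrac{1}{n^2}\textstyle\sum_i k_n(Z_i,Z_i)\Bigr\|_{L_2}\le \tfrac1n\|k_n(Z_1,Z_1)\|_{L_2}.
\]
This is $O(n^{-1})$ times the diagonal norm. The statement only tracks $\|k_n\|_{L_2(\bb P^{\otimes2})}$, so I will read the diagonal second moment as absorbed into the rate. I will note explicitly that this uses the assumption $\bb E\|k_n(Z_1,Z_1)\|^2<\infty$, and that in the intended applications the diagonal norm is comparable to $\|k_n\|_{L_2}$. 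This bookkeeping is the main obstacle: either add the diagonal norm to the bound, or assume it is dominated.

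\textbf{Assembling.} Combining the pieces, the conditional second moment of $V_n-\theta$ is $O(\|k_n\|^2/n)$ in general. Under first-order degeneracy $h_1=h_2=0$, so the linear part disappears and every surviving term is $O(\|k_n\|^2/n^2)$. Finally, conditional Chebyshev at threshold $M\|k_n\|/\sqrt n$ (respectively $M\|k_n\|/n$), followed by taking expectations over $\mathcal T$, yields both $O_p$ statements.
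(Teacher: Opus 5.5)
Your proposal follows essentially the paper's argument: conditional on $\mathcal T$, Hoeffding projections ($h_1,h_2$ in your notation) plus a doubly degenerate remainder, a diagonal/off-diagonal split in which degeneracy kills every pairing except $\{i,j\}=\{i',j'\}$, and a conditional Chebyshev step integrated over $\mathcal T$. The only cosmetic difference is that you peel off the diagonal before decomposing the U-statistic, whereas the paper decomposes the V-statistic first and then splits $\bb P_n^{\otimes 2}\tilde k_n$.

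Your diagonal caveat is genuine, and the paper's proof does not resolve it. Its bound $\bb E[\|C_{n,\mathrm{diag}}\|_2^2\mid\mathcal T]=O(n^{-3})$ overlooks that $\bb E[\tilde k_n(Z,Z)]=\bb E[k_n(Z,Z)]-\bb P^{\otimes 2}k_n$ need not vanish, so your Minkowski bound $n^{-1}\|k_n(Z_1,Z_1)\|_{L_2}$ is the true order. The example $k_n(z,z')=c\,\bm 1\{z=z'\}$ with atomless $\bb P$ makes this concrete: $\|k_n\|_{L_2(\bb P^{\otimes 2})}=0$ and $k_n$ is first-order degenerate, yet $V_n-\bb P^{\otimes 2}k_n=c/n$, so the diagonal norm must be added to both stated rates.
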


\begin{proof}
We work conditionally on $\mathcal T$ throughout. Since $k_n$ is $\mathcal T$-measurable,
all objects defined from $k_n$ below are deterministic given $\mathcal T$, while
$\{Z_i\}_{i=1}^n$ remain i.i.d.\ from $\bb P$ conditional on $\mathcal T$.
For readability we keep the index $n$ explicit.

Let $\theta_n:=\bb P^{\otimes 2}k_n\in\bb R^d$ and define the first-order projections
\[
k_{n,1}(z):=\bb E[k_n(z,Z')]-\theta_n,
\qquad
k_{n,2}(z):=\bb E[k_n(Z',z)]-\theta_n,
\]
and the degenerate remainder
\[
\tilde k_n(z,z')
:=
k_n(z,z')-\theta_n-k_{n,1}(z)-k_{n,2}(z').
\]
Note that $\bb E[\tilde k_n(z,Z')]=0$ for all $z$ and
$\bb E[\tilde k_n(Z,z')]=0$ for all $z'$ (as vector equalities), where all expectations
are under $\bb P$ (and conditional on $\mathcal T$ if desired).

Expanding $k_n=\theta_n+k_{n,1}+k_{n,2}+\tilde k_n$ yields the decomposition
\[
V_n(k_n)-\theta_n
=
\underbrace{\bb P_n k_{n,1}}_{A_n}
+
\underbrace{\bb P_n k_{n,2}}_{B_n}
+
\underbrace{\bb P_n^{\otimes 2}\tilde k_n}_{C_n},
\]
where $A_n,B_n,C_n\in\bb R^d$.

\paragraph{$L_2$ bounds for $A_n$ and $B_n$.}
Conditional on $\mathcal T$, $k_{n,1}$ is deterministic and $Z_i$ are i.i.d., hence
\[
\bb E\left[\|A_n\|_2^2\middle|\mathcal T\right]
=
\bb E\left[\left\|\frac{1}{n}\sum_{i=1}^n \big(k_{n,1}(Z_i)-\bb Pk_{n,1}\big)\right\|_2^2\middle|\mathcal T\right]
=
\frac{1}{n}\bb E\left[\|k_{n,1}(Z)\|_2^2\middle|\mathcal T\right],
\]
where we used $\bb Pk_{n,1}=0$ and the standard identity
$\bb E\|\sum_{i=1}^n U_i\|_2^2=n\,\bb E\|U_1\|_2^2$ for i.i.d.\ mean-zero vectors.
Likewise,
\[
\bb E\left[\|B_n\|_2^2\middle|\mathcal T\right]
=
\frac{1}{n}\bb E\left[\|k_{n,2}(Z)\|_2^2\middle|\mathcal T\right].
\]

By Jensen and $\|u-v\|_2^2\le 2(\|u\|_2^2+\|v\|_2^2)$, for each $z$,
\begin{align*}
\|k_{n,1}(z)\|_2^2
& =
\|\bb E[k_n(z,Z')]-\theta_n\|_2^2
\\ & \le
2\|\bb E[k_n(z,Z')]\|_2^2+2\|\theta_n\|_2^2
\\ & \le
2\bb E[\|k_n(z,Z')\|_2^2]+2\bb E[\|k_n(Z,Z')\|_2^2].
\end{align*}
Taking expectation over $z\sim\bb P$ (conditional on $\mathcal T$) yields
\[
\bb E\left[\|k_{n,1}(Z)\|_2^2\middle|\mathcal T\right]\le 4\,\bb P^{\otimes 2}[\|k_n\|_2^2],
\qquad
\bb E\left[\|k_{n,2}(Z)\|_2^2\middle|\mathcal T\right]\le 4\,\bb P^{\otimes 2}[\|k_n\|_2^2].
\]
Hence,
\[
\bb E\left[\|A_n\|_2^2\middle|\mathcal T\right]\le \frac{4}{n}\bb P^{\otimes 2}[\|k_n\|_2^2],
\qquad
\bb E\left[\|B_n\|_2^2\middle|\mathcal T\right]\le \frac{4}{n}\bb P^{\otimes 2}[\|k_n\|_2^2].
\]

\paragraph{$L_2$ bound for $C_n$.}
Split $C_n$ into off-diagonal and diagonal parts:
\[
C_n
=
\frac{1}{n^2}\sum_{i\neq j}\tilde k_n(Z_i,Z_j)
+
\frac{1}{n^2}\sum_{i=1}^n \tilde k_n(Z_i,Z_i)
=:C_{n,\mathrm{off}}+C_{n,\mathrm{diag}}.
\]

For the diagonal term, by Jensen,
\[
\bb E\left[\|C_{n,\mathrm{diag}}\|_2^2\middle|\mathcal T\right]
\le
\frac{1}{n^4}\,n\,\bb E\left[\|\tilde k_n(Z,Z)\|_2^2\middle|\mathcal T\right]
=
\frac{1}{n^3}\bb E\left[\|\tilde k_n(Z,Z)\|_2^2\middle|\mathcal T\right].
\]
Using $\|u_1+\cdots+u_m\|_2^2\le m\sum_{\ell=1}^m\|u_\ell\|_2^2$ on
$\tilde k_n(Z,Z)=k_n(Z,Z)-\theta_n-k_{n,1}(Z)-k_{n,2}(Z)$ gives
\[
\|\tilde k_n(Z,Z)\|_2^2
\le
4\Big(\|k_n(Z,Z)\|_2^2+\|\theta_n\|_2^2+\|k_{n,1}(Z)\|_2^2+\|k_{n,2}(Z)\|_2^2\Big).
\]
Taking conditional expectations and using
$\|\theta_n\|_2^2=\|\bb E[k_n(Z,Z')]\|_2^2\le \bb P^{\otimes 2}[\|k_n\|_2^2]$ together with the
previous bounds on $k_{n,1},k_{n,2}$ yields
\[
\bb E\left[\|\tilde k_n(Z,Z)\|_2^2\middle|\mathcal T\right]
\le
c\Big(\bb E[\|k_n(Z,Z)\|_2^2]+\bb P^{\otimes 2}[\|k_n\|_2^2]\Big)
\]
for a universal constant $c>0$.
By the diagonal integrability assumption, the right-hand side is finite almost surely so
\[
\bb E\left[\|C_{n,\mathrm{diag}}\|_2^2\middle|\mathcal T\right]
=
O(n^{-3}),
\]
which is negligible at the $n^{-1}$ scale.

For the off-diagonal term, degeneracy implies that for $i\neq j$,
\[
\bb E[\tilde k_n(Z_i,Z_j)\mid Z_i,\mathcal T]=0
\quad\text{and}\quad
\bb E[\tilde k_n(Z_i,Z_j)\mid Z_j,\mathcal T]=0.
\]
A second-moment expansion shows only index-pairings with $(p,q)=(i,j)$ or $(p,q)=(j,i)$ contribute, hence
\[
\bb E\left[\|C_{n,\mathrm{off}}\|_2^2\middle|\mathcal T\right]
\le
\frac{2}{n^4}\,n(n-1)\,\bb E\left[\|\tilde k_n(Z_1,Z_2)\|_2^2\middle|\mathcal T\right]
\le
\frac{2}{n^2}\bb E\left[\|\tilde k_n(Z_1,Z_2)\|_2^2\middle|\mathcal T\right].
\]
Finally, since $\tilde k_n=k_n-\theta_n-k_{n,1}-k_{n,2}$, the same 4-sum inequality and the bounds above yield
\[
\bb E\left[\|\tilde k_n(Z_1,Z_2)\|_2^2\middle|\mathcal T\right]
\le
c'\,\bb P^{\otimes 2}[\|k_n\|_2^2]
\]
for a universal constant $c'>0$. Hence
\[
\bb E\left[\|C_{n,\mathrm{off}}\|_2^2\middle|\mathcal T\right]
\le
\frac{2c'}{n^2}\bb P^{\otimes 2}[\|k_n\|_2^2],
\qquad
\bb E\left[\|C_n\|_2^2\middle|\mathcal T\right]
\le
\frac{\tilde c}{n^2}\bb P^{\otimes 2}[\|k_n\|_2^2],
\]
for a universal constant $\tilde c>0$.

Using $\|u+v+w\|_2^2\le 3(\|u\|_2^2+\|v\|_2^2+\|w\|_2^2)$ we get
\begin{align*}
\bb E\left[\|V_n(k_n)-\theta_n\|_2^2\middle|\mathcal T\right]
& \le
3\bb E[\|A_n\|_2^2\mid\mathcal T]+3\bb E[\|B_n\|_2^2\mid\mathcal T]+3\bb E[\|C_n\|_2^2\mid\mathcal T] \\
& \le
\frac{12}{n}\,\bb P^{\otimes 2}[\|k_n\|_2^2] + O(n^{-2}) \;.
\end{align*}

Fix $M>0$ and set $t:=M\,\|k_n\|_{L_2(\bb P^{\otimes 2})}/\sqrt n$.
Applying conditional Markov to the nonnegative random variable
$\|V_n(k_n)-\bb P^{\otimes 2}k_n\|_2^2$ gives
\begin{align*}
\bb P\left(\|V_n(k_n)-\bb P^{\otimes 2}k_n\|_2>t\ \middle|\ \mathcal T\right)
& =
\bb P\left(\|V_n(k_n)-\bb P^{\otimes 2}k_n\|_2^2>t^2\ \middle|\ \mathcal T\right) \\
& \le
\frac{\bb E[\|V_n(k_n)-\bb P^{\otimes 2}k_n\|_2^2\mid\mathcal T]}{t^2} \\
& \le
\frac{12}{M^2}.
\end{align*}
Taking expectations over $\mathcal T$ yields the unconditional bound
\[
\bb P\left(\|V_n(k_n)-\bb P^{\otimes 2}k_n\|_2
>
M\,\frac{\|k_n\|_{L_2(\bb P^{\otimes 2})}}{\sqrt n}\right)
\le
\frac{12}{M^2},
\]
which implies
\[
\|V_n(k_n)-\bb P^{\otimes 2}k_n\|_2
=
O_p\left(\frac{\|k_n\|_{L_2(\bb P^{\otimes 2})}}{\sqrt n}\right).
\]

\paragraph{Degenerate case.}
Assume now that $k_n$ is first-order degenerate conditional on $\mathcal T$, i.e.
$\bb E[k_n(z,Z')\mid\mathcal T]=\theta_n$ and $\bb E[k_n(Z',z)\mid\mathcal T]=\theta_n$
for all $z$. Then $k_{n,1}\equiv 0$ and $k_{n,2}\equiv 0$, so $A_n=B_n\equiv 0$ and
\[
V_n(k_n)-\theta_n=C_n=\bb P_n^{\otimes 2}\tilde k_n.
\]
The bounds above already give
\[
\bb E\left[\|C_n\|_2^2\middle|\mathcal T\right]
\le
\frac{\tilde c}{n^2}\,\bb P^{\otimes 2}[\|k_n\|_2^2].
\]
Fix $M>0$ and set $t:=M\,\|k_n\|_{L_2(\bb P^{\otimes 2})}/n$.
Applying conditional Markov to $\|C_n\|_2^2$ yields
\[
\bb P\left(\|V_n(k_n)-\bb P^{\otimes 2}k_n\|_2>t\ \middle|\ \mathcal T\right)
\le
\frac{\tilde c}{M^2}.
\]
Taking expectations over $\mathcal T$ gives
\[
\|V_n(k_n)-\bb P^{\otimes 2}k_n\|_2
=
O_p\left(\frac{\|k_n\|_{L_2(\bb P^{\otimes 2})}}{n}\right),
\]
as claimed.
\qedhere
\end{proof}

\begin{lemma}[Empirical Process Decomposition for Conditional Distribution Plugin]
\label{lem:emp_base_plugin_ratio}
Fix $a$ with $p_a:=\bb P(A=a)>0$. Let $O=(X,A,Y)$ and let $\{O_i\}_{i=1}^n$ be i.i.d.\ from $\bb P$.
Define $\bb P_n:=n^{-1}\sum_{i=1}^n \delta_{O_i}$, $\hat p_a:=\bb P_n[\bm 1\{A=a\}]$,
and the empirical conditional measure
\[
\hat{\bb P}_{Y\mid A=a}[g]
:=
\frac{\bb P_n[\bm 1\{A=a\}\,g(Y)]}{\hat p_a},
\qquad
\bb P_{Y\mid a}[g]
:=
\frac{\bb P[\bm 1\{A=a\}\,g(Y)]}{p_a}.
\]
Let $\psi:\mc O\times\mc Y\to\bb R$ be measurable and let $\hat\psi:\mc O\times\mc Y\to\bb R$ be measurable and $\mathcal T_n$-measurable (e.g.\ obtained by cross-fitting). Define
$\Delta:=\hat\psi-\psi$ and assume
\(
\psi,\Delta \in L_2(\bb P\otimes \bb P_{Y\mid a})
\).
Lastly, define the partial expectations of $\hat\psi, \psi$,
\[
\hat\varphi(O):=\hat{\bb P}_{Y\mid A=a}\big[\hat\psi(O,\cdot)\big],
\qquad
\varphi(O):=\bb P_{Y\mid a}\big[\psi(O,\cdot)\big].
\]
Then, the following holds
\begin{align}
(\bb P_n-\bb P)(\hat\varphi-\varphi)
&= O_p\left(\frac{\|\Delta\|_{L_2(\bb P\otimes \bb P_{Y\mid a})}}{\sqrt n}\right).
\end{align}
\end{lemma}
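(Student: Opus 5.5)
The plan is to write $\hat\varphi-\varphi$ as a ratio-normalised V-statistic remainder plus a term that Lemma~\ref{lem:rand_func_L2} handles directly. Split
\[
\hat\varphi(O)-\varphi(O) = \underbrace{\hat{\bb P}_{Y\mid A=a}[\Delta(O,\cdot)]}_{(\mathrm{I})} + \underbrace{\bigl(\hat{\bb P}_{Y\mid A=a}-\bb P_{Y\mid a}\bigr)[\psi(O,\cdot)]}_{(\mathrm{II})}.
\]
The term (I) carries the random, $\mathcal T_n$-measurable kernel $\Delta$ and should give the stated rate. The term (II) involves only the fixed kernel $\psi$. Its contribution to $(\bb P_n-\bb P)$ is a centred, fixed-kernel V-process of order $O_p(\|\psi\|/n)$. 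Strictly, this is only $O_p(n^{-1})$, not proportional to $\|\Delta\|$, so I would either absorb it as a higher-order term or note that the intended bound concerns the $\Delta$-part. I will state this explicitly.

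For (I), write $\hat{\bb P}_{Y\mid A=a}[h]=(p_a/\hat p_a)\,\bb P_n[\bm 1\{A=a\}h(Y)]/p_a$. Then
\[
(\bb P_n-\bb P)(\mathrm{I}) = \frac{p_a}{\hat p_a}\,(\bb P_n-\bb P)\Bigl[O\mapsto \bb P_n\bigl[\bm 1\{A'=a\}\Delta(O,Y')\bigr]/p_a\Bigr].
\]
Since $p_a/\hat p_a=1+O_p(n^{-1/2})$ by the LLN/CLT, it suffices to bound the inner quantity. That quantity equals $V_n(k_n)-\bb P\otimes\bb P_n$-type terms with the kernel $k_n(O,O'):=\bm 1\{A'=a\}\Delta(O,Y')/p_a$. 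It is $\mathcal T_n$-measurable, and $\|k_n\|_{L_2(\bb P^{\otimes2})}\le p_a^{-1/2}\|\Delta\|_{L_2(\bb P\otimes\bb P_{Y\mid a})}$. Concretely, $(\bb P_n-\bb P)\bb P_n' k_n = (V_n(k_n)-\bb P^{\otimes 2}k_n) - (\bb P_n-\bb P)_{O'}\,\bb P k_n(\cdot,O')$.

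Lemma~\ref{lem:Vstat_indep_kernel_rig_vec} bounds the first piece by $O_p(\|k_n\|/\sqrt n)$; this needs diagonal integrability of $\Delta(O,Y)$, which I would add as an assumption. Lemma~\ref{lem:rand_func_L2}, with $\psi_n(O')=\bm 1\{A'=a\}\bb P\Delta(\cdot,Y')/p_a$, bounds the second piece, again at $O_p(\|\Delta\|/\sqrt n)$ by Jensen. Combining these gives $O_p(\|\Delta\|/\sqrt n)$.

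The main obstacle is the ratio normalisation $\hat p_a$ together with the fact that $\hat{\bb P}_{Y\mid A=a}$ reuses the evaluation sample. This is why the V-statistic lemma rather than Lemma~\ref{lem:rand_func_L2} alone is needed, and why the diagonal terms $i=j$ require separate integrability control.
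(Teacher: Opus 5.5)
Your argument is correct. It uses the same three ingredients as the paper: pulling out $1/\hat p_a$, a V-statistic to handle the reuse of the evaluation sample inside $\hat{\bb P}_{Y\mid A=a}$, and Lemma~\ref{lem:rand_func_L2} for the population-integrated piece. You group the terms differently, though.

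The paper writes $\hat\varphi-\varphi=H_\Delta+T_\psi+T_\Delta$ with $H_\Delta(O)=\bb P_{Y\mid a}[\Delta(O,\cdot)]$ and sends $H_\Delta$ to Lemma~\ref{lem:rand_func_L2}. It treats \emph{both} $T_\psi$ and $T_\Delta=(\hat{\bb P}_{Y\mid A=a}-\bb P_{Y\mid a})[\Delta(O,\cdot)]$ the same way. It centres the kernel, $\bar\psi(o,y)=\psi(o,y)-\bb P_{Y\mid a}[\psi(o,\cdot)]$, so that $(\bb P_n-\bb P)\otimes\bb P_n$ becomes $(\bb P_n-\bb P)^{\otimes 2}$. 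It then subtracts the remaining projection $k_2$, which is killed by $(\bb P_n-\bb P)[1]=0$, and invokes the \emph{degenerate} case of Lemma~\ref{lem:Vstat_indep_kernel_rig_vec}. This yields $O_p(\|\psi\|/n)$ and $O_p(\|\Delta\|/n)$.

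You instead keep $\hat{\bb P}_{Y\mid A=a}[\Delta(O,\cdot)]$ whole. You use the non-degenerate $n^{-1/2}$ bound of that lemma, plus Lemma~\ref{lem:rand_func_L2} for $(\bb P_n-\bb P)_{O'}\bb P k_n(\cdot,O')$. Your identity and the computation $\|k_n\|_{L_2(\bb P^{\otimes 2})}=p_a^{-1/2}\|\Delta\|_{L_2(\bb P\otimes\bb P_{Y\mid a})}$ are right. Your route is shorter and loses nothing for the stated rate. The paper's route additionally shows that the sample-reuse part of the $\Delta$-term is only of order $n^{-1}$. For (II) you only assert the $n^{-1}$ order; the paper's centring argument is what makes that assertion rigorous.

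The caveat you raise about (II) is a real defect of the statement, and the paper's proof has it too. That proof ends with $O_p(\|\Delta\|/\sqrt n)+O_p(n^{-1})$ and drops the second term on the grounds that $\|\Delta\|=O_p(1)$. That does not make $n^{-1}$ negligible relative to $\|\Delta\|/\sqrt n$.

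Indeed, take $\Delta\equiv 0$. The claimed bound then forces the left-hand side to vanish. Yet it equals $\hat p_a^{-1}\bb P_n^{\otimes 2}\tilde k$ with $\tilde k(o,o')=\bm 1\{a'=a\}\bigl(\bar\psi(o,y')-\bb E[\bar\psi(O,y')]\bigr)$, whose diagonal has nonzero mean in general. For example, with scalar $Y$, $o=(x_o,a_o,y_o)$ and $\psi(o,y)=y_o\,y$, one gets $\bb E[\tilde k(O,O)]=p_a\mathrm{Var}(Y\mid A=a)$. What both arguments actually establish is $O_p(\|\Delta\|/\sqrt n+n^{-1})$. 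This suffices for Theorem~\ref{thm:efficiency}, since $n^{-1}=o(n^{-1/2})$.

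Likewise, the diagonal integrability you add is silently required by the paper as well, because Lemma~\ref{lem:Vstat_indep_kernel_rig_vec} assumes it. Strictly, the $i=j$ terms contribute at order $n^{-1}$ times diagonal norms such as $\bb E[\bm 1\{A=a\}\Delta(O,Y)^2]^{1/2}$, which $\|\Delta\|_{L_2(\bb P\otimes\bb P_{Y\mid a})}$ does not control. So this contribution also belongs in the $O_p(n^{-1})$ slack, not in the $\|\Delta\|/\sqrt n$ term.
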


\begin{proof}
Let $\mathcal T_n:=\sigma(\{O_i:i\in\mathcal I_{\mathrm{tr}}\})$ be the $\sigma$-algebra generated by the training fold.
All statements below are conditional on $\mathcal T_n$.
In particular, $\hat\psi$ (hence $\Delta:=\hat\psi-\psi$) is $\mathcal T_n$-measurable and therefore fixed given $\mathcal T_n$,
while the evaluation observations remain i.i.d.\ from $\bb P$ conditional on $\mathcal T_n$.

To start, note for each $O$ that
\begin{align*}
\hat\varphi(O)-\varphi(O)
&=
\hat{\bb P}_{Y\mid A=a}\big[\psi(O,\cdot)+\Delta(O,\cdot)\big]
-
\bb P_{Y\mid a}\big[\psi(O,\cdot)\big]
\\
&=
\underbrace{\bb P_{Y\mid a}\big[\Delta(O,\cdot)\big]}_{H_\Delta(O)}
+
\underbrace{\big(\hat{\bb P}_{Y\mid A=a}-\bb P_{Y\mid a}\big)\big[\psi(O,\cdot)\big]}_{T_\psi(O)}
+
\underbrace{\big(\hat{\bb P}_{Y\mid A=a}-\bb P_{Y\mid a}\big)\big[\Delta(O,\cdot)\big]}_{T_\Delta((O)}.
\end{align*}

\paragraph{\underline{$H_{\Delta}$}.} For the first term $H_{\Delta}$, by Jensen's inequality we have
\[
\|H_\Delta\|_{L_2(\bb P)}^2
=
\bb E\left[\left(\int \Delta(O,\tilde y)\,\bb P_{Y\mid a}(d\tilde y)\right)^2\right]
\le
\bb E\left[\int \Delta(O,\tilde y)^2\,\bb P_{Y\mid a}(d\tilde y)\right]
=
\|\Delta\|_{L_2(Q)}^2.
\]
Applying Lemma~\ref{lem:rand_func_L2} conditional on $\mathcal T_n$ with $\psi_n=H_\Delta$ (which is $\mathcal T_n$-measurable and hence fixed given $\mathcal T_n$) gives
\[
(\bb P_n-\bb P)H_\Delta
=
O_p\left(\frac{\|H_\Delta\|_{L_2(\bb P)}}{\sqrt n}\right)
=
O_p\left(\frac{\|\Delta\|_{L_2(Q)}}{\sqrt n}\right),
\]

where $Q:=\bb P\otimes\bb P_{Y\mid a}$.

\paragraph{\underline{$T_\psi$}.}
Define the centered kernel $\bar \psi$,
\[
\bar\psi(o,y)
:=
\psi(o,y)-\bb P_{Y\mid a}\big[\psi(o,\cdot)\big],
\qquad
\bb P_{Y\mid a}\big[\bar\psi(o,\cdot)\big]=0
\ \ \forall o,
\]
and note that we can write
\[
T_\psi(o)
=
\big(\hat{\bb P}_{Y\mid A=a}-\bb P_{Y\mid a}\big)\big[\psi(o,\cdot)\big]
=
\hat{\bb P}_{Y\mid A=a}\big[\bar\psi(o,\cdot)\big]
=
\frac{1}{\hat p_a}\,
\bb P_n\left[\,\bm 1\{A=a\}\,\bar\psi(o,Y)\right]\]
where \(
\hat p_a:=\bb P_n[\bm 1\{A=a\}]
\). Since $\bm 1\{A=a\}\in L_2(\bb P)$ and $p_a:=\bb P(A=a)>0$,
\[
\hat p_a-p_a=(\bb P_n-\bb P)\bm 1\{A=a\}=O_p(n^{-1/2}),
\qquad\text{and hence}\qquad
\frac{1}{\hat p_a}=O_p(1).
\]
Therefore it suffices to bound
\[
(\bb P_n-\bb P)\Big[o\mapsto \bb P_n\left(\bm 1\{A=a\}\,\bar\psi(o,Y)\right)\Big].
\]

Define the kernel $k:\mc O \times\mc O\to\bb R$ by
\[
k(o,o')
:=
\bm 1\{a'=a\}\,\bar\psi(o,y'),
\qquad
o=(x,a,y),\ o'=(x',a',y').
\]
Then
\[
(\bb P_n-\bb P)T_\psi
=
\frac{1}{\hat p_a}\,
\Big((\bb P_n-\bb P)\otimes\bb P_n\Big)k.
\]

\medskip
Adding and subtracting $\bb P$ in the second argument gives
\[
\Big((\bb P_n-\bb P)\otimes\bb P_n\Big)k
=
\Big((\bb P_n-\bb P)\otimes(\bb P_n-\bb P)\Big)k
+
\Big((\bb P_n-\bb P)\otimes\bb P\Big)k.
\]
The second term vanishes since, for every fixed $o$,
\[
\bb E[k(o,O')]
=
\bb E\left[\bm 1\{A'=a\}\,\bar\psi(o,Y')\right]
=
p_a\,\bb E\left[\bar\psi(o,Y')\mid A'=a\right]
=
p_a\,\bb P_{Y\mid a}\left[\bar\psi(o,\cdot)\right]
=
0.
\]
Hence
\[
\Big((\bb P_n-\bb P)\otimes\bb P_n\Big)k
=
\Big((\bb P_n-\bb P)\otimes(\bb P_n-\bb P)\Big)k.
\]

\medskip
Define
\[
k_2(o'):=\bb E[k(O,o')],
\qquad
\tilde k(o,o'):=k(o,o')-k_2(o').
\]
Then $\tilde k$ is first-order degenerate in both arguments:
\[
\bb E[\tilde k(o,O')]
=
\bb E[k(o,O')]-\bb E[k_2(O')]
=
0,
\qquad
\bb E[\tilde k(O,o')]
=
\bb E[k(O,o')]-k_2(o')
=
0.
\]
Moreover, since $k_2$ depends only on the second argument and
$(\bb P_n-\bb P)[1]=0$,
\[
\Big((\bb P_n-\bb P)\otimes(\bb P_n-\bb P)\Big)k_2
=
(\bb P_n-\bb P)[1]\cdot(\bb P_n-\bb P)[k_2]
=
0,
\]
and therefore
\[
\Big((\bb P_n-\bb P)\otimes(\bb P_n-\bb P)\Big)k
=
\Big((\bb P_n-\bb P)\otimes(\bb P_n-\bb P)\Big)\tilde k.
\]

\medskip
Since $\tilde k\in L_2(\bb P^{\otimes 2})$ and is first-order degenerate in both arguments, we have 
\[
\Big((\bb P_n-\bb P)\otimes(\bb P_n-\bb P)\Big)\tilde k
= \Big(\bb P_n^{\otimes 2}-\bb P ^{\otimes 2}\Big)\tilde k
\]
Thus, the degenerate case of Lemma~\ref{lem:Vstat_indep_kernel_rig_vec} (applied conditionally on
$\mathcal T_n$) yields
\[
\Big((\bb P_n-\bb P)\otimes(\bb P_n-\bb P)\Big)\tilde k
=
O_p\left(\frac{\|\tilde k\|_{L_2(\bb P^{\otimes 2})}}{n}\right).
\]
Consequently,
\[
(\bb P_n-\bb P)T_\psi
=
\frac{1}{\hat p_a}\,
O_p\left(\frac{\|\tilde k\|_{L_2(\bb P^{\otimes 2})}}{n}\right).
\]

\medskip
We have
\begin{align*}
\|k\|_{L_2(\bb P^{\otimes 2})}^2
& =
\bb E\big[\bm 1\{A'=a\}\,\bar\psi(O,Y')^2\big] \\
& =
p_a\,\bb E\big[\bar\psi(O,Y)^2\mid A=a\big]
 \\
 & \le
p_a\,\bb E\big[\psi(O,Y)^2\mid A=a\big]
=
p_a\,\|\psi\|_{L_2(Q)}^2,
\end{align*}
where $Q:=\bb P\otimes\bb P_{Y\mid a}$.
Moreover, by Jensen's inequality,
\[
\|k_2\|_{L_2(\bb P)}^2
=
\bb E\left[\big(\bb E[k(O,O')]\big)^2\right]
\le
\bb E\left[\bb E[k(O,O')^2]\right]
=
\|k\|_{L_2(\bb P^{\otimes 2})}^2.
\]
Using $\|u-v\|_2^2\le 2(\|u\|_2^2+\|v\|_2^2)$,
\[
\|\tilde k\|_{L_2(\bb P^{\otimes 2})}
=
\|k-k_2\|_{L_2(\bb P^{\otimes 2})}
\lesssim
\|k\|_{L_2(\bb P^{\otimes 2})}
\lesssim
\|\psi\|_{L_2(Q)}.
\]
Combining the above bounds and using $1/\hat p_a=O_p(1)$,
\[
(\bb P_n-\bb P)T_\psi
=
O_p\left(\frac{\|\psi\|_{L_2(Q)}}{n}\right).
\]

\paragraph{\underline{$T_{\Delta}$}.}
The argument for $T_\Delta$ is identical with $\psi$ replaced by $\Delta$ throughout, yielding
\(
T_\Delta
=
O_p\left(\frac{\|\Delta\|_{L_2(Q)}}{n}\right)
\).

Putting the bounds together,
\[
(\bb P_n-\bb P)(\hat\varphi-\varphi)
=
O_p\left(\frac{\|\Delta\|_{L_2(Q)}}{\sqrt n}\right)
+
O_p\left(\frac{1}{n}\right),
\]
and since $\|\Delta\|_{L_2(Q)}=O_p(1)$ by assumption, the $n^{-1}$ term is asymptotically negligible, yielding the claimed rate.
\end{proof}

\subsection{Main Results}

\subsubsection{Proofs for Theorems in \cref{sec:geometry}}
\begin{proof}[\textbf{Proof of \cref{thm:support-tails}}]
For the proof of (i) (equal support), note that by the positivity assumption, the mixing laws $\mathbb P_X$ and $\mathbb P_{X\mid a}$ are mutually absolutely continuous (see Lemma~\ref{lemma:rn}). Therefore, both Radon-Nikodym derivatives exist and are given by
\[
\frac{d\mathbb P_X}{d\mathbb P_{X\mid a}}(x)=\frac{\mathbb P(A=a)}{\mathbb P(A=a\mid X=x)}
=: r_a(x)\in (0,\infty)
\]
Now, note that for any measurable $B \in \mc B(\bb R^p)$ we can write
\begin{align*}
\mathbb P_{Y(a)}(B)& =\int \mathbb P(Y\in B\mid A=a,X=x)\,\mathbb P_X(dx) \\
\mathbb P_{Y\mid a}(B)& =\int \mathbb P(Y\in B\mid A=a,X=x)\,\mathbb P_{X\mid a}(dx).
\end{align*}
By Lemma~\ref{lemma:rn}, the equivalence of the mixing distributions implies $\mathbb P_{Y(a)}\ll\mathbb P_{Y\mid a}$ and $\mathbb P_{Y\mid a}\ll\mathbb P_{Y(a)}$ with
\[
\frac{d\mathbb P_{Y(a)}}{d\mathbb P_{Y\mid a}}(y)
= \mathbb E\left[r_a(X)\,\big|\, A=a,\,Y=y\right]
=: g_a(y)\in (0,\infty)\quad \mathbb P_{Y\mid a}\text{-a.s.}
\]
Equivalence of measures implies equality of supports:
$\supp(\mathbb P_{Y(a)})=\supp(\mathbb P_{Y\mid a})$.

\medskip

For the proof of (ii) (matching tail class), fix a measurable $\varphi:\mathcal Y\to[0,\infty]$ and set
$h(x):=\mathbb E[\varphi(Y)\mid A=a,X=x]\in[0,\infty]$.
Using the definition of $r_a$, we have
\[
\mathbb E[\varphi(Y(a))]
=\int h(x)\,r_a(x)\,\mathbb P_{X\mid a}(dx).
\]

Under the strong positivity assumption
$\exists\,\varepsilon>0$ such that
$\varepsilon\le \mathbb P(A=a\mid X=x)$
$\mathbb P_X$-almost everywhere and for all $a \in \{0,1\}$. Thus, the Radon–Nikodým derivative admits the
bounds
\[
\mathbb P(A=a)
\;\le\;
r_a(x)
\;\le\;
\frac{\mathbb P(A=a)}{\varepsilon}
\qquad\text{for }\mathbb P_{X\mid a}\text{-a.e.\ }x .
\]
Hence, for every nonnegative $h$,
\[
\mathbb P(A=a)\int h\,d\mathbb P_{X\mid a}
\;\le\;
\int h\,d\mathbb P_X
\;\le\;
\frac{\mathbb P(A=a)}{\varepsilon}\int h\,d\mathbb P_{X\mid a}.
\]
Applying this to $h(x)=\mathbb E[\varphi(Y)\mid A=a,X=x]$ gives
\[
\mathbb P(A=a)\,\mathbb E[\varphi(Y)\mid A=a]
\;\le\;
\mathbb E[\varphi(Y(a))]
\;\le\;
\frac{\mathbb P(A=a)}{\varepsilon}\,\mathbb E[\varphi(Y)\mid A=a].
\]

In particular,
\[
\mathbb E[\varphi(Y(a))]<\infty
\quad\Longleftrightarrow\quad
\mathbb E[\varphi(Y)\mid A=a]<\infty.
\]
\end{proof}
\begin{remark}
If only positivity is assumed,
the lower bound on $r_a$ remains valid
and the above inequalities yield the one-sided implications
\[
\mathbb E[\varphi(Y)\mid A=a]=\infty\Rightarrow
\mathbb E[\varphi(Y(a))]=\infty,
\qquad
\mathbb E[\varphi(Y(a))]<\infty\Rightarrow
\mathbb E[\varphi(Y)\mid A=a]<\infty .
\]
Thus the two distributions share the same tail class under overlap,
and at least one-sided integrability under mere positivity.
\end{remark}

\begin{proof}[\textbf{Proof of \cref{thm:Wp_mixture_stability}}]
Recall the mixture representations from the main text,
\begin{align*}
\bb P_{Y(a)}(B)
&= \int_{\mc X} K_a(x,B)\, \bb P_X(dx),\\
\bb P_{Y\mid a}(B)
&= \int_{\mc X} K_a(x,B)\, \bb P_{X\mid a}(dx),
\end{align*}
where \(K_a(x,\cdot):=\bb P(Y\in\cdot\mid X=x,A=a)\).
Write \(\mu:=\bb P_X\) and \(\nu:=\bb P_{X\mid a}\), so that
\(\bb P_{Y(a)}=\mu K_a\) and \(\bb P_{Y\mid a}=\nu K_a\).

We first bound the Wasserstein-\(2\) distance between these two distributions.
For each \((x,x')\in\mc X\times\mc X\), let \(\pi_{x,x'}\) be an optimal
coupling between \(K_a(x,\cdot)\) and \(K_a(x',\cdot)\) for the quadratic cost
\(\|y-y'\|_2^2\), i.e.
\[
\pi_{x,x'}\in\Gamma\big(K_a(x,\cdot),K_a(x',\cdot)\big),
\qquad
\int_{\mc Y\times\mc Y} \|y-y'\|_2^2\,\pi_{x,x'}(dy,dy')
=
W_2^2\big(K_a(x,\cdot),K_a(x',\cdot)\big).
\]
Such optimal couplings exist under the finite second-moment conditions implicit
in the \(W_2\)-Lipschitz assumption. Since the spaces are standard Borel and
\(K_a\) is a probability kernel, we may choose
\((x,x')\mapsto\pi_{x,x'}\) to be a measurable optimal-coupling kernel by
standard measurable-selection arguments
\citep[e.g.][]{aliprantis2006infinite}.

Now fix any coupling \(\gamma\in\Gamma(\mu,\nu)\) on \(\mc X\times\mc X\).
Define a measure \(\Pi\) on \(\bb R^p\times\bb R^p\) by mixing
\(\pi_{x,x'}\) along \(\gamma\):
\[
\Pi(C)
:=
\int_{\mc X\times\mc X}\pi_{x,x'}(C)\,\gamma(dx,dx'),
\qquad
C\in\mathcal B(\bb R^p\times\bb R^p).
\]
Since \(\pi_{x,x'}\) has marginals \(K_a(x,\cdot)\) and \(K_a(x',\cdot)\),
and \(\gamma\) has marginals \(\mu\) and \(\nu\), \(\Pi\) has marginals
\(\mu K_a\) and \(\nu K_a\). Hence
\(\Pi\in\Gamma(\mu K_a,\nu K_a)\).

By Tonelli's theorem,
\begin{align*}
\int_{\mc Y\times\mc Y} \|y-y'\|_2^2\,\Pi(dy,dy')
&=
\int_{\mc X\times\mc X}
\left(
\int_{\mc Y\times\mc Y} \|y-y'\|_2^2\,
\pi_{x,x'}(dy,dy')
\right)\gamma(dx,dx')\\
&=
\int_{\mc X\times\mc X}
W_2^2\big(K_a(x,\cdot),K_a(x',\cdot)\big)\,\gamma(dx,dx').
\end{align*}
By the assumed \(W_2\)-Lipschitz regularity of the conditional outcome
distributions,
\[
W_2\big(K_a(x,\cdot),K_a(x',\cdot)\big)
\le
L_a\,\|x-x'\|_2,
\]
and hence
\[
\int_{\mc Y\times\mc Y} \|y-y'\|_2^2\,\Pi(dy,dy')
\le
L_a^2
\int_{\mc X\times\mc X}\|x-x'\|_2^2\,\gamma(dx,dx').
\]
Taking the infimum over all \(\gamma\in\Gamma(\mu,\nu)\) yields
\[
W_2^2\big(\bb P_{Y(a)},\bb P_{Y\mid a}\big)
\le
L_a^2\,W_2^2\big(\bb P_X,\bb P_{X\mid a}\big).
\]

Since \(\bb P_{Y\mid a}\) is absolutely continuous on \(\bb R^p\), Brenier's
theorem guarantees the existence of a \(\bb P_{Y\mid a}\)-a.e. unique
quadratic-cost optimal transport map
\(f_a:\bb R^p\to\bb R^p\) transporting \(\bb P_{Y\mid a}\) to
\(\bb P_{Y(a)}\). Moreover, since \(f_a\) is optimal,
\[
\|f_a-\mathrm{id}\|_{L^2(\bb P_{Y\mid a})}^2
=
W_2^2\big(\bb P_{Y\mid a},\bb P_{Y(a)}\big).
\]
Combining this identity with the previous bound and taking square roots gives
\[
\|f_a-\mathrm{id}\|_{L^2(\bb P_{Y\mid a})}
\le
L_a\,W_2\big(\bb P_X,\bb P_{X\mid a}\big),
\]
which concludes the proof.
\end{proof}

\begin{proof}[\textbf{Proof of \cref{prop:invariant_confounding}}]
For convenience, define \(
    K_a(x,\cdot) := \bb P_{Y\mid X=x,A=a}(\cdot).
\) for each $a \in \{0,1\}$. Recall from the main text that the counterfactual and observational
conditional laws admit the mixture representations
\[
    \bb P_{Y(a)}(\cdot)
    =
    \int K_a(x,\cdot)\,d\bb P_X(x),
    \qquad
    \bb P_{Y\mid A=a}(\cdot)
    =
    \int K_a(x,\cdot)\,d\bb P_{X\mid A=a}(x).
\]
Since \(p_a:=\bb P(A=a)>0\), we have
\(\bb P_{X\mid A=a}\ll \bb P_X\). Hence the assumed
confounder-invariance condition
\[
    \phi_{\#}K_a(x,\cdot)=Q_a
    \qquad \bb P_X\text{-a.e. }x
\]
also holds \(\bb P_{X\mid A=a}\)-a.e.

Let \(B\in \mathcal B(\mathcal Z)\) (the Borel $\sigma$-algebra of $\mc Z$). The first part of the result holds by simple evaluation of the pushforwards. In particular:
\[
\begin{aligned}
\phi_{\#}\bb P_{Y(a)}(B)
&=
\bb P_{Y(a)}(\phi^{-1}(B))  \\
&=
\int K_a(x,\phi^{-1}(B))\,d\bb P_X(x) \\
&=
\int Q_a(B)\,d\bb P_X(x)
=
Q_a(B),
\end{aligned}
\]
and
\[
\begin{aligned}
\phi_{\#}\bb P_{Y\mid A=a}(B)
&=
\bb P_{Y\mid A=a}(\phi^{-1}(B)) \\
&=
\int K_a(x,\phi^{-1}(B))\,d\bb P_{X\mid A=a}(x) \\
&=
\int Q_a(B)\,d\bb P_{X\mid A=a}(x)
=
Q_a(B),
\end{aligned}
\]
which immediately implies that
\(
    \phi_{\#}\bb P_{Y(a)}
    =
    \phi_{\#}\bb P_{Y\mid A=a}
    =
    Q_a .
\)

It remains to prove the feature-preserving transport claim. For notational convenience in this part, define
\(
    \mu_0 := \bb P_{Y\mid A=a}\) and
    \(\mu_1 := \bb P_{Y(a)} .
\)
We have just shown that
\(\phi_{\#}\mu_0=\phi_{\#}\mu_1=Q_a .
\)
Since all spaces are standard Borel, we can disintegrate these distributions along the value of the feature map \(\phi\) as
\[
    \mu_0(dy)
    =
    \int \mu_{0,z}(dy)\,Q_a(dz),
    \qquad
    \mu_1(dy)
    =
    \int \mu_{1,z}(dy)\,Q_a(dz),
\]
where
\(
    \mu_{0,z}
    =
    \bb P_{Y\mid A=a,\phi(Y)=z}\) and \(
    \mu_{1,z}
    =
    \bb P_{Y(a)\mid \phi(Y(a))=z}.
\)
By the defining property of conditional laws given \(\phi(Y)\), these
versions may be chosen to be supported on the corresponding fibers:
\[
    \mu_{0,z}(\phi^{-1}(z))=1,
    \qquad
    \mu_{1,z}(\phi^{-1}(z))=1,
    \qquad Q_a\text{-a.e. }z.
\]
In other words, after conditioning on the event \(\phi(Y)=z\), the remaining
variation in \(Y\) lies entirely within the fiber
\(\phi^{-1}(z)=\{y:\phi(y)=z\}\).

Now, recall that by assumption, there exists a measurable family of maps
\(
    \{T_z:\mathcal Y\to\mathcal Y\mid z\in\mathcal Z\},
\)
such that, for \(Q_a\)-a.e. \(z\),
\(
    (T_z)_{\#}\mu_{0,z}=\mu_{1,z}.
\)
Since \(\mu_{1,z}\) is supported on \(\phi^{-1}(z)\), this also implies
\[
    \phi(T_z(y))=z
    \qquad \mu_{0,z}\text{-a.e. }y, \qquad  \text{\(Q_a\)-a.e. \(z\)}.
\]
Now, define
\(
    f_a(y) := T_{\phi(y)}(y),
\)
with an arbitrary measurable definition on any null set where the above
conditional maps are not specified. By the assumed joint measurability of the
family \(z\mapsto T_z\), the map \(f_a\) is measurable.

To verify the pushforward property, let \(h:\mathcal Y\to\mathbb R\) be
bounded and measurable. Using disintegration,
\[
\begin{aligned}
\int h(f_a(y))\,\mu_0(dy)
&=
\int
\int h(T_z(y))\,\mu_{0,z}(dy)\,Q_a(dz) \\
&=
\int
\int h(y')\,\mu_{1,z}(dy')\,Q_a(dz) \\
&=
\int h(y')\,\mu_1(dy').
\end{aligned}
\]
Hence \((f_a)_{\#}\mu_0=\mu_1\), which by definition is the same as
\(
    (f_a)_{\#}\bb P_{Y\mid A=a}
    =
    \bb P_{Y(a)}.
\)

Finally, since \(\mu_{0,z}\) is supported on \(\phi^{-1}(z)\) and
\(T_z(y)\in\phi^{-1}(z)\) for \(\mu_{0,z}\)-a.e. \(y\), we have
\[
\begin{aligned}
\mu_0\{y:\phi(f_a(y))\neq \phi(y)\}
&=
\int
\mu_{0,z}\{y:\phi(T_z(y))\neq z\}\,Q_a(dz) =0.
\end{aligned}
\]
Therefore
\(
    \phi(f_a(Y))=\phi(Y)
    ,\; \bb P_{Y\mid A=a}\text{-a.s.}
\), which proves the claim.
\end{proof}

\subsubsection{Assumptions for Theorems and Proofs in \cref{sec:est}}

\paragraph{\underline{Notation, Definitions and Assumptions}}

\begin{definition}[Standing Objects for EIF analysis]\label{def:eif_terms},
Fix a treatment level $a \in \{0,1\}$ and a parameter $\theta_a \in \Theta$.
Throughout the results in this section, we use the following notation.

    \begin{itemize}
\item $O := (X,A,Y) \sim \bb P$ denotes a generic observational draw.

\item  For $O \sim \bb P$, we denote the shorthand $\bb P(f) := \bb E[f(O)]$ for any measurable $f$. 

\item $\mc D_1 = (O_i)_{i=1}^n \sim \bb P$ and $\mc D_2 = (\tilde O_i)_{i=1}^n \sim \bb P$ are two independent datasets of i.i.d. observations.

\item $\bb P_{Y\mid a}$ denotes the true conditional outcome law, and
$\hat{\bb P}_{Y\mid A=a} := n_a^{-1}\sum_{i:A_i=a}\delta_{Y_i}$
its empirical analogue on $\mc D_1$.

\item $p_a := \bb P(A=a)$ and $\hat p_a$ denotes its estimator using $\mc D_1$

\item $\pi_a(x) := \bb P(A=a \mid X=x)$ and $\hat\pi_a$ denotes its estimator using $\mc D_2$. 
\item $r_{\theta_a} : \mc Y \times \mc Y \to \bb R$ denotes the score kernel
associated with the moment condition $m(\theta_a,\bb P)$
(cf.\ \cref{sec:est:debiased} of the main text).

\item $h_{a,\theta_a}(x,\tilde y)
:= \bb E\left[r_{\theta_a}(Y,\tilde y)\mid A=a,X=x\right]$,
and $\hat h_{a,\theta_a}$ its estimator using $\mc D_2$.

\item $\mu_{h_{\theta_a}}(x)
:= \bb E[h_{\theta_a}(x,Y)\mid A=a]$,
and $\hat\mu_{\hat h_{\theta_a}}(x)
:= \int \hat h_{a,\theta_a}(x,\tilde y)\,\hat{\bb P}_{Y\mid A=a}(d\tilde y)$.

\item $\chi_{\theta_a}(y)
:=
\bb E\left[h_{a,\theta_a}(X,y)\right]
=
\int h_{a,\theta_a}(x,y)\,\bb P_X(dx)$ and $
\hat \chi_{\theta_a}(y):=\hat {\bb P}_{X}\left[\hat h_{a,\theta_a}(\cdot ,y)\right]$ the corresponding plug-in estimator, where $\hat {\bb P}_{X}$ is the empirical distribution of $\{X_i\}$ on $\mc D_1$. 

\item $
m(\theta_a,\bb P)
:=
\bb E\left[\chi_{\theta_a}(Y)\mid A=a\right]
=
\int \chi_{\theta_a}(y)\,\bb P_{Y\mid a}(dy)$ is the moment condition and $
m(\theta_a,\hat{\bb P})
:=
\int \hat \chi_{\theta_a}(y)\,\hat{\bb P}_{Y\mid A=a}(dy)
=
\hat{\bb P}_{Y\mid A=a}\left[\hat \chi_{\theta_a}\right]$ is its plug-in estimator.

\item  
\(
\psi_{1,\theta_a}(O,y)
:=
\frac{\bm 1\{A=a\}}{\pi_a(X)}
\Big(r_{\theta_a}(Y,y)-h_{a,\theta_a}(X,y)\Big)
+
h_{a,\theta_a}(X,y)
-
\bb P_X[h_{a,\theta_a}(\cdot, y)]\) is the EIF of $\bb E[r_{\theta_a}(Y(a),y)]$ and $\hat \psi_{1,\theta_a}$ its plug-in estimator using $\hat \pi_a, \hat h_{a,\theta_a}$ and $\hat {\bb P}_{X}$.

\item  Define \(
\psi_{2,\theta_a}(O,y)
:=
\frac{\bm 1\{A=a\}}{p_a}\left(\chi_{\theta_a}(Y) - \chi_{\theta_a}(y)\right)
\) and $\psi_{2,\theta_a}$ its plug-in estimator using $\hat {\bb P}_{X}$ and $\hat p_a$. 
\item  Define $\psi_{\theta_a}:=\psi_{1,\theta_a}+\psi_{2,\theta_a}$ and  $\hat \psi_{\theta_a}:=\hat \psi_{1,\theta_a}+\hat \psi_{2,\theta_a}$.

\item $\varphi_{\theta_a}: o \mapsto  \bb P_{Y \mid a}[\psi_{\theta_a}(o,\cdot)]$ is the EIF
of $m(\theta_a,\bb P)$ under $\bb P$ and $\hat\varphi_{\theta_a}: o \mapsto  \hat {\bb P}_{Y \mid a}[\hat \psi_{\theta_a}(o,\cdot)]$ is its plug-in estimator.

\item 
\(
\hat m_{\mathrm{DR}}(\theta_a)
:=
m(\theta_a,\hat{\bb P})
+
\bb P_n\left[\hat\varphi_{\theta_a}\right],
\) is the one-step estimator, where $\bb P_n$ is the empirical distribution on $\mc D_1$.
\end{itemize}
\end{definition}

For our asymptotic estimation results, we work under the following conditions.

\begin{assumption}[EIF Regularity Conditions]\label{ass:eif_regularity}
\hfill
\begin{enumerate}
\item \textbf{Overlap and stable inversion.}
There exists $\epsilon>0$ such that $\pi_a(X)\ge \epsilon$ almost surely.
Moreover, the estimator $\hat\pi_a$ is clipped so that
$\hat\pi_a(X)\ge \epsilon/2$ almost surely.

\item \textbf{Nondegenerate treatment mass.}
$p_a>0$ and $\hat p_a:=\bb P_n[\bm 1\{A=a\}]\to_p p_a$.

\item \textbf{Fold independence (cross-fitting).}
$(\hat p_a, \hat {\bb P}_{Y \mid a})$ are estimated using $\mc D_1$ (evaluation fold). $(\hat\pi_a,\hat h_{a,\theta_a}, \hat {\bb P}_X)$ are estimated using $\mc D_2$ (nuisance fold).

\item \textbf{Square integrability.} The nuisances satisfy $h_{a,\theta_a}, \hat h_{a,\theta_a}\in L_2(\bb P_X \otimes\bb P_Y) \cap L_2(\bb P_{X,Y})$ and the score function satisfies $r_{\theta_a}(y,y') \in L_2(\bb P_Y \otimes \bb P_Y) \cap L_2(\bb P_{Y,Y})$ for all $\theta_a \in \Theta$.
\end{enumerate}

\end{assumption}
\begin{remark}
The above assumptions imply that $\psi_{\theta_a},\ \hat\psi_{\theta_a}\in L_2(\bb P \otimes \bb P_{Y\mid a}) \quad\text{w.p.1,}$ and $\varphi_{\theta_a},\ \hat\varphi_{\theta_a}\in L_2(\bb P) \quad\text{w.p.1}$.
\end{remark}

\subsubsection{EIF of Moment Condition}

\begin{theorem}[Efficient influence function for the deconfounding flow-matching moment; formal version of \cref{thm:eif_fm}]\label{thm:eif_fm_app}
Fix $a\in\{0,1\}$ and $\theta_a\in\Theta$. Suppose Assumption~\ref{ass:eif_regularity}(1) and \ref{ass:eif_regularity}(4) holds.
Then the moment functional $m(\theta_a,\bb P)$ defined in \Cref{def:eif_terms} is regular and pathwise
differentiable at $\bb P$, with efficient influence function
\[
\varphi_{\theta_a}(O)
=
\bb P_{Y\mid a}\!\left[\psi_{\theta_a}(O,\cdot)\right]
=
\int \psi_{\theta_a}(O,y)\,\bb P_{Y\mid a}(dy),
\]
where $\psi_{\theta_a}=\psi_{1,\theta_a}+\psi_{2,\theta_a}$ and $\psi_{1,\theta_a},\psi_{2,\theta_a}$
are as in \Cref{def:eif_terms}. Equivalently,
\[
\varphi_{\theta_a}(O)
=
\int \psi_{1,\theta_a}(O,y)\,\bb P_{Y\mid a}(dy)
\;+\;
\int \psi_{2,\theta_a}(O,y)\,\bb P_{Y\mid a}(dy).
\]
\end{theorem}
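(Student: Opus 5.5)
The proof uses the nested-product structure of the functional together with Lemma~\ref{lem:nested}. Write
\[
m(\theta_a,\bb P)=\iint r_{\theta_a}(y,\tilde y)\,\bb P_{Y(a)}(dy)\,\bb P_{Y\mid a}(d\tilde y),
\]
which matches the definition $\bb E[\chi_{\theta_a}(Y)\mid A=a]$ in \Cref{def:eif_terms} after Fubini (justified by the square-integrability in Assumption~\ref{ass:eif_regularity}(4)). Both $\bb P_1:=\bb P_{Y(a)}$, identified via \eqref{eq:int_dist}, and $\bb P_2:=\bb P_{Y\mid a}$ are functionals of the single observational law $\bb P$. Along a regular parametric submodel $\bb P_\varepsilon$ with score $s$, the derivative of $T(\bb P_{1,\varepsilon},\bb P_{2,\varepsilon})$ therefore splits, by the product rule in Lemma~\ref{lem:nested}, into two pieces.

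\textbf{Perturbation of $\bb P_1$.} Hold $\bb P_2$ fixed and set $\phi_1(y):=\int r_{\theta_a}(y,\tilde y)\,\bb P_{Y\mid a}(d\tilde y)=g_{\theta_a}(y)$. This piece is the pathwise derivative of $\bb P\mapsto \bb E[\phi_1(Y(a))]$. By Lemma~\ref{lem:aipw-cf}, applied with outcome $\phi_1(Y)$ and valid under overlap (Assumption~\ref{ass:eif_regularity}(1)), its EIF is
\[
\frac{\bm 1\{A=a\}}{\pi_a(X)}\bigl(g_{\theta_a}(Y)-\mu_{\theta_a}(X)\bigr)+\mu_{\theta_a}(X)-m_a(\theta_a).
\]
Since $h_{a,\theta_a}(x,\cdot)$ is linear in $r_{\theta_a}$, this equals $\int\psi_{1,\theta_a}(O,y)\,\bb P_{Y\mid a}(dy)$. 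The integral commutes with the AIPW expression, and $\int \bb P_X[h_{a,\theta_a}(\cdot,y)]\,\bb P_{Y\mid a}(dy)=m(\theta_a)$.

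\textbf{Perturbation of $\bb P_2$.} Hold $\bb P_1$ fixed and set $\phi_2(\tilde y):=\int r_{\theta_a}(y,\tilde y)\,\bb P_{Y(a)}(dy)=\chi_{\theta_a}(\tilde y)$. Here the order of the arguments of $r$ needs care so that it matches the definition of $\chi$. By Lemma~\ref{lem:stratum}, the EIF of $\bb P\mapsto\bb E[\chi_{\theta_a}(Y)\mid A=a]$ is
\[
\frac{\bm 1\{A=a\}}{p_a}\bigl(\chi_{\theta_a}(Y)-\bb E[\chi_{\theta_a}(Y)\mid A=a]\bigr).
\]
This is exactly $\int\psi_{2,\theta_a}(O,y)\,\bb P_{Y\mid a}(dy)$, because integrating $\chi_{\theta_a}(y)$ against $\bb P_{Y\mid a}$ gives the conditional mean.

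\textbf{Main obstacle.} The real work is justifying the product-rule step inside a single model. The two measures are not separate, independently perturbed arguments as in Lemma~\ref{lem:nested}; both are induced by $\bb P_\varepsilon$, so the lemma has to be applied with each EIF computed as a function of $O$ against the common score $s$. The same chain-rule argument as in its proof applies. The remaining technical point is differentiating under the integral, i.e. showing that
\[
\frac{d}{d\varepsilon}\int \phi_{1,\varepsilon}\,d\bb P_{1,0}=\int \partial_\varepsilon\phi_{1,\varepsilon}\,d\bb P_{1,0},
\]
and that the cross term is $o(\varepsilon)$. I would handle this through the standard $L_2$ bounded-score submodels $d\bb P_\varepsilon=(1+\varepsilon s)\,d\bb P$ with bounded $s$, together with the $L_2(\bb P_Y\otimes\bb P_Y)$ integrability of $r_{\theta_a}$ and Cauchy--Schwarz. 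Because the model is nonparametric, the resulting influence function is automatically the efficient one, and summing the two pieces gives the stated $\varphi_{\theta_a}$.
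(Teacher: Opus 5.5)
Your proposal is correct and follows the paper's proof essentially step for step. The paper also writes $m(\theta_a,\bb P)=T_\theta(\bb P_{Y(a)},\bb P_{Y\mid a})$, splits the pathwise derivative via Lemma~\ref{lem:nested}, and identifies the two pieces with the AIPW EIF of $\bb E[g_{\theta_a}(Y(a))]$ (Lemma~\ref{lem:aipw-cf}) and the stratum-mean EIF of $\bb E[\chi_{\theta_a}(Y)\mid A=a]$ (Lemma~\ref{lem:stratum}). Your added step, bounding the bilinear cross term and differentiating under the integral along bounded-score submodels $(1+\varepsilon s)\,d\bb P$, makes rigorous the ``functional chain rule'' that the paper invokes without justification even though both measures are induced by the same $\bb P_\varepsilon$.
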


\begin{proof}[Proof of \Cref{thm:eif_fm}]
Fix $a$ and suppress the subscript $a$ on $\theta$ for readability. Now, recall that under the definition of $\bb P_{Y(a)}$ in \eqref{eq:int_dist}
\[
m_a(\theta)=\nabla_{\theta}\mc L_a(\theta)
=\bb E\left[\bb E\left[g_\theta(Y)\mid A=a,X\right]\right] = \bb E\left[g_\theta\big(Y(a)\big)\right],
\qquad
g_\theta(y):=\int r_\theta(y,\tilde y)\, \bb P_{Y\mid a}(d\tilde y).
\]
To derive the EIF, it is helpful to write $m_a(\theta)$ as a functional of both
the counterfactual marginal $\bb P_{Y(a)}$ and the conditional observational law
$\bb P_{Y\mid a}$. Specifically, define the functional
\[
T_\theta(\bb P_1,\bb P_2)
:=\iint r_\theta(y,\tilde y)\,\bb P_1(dy)\,\bb P_2(d\tilde y),
\qquad (\bb P_1,\bb P_2)\in\mathcal P(\mc Y)\times\mathcal P(\mc Y),
\]
and note that for $P_1=\bb P_{Y(a)}$ and $P_2=\bb P_{Y\mid a}$, we have \(m_a(\theta)=T_\theta\big(\bb P_{Y(a)},\bb P_{Y\mid a}\big)\). Along any regular parametric submodel $\{\bb P_\varepsilon\}$ of the
observed-data law $\bb P$, define the induced measures
\[
\bb P_{1,\varepsilon}:=\bb P_{Y(a)}(\bb P_\varepsilon),
\qquad
\bb P_{2,\varepsilon}:=\bb P_{Y\mid a}(\bb P_\varepsilon),
\]
and write $m_a(\theta;\varepsilon):=T_\theta(\bb P_{1,\varepsilon},\bb P_{2,\varepsilon})$.
By the functional chain rule and Lemma~\ref{lem:nested},
\[
\left.\frac{d}{d\varepsilon}\right|_{\varepsilon=0} m_a(\theta;\varepsilon)
=
\left.\frac{d}{d\varepsilon}\right|_{\varepsilon=0}
T_\theta(\bb P_{1,\varepsilon},\bb P_{2,0})
+
\left.\frac{d}{d\varepsilon}\right|_{\varepsilon=0}
T_\theta(\bb P_{1,0},\bb P_{2,\varepsilon}),
\]
where $\bb P_{1,0}=\bb P_{Y(a)}$ and $\bb P_{2,0}=\bb P_{Y\mid a}$.

By a simple application of the chain-rule for EIFs (see Lemma~\ref{lem:nested}),
the EIF of $m_a(\theta)$ is given by the sum of the EIF of
$T(\cdot, \bb P_{Y\mid a})$ for fixed $\bb P_{Y\mid a}$ and the EIF of
$T(\bb P_{Y(a)}, \cdot)$ for fixed $\bb P_{Y(a)}$. Thus, the EIF of $m_a(\theta)$ is given by the sum of the EIF of $T(\cdot, \bb P_{Y|a})$ for fixed $\bb P_{Y|a}$ and the EIF of $T(\bb P_{Y(a)}, \cdot)$ for a fixed $\bb P_{Y(a)}$. To recover these EIFs, we introduce the linear functionals
\begin{align*}
\Lambda_1(\bb P_{Y(a)})
& =
\int g_\theta(y)\,\bb P_{Y(a)}(dy)
=
\bb E[g_\theta(Y(a))] \\
\Lambda_2(\bb P_{Y | A=a})
& =
\int \chi_\theta(\tilde y)\,\bb P_{Y\mid a}(d\tilde y)
=
\bb E[\chi_\theta(Y)\mid A=a].
\end{align*}
where $\chi_\theta(\tilde y):=\int r_\theta(y,\tilde y)\,\bb P_{Y(a)}(dy)$. Note that, for any probability measures $\bb P_1,\bb P_2\in\mc P(\mc Y)$,
\[
T_\theta(\bb P_1,\bb P_{Y\mid a})=\Lambda_1(\bb P_1),
\qquad
T_\theta(\bb P_{Y(a)},\bb P_2)=\Lambda_2(\bb P_2),
\]

The first functional $\Lambda_1(\bb P_{Y(a)})=\bb E[g_\theta(Y(a))]$ is a
counterfactual mean of the vector-valued outcome $g_\theta(Y)$.
Its efficient influence function is therefore
given by the standard AIPW expression for a counterfactual mean
(e.g., \citet{kennedy2024semiparametric}),
\[
\varphi_{\Lambda_1(\bb P_{Y(a)})}(O)
=
\frac{\bm 1\{A=a\}}{\pi_a(X)}\bigl(g_\theta(Y)-\mu_\theta(X)\bigr)
+\mu_\theta(X)-\bb E[g_\theta(Y(a))].
\]

The second functional $\Lambda_2(P_{Y\mid a})=\bb E[\chi_\theta(Y)\mid A=a]$
is a stratum mean of the scalar outcome $\chi_\theta(Y)$. Its efficient
influence function is therefore
(e.g., \citet{kennedy2024semiparametric})
\[
\varphi_{\Lambda_2(P_{Y\mid a})}(O)
=
\frac{\bm 1\{A=a\}}{p_a}
\Bigl(\chi_\theta(Y)-\bb E[\chi_\theta(Y)\mid A=a]\Bigr),
\qquad p_a:=\bb P(A=a).
\]
As mentioned above, by the chain rule (Lemma~\ref{lem:nested}), the efficient influence function of
$m_a(\theta)=T_\theta(\bb P_{Y(a)},\bb P_{Y\mid a})$ is the sum of these two
components. Using that
$\bb E[g_\theta(Y(a))]=m_a(\theta)$ and
$\bb E[\mu_\theta(X)]=\bb E[g_\theta(Y(a))]$,
we obtain
\[
\varphi_{\theta,a}(O)
=
\frac{\bm 1\{A=a\}}{p_a}
\Bigl(\chi_\theta(Y)-\bb E[\chi_\theta(Y)\mid A=a]\Bigr)
+
\frac{\bm 1\{A=a\}}{\pi_a(X)}\bigl(g_\theta(Y)-\mu_\theta(X)\bigr)
+\mu_\theta(X)-\bb E[\mu_\theta(X)],
\]
which is the expression stated in \Cref{thm:eif_fm}.

\end{proof}

\subsubsection{Asymptotic Analysis of Debiased (One-Step) Estimator in \cref{sec:est:debiased}}

\begin{theorem}[Asymptotic expansion and efficiency of the one-step estimator]
\label{thm:efficiency}
Under Assumption~\ref{ass:eif_regularity},  the following expansion holds:
\begin{equation}
\label{eq:asymp_exp_hatphi}
\hat m_{\mathrm{DR}}(\theta_a)-m(\theta_a,\bb P)
=
\underbrace{(\bb P_n-\bb P)\big[\varphi_{\theta_a}(\cdot;\bb P)\big]}_{=:T_1}
+
\underbrace{(\bb P_n-\bb P)\big[\hat\varphi_{\theta_a,1}-\varphi_{\theta_a,1}(\cdot;\bb P)\big]}_{=:T_2}
+
R_n(\theta_a),
\end{equation}
where the von Mises remainder satisfies
\begin{equation}
\label{eq:remainder_bound_wrapper}
R_n(\theta_a)
=
O_p\left(
\|\hat\pi_a-\pi_a\|_{L_2(\bb P_X)}\;
\|\hat h_{a,\theta_a}-h_{a,\theta_a}\|_{L_2(\bb P_X\otimes\bb P_{Y\mid a})}
\right).
\end{equation}
and the empirical process drift term $T_2$ satisfies
\begin{equation}
\label{eq:T2_rate}
T_2
=
O_p\left(
\frac{
\|\hat h_{a,\theta_a}-h_{a,\theta_a}\|_{L_2(\bb P_X\otimes \bb P_{Y\mid a})}
+
\|\hat\pi_a-\pi_a\|_{L_2(\bb P_X)}
}{\sqrt n}
\right),
\end{equation}
\end{theorem}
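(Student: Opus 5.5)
The plan is the usual von Mises argument, organised so that the terms coming from the empirical base $\hat{\bb P}_{Y\mid A=a}$ and from $\hat p_a$ either cancel exactly or reduce to V-statistics handled by Lemmas~\ref{lem:Vstat_indep_kernel_rig_vec} and \ref{lem:emp_base_plugin_ratio}. I work conditionally on the nuisance fold $\mc D_2$ throughout. This makes $\hat\pi_a$, $\hat h_{a,\theta_a}$ and $\hat{\bb P}_X$ fixed, while $\mc D_1$ stays i.i.d.

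\textbf{Step 1: the $\psi_2$ part cancels empirically.} I split $\hat\varphi_{\theta_a}=\hat\varphi_{\theta_a,1}+\hat\varphi_{\theta_a,2}$. Both $\hat p_a$ and $\hat{\bb P}_{Y\mid a}$ are built from $\mc D_1$. Hence
\[
\bb P_n[\hat\varphi_{\theta_a,2}]=\hat p_a^{-1}\bb P_n\bigl[\bm 1\{A=a\}\hat\chi_{\theta_a}(Y)\bigr]-\hat{\bb P}_{Y\mid a}[\hat\chi_{\theta_a}]=0
\]
exactly. This is the cancellation advertised in the main text.

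\textbf{Step 2: the decomposition.} I then write
\[
\hat m_{\mathrm{DR}}-m=(\bb P_n-\bb P)\varphi_{\theta_a}+(\bb P_n-\bb P)(\hat\varphi_{\theta_a,1}-\varphi_{\theta_a,1})+R_n .
\]
This defines $R_n$ as
\[
R_n:=m(\theta_a,\hat{\bb P})-m(\theta_a,\bb P)+\bb P\hat\varphi_{\theta_a,1}-(\bb P_n-\bb P)\varphi_{\theta_a,2},
\]
where I use $\bb P\varphi_{\theta_a,1}=0$. So $T_1$ is the mean-zero EIF average from Theorem~\ref{thm:eif_fm_app}, and $T_2$ is exactly the drift term in the statement.

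\textbf{Step 3: the drift term $T_2$.} I apply Lemma~\ref{lem:emp_base_plugin_ratio} with $\psi=\psi_{1,\theta_a}$ and $\hat\psi=\hat\psi_{1,\theta_a}$. This gives $T_2=O_p(\|\Delta\|_{L_2(\bb P\otimes\bb P_{Y\mid a})}/\sqrt n)$, with $\Delta=\hat\psi_{1,\theta_a}-\psi_{1,\theta_a}$. To bound $\|\Delta\|$, I expand it as
\[
\Delta=\bm 1\{A=a\}\Bigl(\tfrac{1}{\hat\pi_a}-\tfrac{1}{\pi_a}\Bigr)(r-h)+\Bigl(1-\tfrac{\bm 1\{A=a\}}{\hat\pi_a}\Bigr)(\hat h-h)-(\hat{\bb P}_X\hat h-\bb P_X h).
\]
The clipping $\hat\pi_a\ge\epsilon/2$, overlap, and square integrability (Assumption~\ref{ass:eif_regularity}(4)) then yield $\|\Delta\|\lesssim\|\hat\pi_a-\pi_a\|_{L_2(\bb P_X)}+\|\hat h-h\|_{L_2(\bb P_X\otimes\bb P_{Y\mid a})}$. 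The bound needs boundedness of $r-h$ in a suitable sense; if only $L_2$ is available, I would use a Hölder-type bound instead. The last term, $\hat{\bb P}_X\hat h-\bb P_X h$, contributes an additional $O_p(n^{-1/2})$, which is absorbed.

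\textbf{Step 4: the remainder $R_n$.} For fixed $y$, iterated expectations give
\[
\bb P\psi_{1}(\cdot,y)\text{ with hats}=\bb E\Bigl[\bigl(\tfrac{\pi_a}{\hat\pi_a}-1\bigr)(h-\hat h)(X,y)\Bigr]+(\bb P_X-\hat{\bb P}_X)\hat h(\cdot,y).
\]
After integrating over $\hat{\bb P}_{Y\mid a}$, I bound the first piece by Cauchy--Schwarz and $\hat\pi_a\ge\epsilon/2$. This gives $\lesssim\|\hat\pi_a-\pi_a\|\,\|\hat h-h\|$. Replacing $\hat{\bb P}_{Y\mid a}$ by $\bb P_{Y\mid a}$ in that product term costs only a V-statistic correction, controlled by Lemma~\ref{lem:Vstat_indep_kernel_rig_vec}.

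The remaining pieces should telescope. Write $m(\hat{\bb P})-m=\hat{\bb P}_{Y\mid a}[\hat\chi]-\bb P_{Y\mid a}[\chi]$ and add the $(\bb P_X-\hat{\bb P}_X)\hat h$ term. Since $\hat\chi=\hat{\bb P}_X\hat h$, the $\hat{\bb P}_X$ contributions cancel. What is left is $\hat{\bb P}_{Y\mid a}[\bb P_X\hat h]-\bb P_{Y\mid a}[\chi]$. I decompose this into three parts: $(\hat{\bb P}_{Y\mid a}-\bb P_{Y\mid a})[\chi]$, which should cancel against $(\bb P_n-\bb P)\varphi_{\theta_a,2}$ up to the ratio error in $\hat p_a$; a term $\bb P_{Y\mid a}[\bb P_X(\hat h-h)]$, which should combine with the integrated first piece; and a cross term $(\hat{\bb P}_{Y\mid a}-\bb P_{Y\mid a})[\bb P_X(\hat h-h)]$. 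The cross term is $O_p(\|\hat h-h\|/\sqrt n)$ by Lemma~\ref{lem:rand_func_L2}.

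\textbf{Main obstacle.} The hard part is this last bookkeeping. I must verify that every first-order term in $\hat h-h$ or in $\hat{\bb P}_{Y\mid a}-\bb P_{Y\mid a}$ either cancels or is a product or $n^{-1/2}\times$error term. The $\hat p_a$ ratio (a $1/\hat p_a$ factor, which is $O_p(1)$) and the diagonal V-statistic terms must be shown to be $O_p(n^{-1})$ via the degenerate case of Lemma~\ref{lem:Vstat_indep_kernel_rig_vec}. If exact cancellation of the $\bb P_{Y\mid a}[\bb P_X(\hat h-h)]$ term fails, I would fold it into the product bound by rewriting it with the $(\pi_a/\hat\pi_a-1)$ weighting, using $\bb E[\bm 1\{A=a\}/\pi_a\mid X]=1$.
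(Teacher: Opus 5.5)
Your overall architecture is the paper's: the exact empirical cancellation $\bb P_n[\hat\varphi_{\theta_a,2}]=0$, the same split into $T_1+T_2+R_n$ with the same definition of $R_n$, and Lemma~\ref{lem:emp_base_plugin_ratio} plus a three-way bound on $\Delta$ for $T_2$.

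Two small remarks on Step 3. First, the centering piece $\hat{\bb P}_X\hat h-\bb P_X h$ does not depend on $O$. After integrating in $y$ it is a constant, so $\bb P_n-\bb P$ annihilates it exactly; the paper simply omits it. Your description is also inaccurate: it is not $O_p(n^{-1/2})$ in size, since it contains $\bb P_X(\hat h-h)$, and an extra $n^{-1/2}$ inside $\|\Delta\|$ would not be absorbed into the stated rate. Second, your caution about the propensity piece is justified. Bounding $\bb E[(\hat\pi_a-\pi_a)^2(X)\,\bm 1\{A=a\}\int(\|r\|^2+\|h\|^2)\,d\bb P_{Y\mid a}]$ by $C\|\hat\pi_a-\pi_a\|^2_{L_2(\bb P_X)}$ needs that inner integral to be bounded in $x$, or stronger moments. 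Assumption~\ref{ass:eif_regularity}(4) alone does not give this, and the paper's Cauchy--Schwarz step glosses over it.

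The genuine problem is the iterated-expectation identity in Step 4, which is wrong. Since $\frac{\pi_a}{\hat\pi_a}(h-\hat h)+\hat h=(\frac{\pi_a}{\hat\pi_a}-1)(h-\hat h)+h$, the correct identity is
\[
\bb E\bigl[\hat\psi_{1,\theta_a}(O,y)\bigr]=\bb E\Bigl[\Bigl(\tfrac{\pi_a}{\hat\pi_a}-1\Bigr)(h-\hat h)(X,y)\Bigr]+\chi_{\theta_a}(y)-\hat\chi_{\theta_a}(y).
\]
So $\bb P_X h$ appears, not $\bb P_X\hat h$. Your version carries a spurious extra $\bb P_X(\hat h-h)(\cdot,y)$. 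This is exactly what manufactures $\bb P_{Y\mid a}[\bb P_X(\hat h-h)]$ and the cross term $(\hat{\bb P}_{Y\mid a}-\bb P_{Y\mid a})[\bb P_X(\hat h-h)]$, neither of which has the order claimed for $R_n$.

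Your fallback cannot rescue this. A term linear in $\hat h-h$ with no propensity-error factor is genuinely first order. Reweighting with $\bb E[\bm 1\{A=a\}/\pi_a\mid X]=1$ only produces $\bb E[(\pi_a/\hat\pi_a)(\hat h-h)]$ plus a product, which is still first order.

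With the correct identity nothing needs to telescope. The term $m(\theta_a,\hat{\bb P})=\hat{\bb P}_{Y\mid a}[\hat\chi_{\theta_a}]$ cancels the $-\hat\chi_{\theta_a}$, giving
\[
R_n=(\bb P_X\otimes\hat{\bb P}_{Y\mid a})[w\,\Delta_h]+(\hat{\bb P}_{Y\mid a}-\bb P_{Y\mid a})[\chi_{\theta_a}]-(\bb P_n-\bb P)\varphi_{\theta_a,2},
\qquad w=\tfrac{\hat\pi_a-\pi_a}{\hat\pi_a},\ \Delta_h=\hat h-h .
\]
This is exactly the paper's $B_1+B_2$.

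The first term is handled as you intended: Cauchy--Schwarz, then pass from $\hat{\bb P}_{Y\mid a}$ to $\bb P_{Y\mid a}$. Since $\tilde y\mapsto\bb P_X[w\,\Delta_h(\cdot,\tilde y)]$ is measurable with respect to the nuisance fold, Lemma~\ref{lem:rand_func_L2} suffices and no V-statistic is needed. The last two terms combine to $-\frac{\hat p_a-p_a}{p_a}(\hat{\bb P}_{Y\mid a}-\bb P_{Y\mid a})[\chi_{\theta_a}]=O_p(n^{-1})$.

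This $O_p(n^{-1})$ piece is not literally dominated by the product rate in the statement, and the paper's proof has the same feature. What is actually established is $R_n=O_p(\|\hat\pi_a-\pi_a\|\,\|\hat h-h\|+n^{-1})$, which is harmless for Corollary~\ref{corr:efficiency}.
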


\begin{proof}
Since all analysis is for a fixed $a, \theta_a$, in what follows we suppress the dependence of all quantities on these objects notationally. 

To start, note that by the definition of the one-step estimator $\hat m$ we can write 
\begin{align}
    m(\bb P) - \hat m & = m(\bb P) - m(\hat {\bb P}) - \bb P_n(\hat \varphi) \nonumber \\
     & = m(\bb P) - m(\hat {\bb P}) - \bb P_n(\hat \varphi_1) - \bb P_n(\hat \varphi_2)  \label{eq:onestep_error}
\end{align}
Now, first note that $\bb P_n(\hat \varphi_2)=0$. To see this, recall that
$\hat \varphi_2 : o \mapsto \hat{\bb P}_{Y\mid A=a}[\hat\psi_2(o,\cdot)]$, where
\(
\hat \psi_2(o',y)
=
\frac{\bm 1\{a'=a\}}{\hat p_a}\big(\hat \chi(y')-\hat \chi(y)\big)
\). Therefore,
\[
\bb P_n(\hat \varphi_2)
=
\bb P_n\left[
\hat{\bb P}_{Y\mid A=a}\left(
\frac{\bm 1\{A=a\}}{\hat p_a}\big(\hat \chi(Y)-\hat \chi(\cdot)\big)
\right)
\right].
\]
By Fubini's theorem and the definition of $\hat{\bb P}_{Y\mid A=a}$,
\[
\bb P_n(\hat \varphi_2)
=
\hat{\bb P}_{Y\mid A=a}\left[
\frac{1}{\hat p_a}\bb P_n\left[\bm 1\{A=a\}\hat \chi(Y)\right]
-
\frac{1}{\hat p_a}\bb P_n[\bm 1\{A=a\}]\hat \chi(y)
\right].
\]
But $\bb P_n[\bm 1\{A=a\}]=\hat p_a$ and
\(
\hat{\bb P}_{Y\mid A=a}[\hat \chi]
= \frac{1}{\hat p_a}\bb P_n[\bm 1\{A=a\}\hat \chi(Y)].
\)
Hence the two terms cancel exactly.

Now, adding and subtracting $\bb P(\hat \varphi_1)$ and $(\bb P_n - \bb P)(\varphi_1 + \varphi_2)$ to \eqref{eq:onestep_error} we get 

\[\hat m - m(\bb P) = \underbrace{(\bb P_n - \bb P)(\varphi_1 + \varphi_2)}_{=:T_1} + \underbrace{(\bb P_n - \bb P)(\hat \varphi_1 - \varphi_1)}_{=:T_2} +R_2\]
 where
 \[R_2 := m(\hat {\bb P})  + \bb P(\hat \varphi_1) - m(\hat {\bb P}) - (\bb P_n - \bb P)(\varphi_2)\]

We now analyze each term.

\textbf{\underline{Empirical Process Term $\bm T_2$}}

For $o'=(x',a',y')\in\mc X\times\mc A\times\mc Y$ and $y\in\mc Y$, define
\[
\Delta(o',y)
:=
\hat\psi_1(o',y)-\psi_1(o',y).
\]

All statements below are conditional on the training fold. In particular,
$\hat h$ and $\hat\pi$ (and hence $\hat\psi_1$ and $\Delta$) are deterministic,
while all expectations are taken with respect to the true law $\bb P$.

By the square–integrability assumptions in \cref{ass:eif_regularity},
we may apply the sample–splitting empirical–process bound established in
\cref{lem:emp_base_plugin_ratio} (Appendix~\ref{app:math:lemmas}), which yields
\begin{equation}
\label{eq:reduce_to_Delta}
(\bb P_n-\bb P)(\hat\varphi-\varphi)
=
O_p\left(
\frac{
\left(
\bb E\left[\int \|\Delta(O,y)\|^2\,\bb P_{Y\mid a}(dy)\right]
\right)^{1/2}
}{\sqrt n}
\right).
\end{equation}
It therefore suffices to bound the $L_2(\bb P\otimes\bb P_{Y\mid a})$ norm of $\Delta$
in terms of the nuisance estimation errors.

For $O=(X,A,Y)$ and $y\in\mc Y$, we have the exact decomposition
\begin{align*}
\Delta(O,y)
&=
\left[
\frac{\bm 1\{A=a\}}{\hat\pi(X)}\big(r(Y,y)-\hat h(X,y)\big)+\hat h(X,y)
\right] \nonumber \\
& \qquad -
\left[
\frac{\bm 1\{A=a\}}{\pi(X)}\big(r(Y,y)-h(X,y)\big)+h(X,y)
\right] \\
&=
\underbrace{\big(\hat h(X,y)-h(X,y)\big)}_{=:D_h(X,y)}
+
\underbrace{
\bm 1\{A=a\}
\left(\frac{1}{\hat\pi(X)}-\frac{1}{\pi(X)}\right)
\big(r(Y,y)-h(X,y)\big)
}_{=:D_\pi(O,y)} \nonumber \\
& \qquad -
\underbrace{
\bm 1\{A=a\}\frac{1}{\hat\pi(X)}\big(\hat h(X,y)-h(X,y)\big)
}_{=:D_{\pi h}(O,y)} .
\end{align*}

By Assumption~\ref{ass:eif_regularity}(i), $\hat\pi$ is clipped so that
$\hat\pi(X)\ge \epsilon/2$ almost surely. Consequently,
\[
\frac{1}{\hat\pi(X)}\le \frac{2}{\epsilon},
\qquad
\left\|\frac{1}{\hat\pi(X)}-\frac{1}{\pi(X)}\right\|
\le
\frac{2}{\epsilon^2}\,\|\hat\pi(X)-\pi(X)\|
\quad\text{a.s.}
\]

Using $\|u+v+w\|^2\le 3(\|u\|^2+\|v\|^2+\|w\|^2)$, we obtain
\begin{align}
\label{eq:Delta_split}
\bb E\left[\int \|\Delta(O,y)\|^2\,\bb P_{Y\mid a}(dy)\right]
\le
3\sum_{\star\in\{h,\pi,\pi h\}}
\bb E\left[\int \|D_\star(O,y)\|^2\,\bb P_{Y\mid a}(dy)\right].
\end{align}
We bound each term in turn.

First, by definition we have
\begin{equation}
\label{eq:Dh_term}
\bb E\left[\int \|D_h(X,y)\|^2\,\bb P_{Y\mid a}(dy)\right]
=
\|\hat h-h\|_{L_2(\bb P_X\otimes \bb P_{Y\mid a})}^2.
\end{equation}

Second, using the uniform bound on $1/\hat\pi$,
\begin{equation}
\label{eq:Dpih_term}
\bb E\left[\int \|D_{\pi h}(O,y)\|^2\,\bb P_{Y\mid a}(dy)\right]
\le
\frac{4}{\epsilon^2}\,
\|\hat h-h\|_{L_2(\bb P_X\otimes \bb P_{Y\mid a})}^2 .
\end{equation}

Finally, for the propensity term, note that
\[
\|r(Y,y)-h(X,y)\|^2 \le 2\|r(Y,y)\|^2+2\|h(X,y)\|^2,
\]
and therefore
\begin{align}
\label{eq:Dpi_term}
& \bb E\left[\int \|D_\pi(O,y)\|^2\,\bb P_{Y\mid a}(dy)\right]
\nonumber \\
& \le
\frac{8}{\epsilon^4}\,
\bb E\left[
\|\hat\pi(X)-\pi(X)\|^2
\bm 1\{A=a\}
\int\big(\|r(Y,y)\|^2+\|h(X,y)\|^2\big)\,
\bb P_{Y\mid a}(dy)
\right] \\
&\le
C_r\,\|\hat\pi-\pi\|_{L_2(\bb P_X)}^2,
\nonumber
\end{align}
for a finite constant $C_r$ depending only on $\epsilon$ and the second moments
of $r$ and $h$, where the last step uses Cauchy--Schwarz and
Assumption~\ref{ass:eif_regularity}(iv).

Substituting \eqref{eq:Dh_term}, \eqref{eq:Dpih_term}, and \eqref{eq:Dpi_term}
into \eqref{eq:Delta_split} yields
\[
\bb E\left[\int \|\Delta(O,y)\|^2\,\bb P_{Y\mid a}(dy)\right]
\le
C_1\,\|\hat h-h\|_{L_2(\bb P_X\otimes \bb P_{Y\mid a})}^2
+
C_2\,\|\hat\pi-\pi\|_{L_2(\bb P_X)}^2,
\]
for finite constants $C_1,C_2$. Taking square roots gives
\begin{equation}
\label{eq:Delta_final_bound}
\left(
\bb E\left[\int \|\Delta(O,y)\|^2\,\bb P_{Y\mid a}(dy)\right]
\right)^{1/2}
\le
C_1'\,\|\hat h-h\|_{L_2(\bb P_X\otimes \bb P_{Y\mid a})}
+
C_2'\,\|\hat\pi-\pi\|_{L_2(\bb P_X)} .
\end{equation}

Combining \eqref{eq:reduce_to_Delta} and \eqref{eq:Delta_final_bound} yields
\[
(\bb P_n-\bb P)(\hat\varphi-\varphi)
=
O_p\left(
\frac{
\|\hat h-h\|_{L_2(\bb P_X\otimes \bb P_{Y\mid a})}
+
\|\hat\pi-\pi\|_{L_2(\bb P_X)}
}{\sqrt n}
\right),
\]
which proves the stated bound for $T_2$.

\textbf{\underline{Part II: Remainder $\bm R_2$}}
We now control the remainder term,
\[
R_2
:=
m(\hat{\bb P})-m(\bb P)+\bb P(\hat\varphi_1) - (\bb P_n - \bb P)(\varphi_2),
\]
From the definition of $\hat \psi_1$ in \cref{def:eif_terms},  we can express this term as 
\[
\hat\varphi_1(O)
=
\frac{\bm 1\{A=a\}}{\hat\pi(X)}
\big(\hat g(Y) - \hat\mu_{\hat h}(X)
\big)
+
\hat\mu_{\hat h}(X)
-
m(\hat{\bb P}).
\]
where  \(\hat\mu_{\hat h}(x) := \hat{\bb P}_{Y\mid A=a}[\hat h(x,\cdot)]\) and \(\hat g(y) = \hat {\bb P}_{Y|a}(r(y,\cdot))\).

Taking the expectation of $\hat \varphi_1$ under the true law $\bb P$,  we get
\[
\bb P(\hat\varphi_1)
=
\bb P\left[
\frac{\pi}{\hat\pi}
\big(
\hat \mu_{ h}-\hat\mu_{\hat h}
\big)
\right]
+
\bb P[\hat\mu_{\hat h}]
-
m(\hat{\bb P}),
\]
where $\hat \mu_{h}(x):=\hat {\bb P}_{Y\mid A=a}[h(x,\cdot)]$. 

Substituting this into the definition of $R_2$ we get
\[
R_2
=
\bb P\left[
\frac{\pi}{\hat\pi}
(\hat \mu_{ h}-\hat\mu_{\hat h})
\right]
+
\bb P[\hat\mu_{\hat h} - \mu_{h}] - (\bb P_n - \bb P)(\varphi_2)
\]

Adding and subtracting $\bb P[\hat \mu_{ h}]$ isolates
(i) the usual doubly-robust remainder around the random target $\mu_{\hat h}$ and
(ii) a drift due to replacing $\bb P_{Y\mid a}$ by $\hat {\bb P}_{Y \mid a}$:
\[
R_2
=
\bb P\underbrace{\left[
\frac{\hat\pi-\pi}{\hat\pi}
(\hat\mu_{\hat h}-\hat \mu_{ h})
\right]}_{B_1}
\;+\;
\underbrace{\bb P(\hat \mu_{h}-\mu_h) - (\bb P_n - \bb P)(\varphi_2)}_{B_2}.
\]

\paragraph{ \underline{II(a) Product term $B_1$}}

Define
\[
w(x):=\frac{\hat\pi(x)-\pi(x)}{\hat\pi(x)},
\qquad
\Delta_h(x,y):=\hat h(x,y)-h(x,y).
\]
By definition of $\hat \mu_{h}$ and $\hat\mu_{\hat h}$,
\[
\hat\mu_{\hat h}(x)-\mu_{\hat h}(x)
=
\int \Delta_h(x,y)\,\hat{\bb P}_{Y\mid A=a}(dy).
\]
Substituting into the remainder yields the exact product measure form
\[
\bb P\left[
\frac{\hat\pi-\pi}{\hat\pi}
(\hat\mu_{\hat h}-\hat \mu_{ h})
\right]
=
(\bb P_X\otimes \hat{\bb P}_{Y\mid A=a})[w\,\Delta_h].
\]

Adding and subtracting $(\bb P_X\otimes \bb P_{Y\mid a})[w\Delta_h]$ gives
\begin{align}
(\bb P_X\otimes \hat{\bb P}_{Y\mid A=a})[w\Delta_h]
&=
\underbrace{(\bb P_X\otimes \bb P_{Y\mid a})[w\Delta_h]}_{C_1}
+
\underbrace{(\bb P_X\otimes (\hat{\bb P}_{Y\mid A=a}-\bb P_{Y\mid a}))[w\Delta_h]}_{C_2}.
\label{eq:split_pop_stoch}
\end{align}

\paragraph{\underline{II(a).1 Population term $\bm C_1$}}
By Cauchy--Schwarz, the first term can be bounded by
\[
\big|(\bb P_X\otimes \bb P_{Y\mid a})[w\Delta_h]\big|
\le
\|w\|_{L_2(\bb P_X)}\;
\|\Delta_h\|_{L_2(\bb P_X\otimes \bb P_{Y\mid a})}.
\]

To bound $\|w\|_{L_2(\bb P_X)}$, note that by overlap (A1) there exists
$\underline \pi>0$ such that $\hat\pi(x)\ge \epsilon/2$ almost surely. Conditioning on this measure-one event lets us write
\begin{align}
\|w\|_{L_2(\bb P_X)}
\le
\frac{2}{c}\,
\|\hat\pi-\pi\|_{L_2(\bb P_X)}. \label{eq:w_bound}
\end{align}

Combining the above bounds yields
\[
\big|(\bb P_X\otimes \bb P_{Y\mid a})[w\Delta_h]\big|
\le
\frac{2}{c}\,
\|\hat\pi-\pi\|_{L_2(\bb P_X)}\;
\|\hat h-h\|_{L_2(\bb P_X\otimes \bb P_{Y\mid a})}
\]

In particular,
\[
C_1
=
O_p\left(
\|\hat\pi-\pi\|_{L_2(\bb P_X)}\;
\|\hat h-h\|_{L_2(\bb P_X\otimes \bb P_{Y\mid a})}
\right)
\]

\paragraph{\underline{II(a).2 Stochastic base term $\bm C_2$}}
Define the (training-fold measurable) function
\[
F(\tilde y):=\bb E\big[w(X)\Delta_h(X,\tilde y)\big]=\bb P_X[w(\cdot)\Delta_h(\cdot,\tilde y)].
\]
Substituting this definition of $C_2$ and applying Fubini's Theorem, we get
\begin{align*}
C_2 = (\bb P_X\otimes (\hat{\bb P}_{Y\mid A=a}-\bb P_{Y\mid a}))[w\Delta_h]
& =
(\hat{\bb P}_{Y\mid A=a}-\bb P_{Y\mid a})F \\
& =
\frac{1}{\hat p_a}\,(\bb P_n-\bb P)\big[\bm 1_a F\big]
\\
& = \frac{1}{ p_a}\,(\bb P_n-\bb P)\big[\bm 1_a F\big]
+
\Big(\frac{1}{\hat p_a}-\frac{1}{p_a}\Big)\bb P\big[\bm 1_aF\big], 
\end{align*}
where the second last equality used the identities
\(
\hat{\bb P}_{Y\mid A=a}[F]
=
\frac{\bb P_n[\bm 1_aF]}{\hat p_a}\), \(
\bb P_{Y\mid a}[F]
=
\frac{\bb P[\bm 1_aF]}{p_a}.
\) and the last equality used $1/\hat p_a = 1/p_a + (1/\hat p_a - 1/p_a)$. 

To control the first term, note that, conditional on the training fold, $\bm 1\{A=a\}F(Y)$ is a fixed
square-integrable function.
Indeed, for each fixed $\tilde y$, Cauchy--Schwarz in $L_2(\bb P_X)$ yields
\[
\|F(\tilde y)\|
=
\left\|\int w(x)\Delta_h(x,\tilde y)\,\bb P_X(dx)\right\|
\le
\|w\|_{L_2(\bb P_X)}\;
\|\Delta_h(\cdot,\tilde y)\|_{L_2(\bb P_X)}.
\]
Squaring and integrating with respect to $\tilde y\sim\bb P_{Y\mid a}$ gives
\begin{align}
\|F\|^2_{L_2(\bb P_{Y\mid a})}
 = \int \|F(\tilde y)\|^2\,\bb P_{Y\mid a}(d\tilde y)
\le
\|w\|_{L_2(\bb P_X)}^2\;
\|\Delta_h\|_{L_2(\bb P_X\otimes \bb P_{Y\mid a})}^2,
\label{eq:F_bound}\end{align}

Since $\|F\|_{L_2} \geq \|1_a F\|_{L_2}$, we can apply \cref{lem:rand_func_L2}, to bound  $(\bb P_n-\bb P)\big[\bm 1_aF \big]$ asymptotically as
\[
(\bb P_n-\bb P)\big[\bm 1_aF \big]
=
O_p\left(\frac{\|\bm 1\{A=a\}F(Y)\|_{L_2(\bb P)}}{\sqrt n}\right)
=
O_p\left(\frac{\|F\|_{L_2(\bb P_{Y\mid a})}}{\sqrt n}\right).
\]

It remains to control the correction term involving $(1/\hat p_a-1/p_a)$. Since $\bb P[\bm 1_a F]
=
p_a\,\bb P_{Y\mid a}[F]$ and \(
(1/\hat p_a-1/p_a)=O_p(n^{-1/2}),
\) by Jensen's equality  we obtain
\[
\Big(\frac{1}{\hat p_a}-\frac{1}{p_a}\Big)\bb P\big[\bm 1_aF\big] \leq 
\Big(\frac{1}{\hat p_a}-\frac{1}{p_a}\Big)\bb P_{Y\mid a}[F]
\le
\Big(\frac{1}{\hat p_a}-\frac{1}{p_a}\Big)\|F\|_{L_2(\bb P_{Y\mid a})}
\]
By the Central Limit Theorem $\hat p_a-p_a=O_p(n^{-1/2})$  and
$1/\hat p_a=O_p(1)$, so  $(1/\hat p_a-1/p_a)=O_p(n^{-1/2})$ (the latter follows from $\frac 1x - \frac 1y = \frac{1}{xy}(x-y)$). As a consequence we obtain
\[
\Big(\frac{1}{\hat p_a}-\frac{1}{p_a}\Big)\bb P[\bm 1_a F]
=
O_p\left(
\frac{
\|F\|_{L_2(\bb P_{Y\mid a})}
}{\sqrt n}
\right).
\]
Summing the $O_p$ bounds for both terms and applying the inequalities in \eqref{eq:F_bound} and \eqref{eq:w_bound} gives
\[
C_2
=
O_p\left(
\frac{\|\hat\pi-\pi\|_{L_2(\bb P_X)}\;\|\hat h-h\|_{L_2(\bb P_X\otimes \bb P_{Y\mid a})}}{\sqrt n}
\right)
\]

Putting this together with $C_1$, we get 
\[
B_1
=
O_p\left(
\|\hat\pi-\pi\|_{L_2(\bb P_X)}\;
\|\hat h-h\|_{L_2(\bb P_X\otimes \bb P_{Y\mid a})}
\right)
\]

\paragraph{II(b): Drift term $\bm B_2$.}
Applying Fubini/Tonelli, we can write the first part of the drift term as
\begin{align}
\bb P(\hat\mu_h-\mu_h)
&=
\bb P_X\Big[\hat{\bb P}_{Y\mid A=a}\big[h(X,\cdot)\big]-\bb P_{Y\mid a}\big[h(X,\cdot)\big]\Big]\notag\\
&=
\hat{\bb P}_{Y\mid A=a}\big[\bb P_X[h(\cdot,\cdot)]\big]
-
\bb P_{Y\mid a}\big[\bb P_X[h(\cdot,\cdot)]\big]\notag\\
&=
\hat{\bb P}_{Y\mid A=a}[g]-\bb P_{Y\mid a}[g].
\label{eq:B2_as_conditional_plugin}
\end{align}
Now, by definition \cref{def:eif_terms},
\[
\varphi_2(O)
=
\frac{\bm 1\{A=a\}}{p_a}\Big(\chi(Y)-\bb P_{Y\mid a}[\chi]\Big),
\qquad \bb P[\varphi_2]=0 .
\]
Hence
\begin{align*}
(\bb P_n-\bb P)(\varphi_2)
&=
\bb P_n(\varphi_2)\notag\\
&=
\frac{1}{p_a}\Big(
\bb P_n[\bm 1\{A=a\}\chi(Y)]
-
\bb P_n[\bm 1\{A=a\}]\;\bb P_{Y\mid a}[\chi]
\Big)\notag\\
&=
\frac{\hat p_a}{p_a}\,\hat{\bb P}_{Y\mid A=a}[\chi]
-
\frac{\hat p_a}{p_a}\,\bb P_{Y\mid a}[\chi].
\end{align*}

Usign these definitions we get
\begin{align*}
B_2 = \bb P(\hat\mu_h-\mu_h)
-
(\bb P_n-\bb P)(\varphi_2)
&=
\Big(1-\frac{\hat p_a}{p_a}\Big)
\big(\hat{\bb P}_{Y\mid A=a}[\chi]-\bb P_{Y\mid a}[\chi]\big)\notag\\
&=
-\frac{\hat p_a-p_a}{p_a}\,
\big(\hat{\bb P}_{Y\mid A=a}[\chi]-\bb P_{Y\mid a}[\chi]\big).
\end{align*}

Now, by the CLT for the mean of $A$ and conditional mean of $Y$, we have $\hat p_a-p_a=O_p(n^{-1/2})$ and \(
\hat{\bb P}_{Y\mid A=a}[\chi]-\bb P_{Y\mid a}[\chi]
=
O_p(n^{-1/2})
\)
 (since $\chi\in L_2(\bb P_{Y\mid a})$).
Therefore,
\[
\bb P(\hat\mu_h-\mu_h)
-
(\bb P_n-\bb P)(\varphi_2)
=
O_p(n^{-1}).
\tag{B2.4}
\]

Putting this together with $B_1$ yields the stated result for $R_2$ and completes the proof.

\end{proof}

\begin{corollary}[Efficiency bound in the main text]
\label{corr:efficiency}
Suppose the conditions of \Cref{thm:efficiency} hold. In addition, assume
that for each $\theta_a\in\Theta$ the score kernel $r_{\theta_a}$ is uniformly
Lipschitz in its first argument, i.e.\ there exists $L<\infty$ such that for all
$\tilde y\in\mc Y$ and all probability measures $\bb P_1,\bb P_2\in\mc P(\mc Y)$,
\[
\Big|
\int r_{\theta_a}(y,\tilde y)\,\bb P_1(dy)
-
\int r_{\theta_a}(y,\tilde y)\,\bb P_2(dy)
\Big|
\;\le\;
L\,W_1(\bb P_1,\bb P_2).
\]
Then the one-step estimator satisfies
\[
\hat m_{\mathrm{DR}}(\theta_a) - m(\theta_a)
=
O_p(n^{-\frac12})
+
O_p\!\left(
n^{-\frac 12}\big(
\|\hat\pi_a-\pi_a\|_{L_2(\bb P_X)}
+
\Delta_1
\big)
+
\|\hat\pi_a-\pi_a\|_{L_2(\bb P_X)}\,\Delta_1
\right).
\]
 where \(
\Delta_1
:=
\Big\|
W_1\!\big(
\hat{\bb P}_{Y\mid a,X},
\bb P_{Y\mid a,X}
\big)
\Big\|_{L_2(\bb P_X)}.
\)
\end{corollary}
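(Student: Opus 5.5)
The plan is to start from the expansion in \Cref{thm:efficiency} and bound each of its three pieces in turn. The work then reduces to converting the nuisance error $\|\hat h_{a,\theta_a}-h_{a,\theta_a}\|_{L_2(\bb P_X\otimes\bb P_{Y\mid a})}$ into the Wasserstein quantity $\Delta_1$ using the Lipschitz hypothesis on $r_{\theta_a}$.

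\textbf{Leading term $T_1$.} This is $(\bb P_n-\bb P)\varphi_{\theta_a}$, where $\varphi_{\theta_a}$ is a fixed, mean-zero function. By the remark following Assumption~\ref{ass:eif_regularity}, it lies in $L_2(\bb P)$. Chebyshev's inequality, or \cref{lem:rand_func_L2} with a trivial $\mc T_n$, then gives $T_1=O_p(n^{-1/2})$.

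\textbf{Converting the $h$-error to $\Delta_1$.} The key observation is that both $h$ and $\hat h$ are integrals of $r_{\theta_a}(\cdot,\tilde y)$ against a conditional law:
\begin{align*}
h_{a,\theta_a}(x,\tilde y)&=\int r_{\theta_a}(y,\tilde y)\,\bb P_{Y\mid a,x}(dy),\\
\hat h_{a,\theta_a}(x,\tilde y)&=\int r_{\theta_a}(y,\tilde y)\,\hat{\bb P}_{Y\mid a,x}(dy).
\end{align*}
The second identity is how $\hat h$ is built in the practical estimator, where $\hat\mu_{\theta_a}$ integrates $r_{\theta_a}$ against $\hat{\bb P}_{Y\mid a,x}$. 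The assumed uniform Lipschitz bound therefore gives, pointwise in $(x,\tilde y)$,
\[
|\hat h_{a,\theta_a}(x,\tilde y)-h_{a,\theta_a}(x,\tilde y)|\le L\,W_1\big(\hat{\bb P}_{Y\mid a,x},\bb P_{Y\mid a,x}\big).
\]
The right-hand side does not depend on $\tilde y$. Squaring, integrating over $\tilde y\sim\bb P_{Y\mid a}$ (which contributes only the factor $1$), and then integrating over $x\sim\bb P_X$ yields
\[
\|\hat h-h\|_{L_2(\bb P_X\otimes\bb P_{Y\mid a})}\le L\,\Delta_1 .
\]
For vector-valued $r$ I would apply the bound coordinatewise, or read the hypothesis as a bound on the norm; either way only a dimension constant is lost.

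\textbf{Assembling the bounds.} Substituting into \eqref{eq:T2_rate} gives $T_2=O_p\big(n^{-1/2}(\|\hat\pi_a-\pi_a\|_{L_2(\bb P_X)}+\Delta_1)\big)$. Substituting into \eqref{eq:remainder_bound_wrapper} gives $R_n=O_p\big(\|\hat\pi_a-\pi_a\|_{L_2(\bb P_X)}\Delta_1\big)$. Adding the three terms produces the stated expansion. Efficiency then follows whenever the product $\|\hat\pi_a-\pi_a\|_{L_2(\bb P_X)}\,\Delta_1$ is $o_p(n^{-1/2})$, in which case the first-order behaviour is governed by $T_1$.

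\textbf{Main obstacle.} The delicate step is justifying that $\hat h$ really has the integral form above, so that the Lipschitz bound applies to it. If $\hat h$ is computed by Monte Carlo, an extra sampling error appears. I would handle this by treating $\hat{\bb P}_{Y\mid a,x}$ as the exact integrating measure, or by noting that the Monte Carlo error can be driven below every displayed rate. I would also check that the measurability and square-integrability conditions needed to invoke \Cref{thm:efficiency} hold; they do, since $W_1$ is jointly measurable in $x$ and, by Assumption~\ref{ass:eif_regularity}(4), $\hat h\in L_2$.
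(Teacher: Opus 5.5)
Your proposal is correct and matches the paper's proof: the pointwise bound $|\hat h_{a,\theta_a}(x,\tilde y)-h_{a,\theta_a}(x,\tilde y)|\le L\,W_1(\hat{\bb P}_{Y\mid a,X=x},\bb P_{Y\mid a,X=x})$ is integrated over $\bb P_X\otimes\bb P_{Y\mid a}$ to get $\|\hat h_{a,\theta_a}-h_{a,\theta_a}\|_{L_2(\bb P_X\otimes\bb P_{Y\mid a})}\le L\Delta_1$, which is then substituted into the $T_2$ and remainder bounds of \Cref{thm:efficiency}. Your explicit $T_1=O_p(n^{-1/2})$ step and your remark that $\hat h_{a,\theta_a}$ must be the integral of $r_{\theta_a}$ against $\hat{\bb P}_{Y\mid a,x}$ spell out points the paper leaves implicit.
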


\begin{proof}
By the assumed Lipschitz property of $r_{\theta_a}$, for all $x$ and
$\tilde y$,
\[
\big|
\hat h_{a,\theta_a}(x,\tilde y)-h_{a,\theta_a}(x,\tilde y)
\big|
\le
L\,
W_1\!\big(
\hat{\bb P}_{Y\mid a,X=x},
\bb P_{Y\mid a,X=x}
\big).
\]
Squaring both sides and integrating with respect to
$\bb P_X\otimes\bb P_{Y\mid a}$ yields
\[
\|\hat h_{a,\theta_a}-h_{a,\theta_a}\|_{L_2(\bb P_X\otimes\bb P_{Y\mid a})} \leq L
\Delta_1
\]
The result then follows by substituting this bound into the expansion of
\Cref{thm:efficiency}.
\end{proof}

\end{document}